\documentclass{article}

\usepackage[margin=1in]{geometry}
\PassOptionsToPackage{numbers, compress}{natbib}
\usepackage[utf8]{inputenc} 
\usepackage[T1]{fontenc}    
\usepackage{hyperref}       
\usepackage{url}            
\usepackage{booktabs}       
\usepackage{amsfonts}       
\usepackage{nicefrac}       
\usepackage{microtype}      
\usepackage{subcaption}
\usepackage[font=small]{caption}
\usepackage{amsthm}
\usepackage[english]{babel}
\usepackage[T1]{fontenc}
\usepackage{amsmath}
\usepackage{amssymb}
\usepackage{amsfonts}
\usepackage{multirow}
\usepackage{mathtools}
\usepackage{breqn}
\usepackage{bbm}
\usepackage{dsfont}
\usepackage{graphicx}
\usepackage{natbib}
\usepackage{cases}
\usepackage[colorinlistoftodos]{todonotes}

\renewcommand{\hat}{\widehat}
\def\shownotes{1}  \ifnum\shownotes=1
\newtheorem{assumption}{Assumption}[section]
\newtheorem{proposition}{Proposition}[section]
\newtheorem{theorem}{Theorem}[section]
\newtheorem{lemma}{Lemma}[section]
\newtheorem{definition}{Definition}[section]

\newtheorem{example}{Example}[section]

\newtheorem{claim}{Claim}[section]
\newtheorem*{remark*}{Remark}

\newtheorem*{observation*}{Observation}

\numberwithin{equation}{section}

\newcommand{\E}{\mathbb{E}}
\DeclareMathOperator*{\Exp}{\mathbb{E}}

\newcommand{\R}{\mathbb{R}}

\newcommand{\cD}{\mathcal{D}}

\newcommand{\cN}{\mathcal{N}}

\newcommand{\Gnorm}[1]{{\left\vert\kern-0.25ex\left\vert\kern-0.25ex\left\vert #1 
		\right\vert\kern-0.25ex\right\vert\kern-0.25ex\right\vert}}
\newcommand{\gnorm}[1]{{\vert\kern-0.25ex\vert\kern-0.25ex\vert #1 
		\vert\kern-0.25ex\vert\kern-0.25ex\vert}}

\newcommand{\ent}{\ell_{ent}}
\newcommand{\lexp}{\ell_{exp}}
\newcommand{\expent}{g}
\newcommand{\density}{p}
\newcommand{\dratio}{s}
\newcommand{\sigopt}{\gamma^\star_{\sigma}}
\newcommand{\concave}{\nu}

\newcommand{\erf}{\textup{erf}}
\newcommand{\smooth}{\rho}

\newcommand{\constl}{0.25}

\newcommand{\ws}{{w^{\textup{s}}}}

\newcommand{\Dt}{\cD_{\textup{tg}}}
\newcommand{\Dtone}{\cD_{\textup{tg},1}}

\DeclareMathOperator{\sign}{\textup{sgn}}
\DeclareMathOperator{\erfc}{\textup{erfc}}

\setlength{\parindent}{0em}
\setlength{\parskip}{1em}

\title{Self-training Avoids Using Spurious Features Under Domain Shift}

\author{%
  Yining Chen\thanks{Equal Contribution}$^{*}$, Colin Wei$^{*}$, Ananya Kumar, Tengyu Ma \\
  Department of Computer Science\\
  Stanford University\\
  \texttt{\{cynnjjs, colinwei, ananya1, tengyuma\}@stanford.edu}
}

\begin{document}

\maketitle

\begin{abstract}
\noindent In unsupervised domain adaptation, existing theory focuses on situations where the source and target domains are close. In practice, conditional entropy minimization and pseudo-labeling work even when the domain shifts are much larger than those analyzed by existing theory. We identify and analyze one particular setting where the domain shift can be large, but these algorithms provably work: certain spurious features correlate with the label in the source domain but are independent of the label in the target. Our analysis considers linear classification where the spurious features are Gaussian and the non-spurious features are a mixture of log-concave distributions. For this setting, we prove that entropy minimization on unlabeled target data will avoid using the spurious feature if initialized with a decently accurate source classifier, even though the objective is non-convex and contains multiple bad local minima using the spurious features. We verify our theory for spurious domain shift tasks on semi-synthetic Celeb-A and MNIST datasets. Our results suggest that practitioners collect and self-train on large, diverse datasets to reduce biases in classifiers even if labeling is impractical.
\end{abstract}

\section{Introduction}

Reliable machine learning systems need to generalize to test distributions that are different from the training distribution. However, the test performance of machine learning models often significantly degrades as the test domain drifts away from the training domain. Various approaches have been proposed to adapt the models to new domains~\citep{tzeng2014domain, ganin2015domain, tzeng2017domain} but theoretical understanding of these algorithms is limited. Prior theoretical works focus on settings where the target domain is sufficiently close to the source domain \citep{shai2008unlabeled, shimodaira2000improving, huang2006correcting, sugiyama2007covariate}. To theoretically study realistic scenarios where domain shifts can be much larger, we need to leverage additional structure of the shifts.

Towards this goal, we propose to study a particular structured domain shift for which unsupervised domain adaptation is provably feasible: in the source domain, a subset of ``spurious'' features correlate with the label, whereas in the unlabeled target data, these features are independent of the label. In real-world training data, these spurious correlations can occur due to biased sampling or artifacts in crowd-sourcing~\citep{gururangan-etal-2018-annotation}. For example, we may have a labeled dataset for recidivism prediction where race correlates with recurrence of crime due to sample selection bias, but this correlation does not hold on the population. Models which learn spurious correlations can generalize poorly on population data which does not have these biases~\citep{mccoy-etal-2019-right}. In these settings, it could be impractical to acquire labels for an unbiased sub-sample of the population, but unlabeled data is often available.

We prove that in certain settings, perhaps surprisingly, self-training on \textit{unlabeled} target data can avoid using these spurious features. Our theoretical results apply to two closely-related popular algorithms: self-training~\citep{lee2013pseudo} and conditional entropy minimization~\citep{grandvalet05entropy}. In practice, self-training has achieved competitive or state-of-the-art results in unsupervised domain adaptation~\citep{long2013transfer, zou2019confidence, shu2018dirtt}, but there are few theoretical analyses of self-training when there is domain shift.

Our theoretical setting and analysis are consistent with recent large-scale empirical results by~\citet{xie2020selftraining}, which suggest that self-training on a more diverse unlabeled dataset can improve the robustness of a model, potentially by avoiding using spurious correlations. These results and our theory help emphasize the value of a large and diverse unlabeled dataset, even if labeled data is scarce.

Formally, we assume that each input consists of two subsets of features, denoted by $x_1$ and $x_2$. $x_1$ is the ``signal'' feature that determines the label $y$ in the target distribution. $x_2$ is the spurious feature that correlates with the label $y$ in the source domain, but $x_2$ is independent of $(x_1,y)$ in the target domain. For a first-cut result, we consider binary classification and linear models on the features $(x_1,x_2)$, where the spurious feature $x_2$ is a multivariate Gaussian and $x_1$ is a mixture of log-concave distributions. We aim to show that, initialized with some classifier trained on the source data, self-training on the unlabeled target will remove usage of the spurious feature $x_2$.

A challenge in the analysis is that self-training on an unlabeled loss can possibly harm, rather than help, target accuracy by amplifying the mistakes of source classifier (see Section~\ref{sec:failure_cases}). The classical idea of co-training~\citep{blum98cotraining} deals with this by assuming the mistakes of the classifier are independent of $x$, reducing the problem to learning from noisy labels. However, in our setting the source classifier makes biased mistakes which depend on $x$, and self-training potentially reinforces these biases if there are no additional assumptions. For example, we require initialization with a decently accurate source classifier, and we empirically verify the necessity of this assumption in Section~\ref{sec:experiments}.

Our main contribution (Theorem \ref{thm:general_main}) is to prove that self-training and conditional min entropy using finite unlabeled data converge to a solution that has 0 coefficients on the spurious feature $x_2$, assuming the following: 1. the signal $x_1$ is a mixture of well-separated log-concave distributions and 2. the initial source classifier is decently accurate on target data and avoids relying too heavily on the spurious feature. In a simpler setting where $x_1$ is a univariate Gaussian, we show that self-training using a decently accurate source classifier converges to the Bayes optimal solution (Theorem~\ref{thm:mixture}).

We run simulations on semi-synthetic colored MNIST~\citep{MNIST} and celebA~\citep{liu2015faceattributes} datasets to verify the insights from our theory and show that they apply to multi-layer neural networks and datasets where the spurious features are not necessarily a subset of the input coordinates (Section~\ref{sec:experiments}). Our code is available online at \url{https://github.com/cynnjjs/spurious_feature_NeuRIPS}.

\subsection{Related Work}

\textbf{Self-training} methods have achieved state-of-the-art results for semi-supervised learning~\citep{xie2020selftraining, sohn2020fixmatch, lee2013pseudo}, adversarial robustness~\citep{long2013transfer, zou2019confidence, shu2018dirtt}, and unsupervised domain adaptation~\citep{long2013transfer, zou2019confidence, shu2018dirtt}, but there is little understanding of when and why these methods work under domain shifts.
Two popular forms of self-training are pseudolabeling~\citep{lee2013pseudo} and conditional entropy minimization~\citep{grandvalet05entropy}, which have been observed to be closely connected~\citep{amini2003semisupervised,lee2013pseudo,shu2018dirtt,berthelot2020remixmatch}. We show that our analysis applies to both entropy minimization and a version of pseudo-labeling where we initialize the student model with the teacher model and re-label after each gradient step (Proposition~\ref{prop:pseudo=entropy}).

\citet{kumar2020gradual} examine self-training for domain adaptation, but assume that $P(X | Y)$ is an isotropic Gaussian, that entropy minimization converges to the nearest local minima, and infinite unlabeled data.
They use a symmetry argument that requires these assumptions.
In our setting, the signal $x_1$ can be a mixture of many log-concave, log-smooth distributions, and we show that self-training does in fact converge with only finite unlabeled data, even though the loss landscape is non-convex. These require new, more general, proof techniques.

\noindent{\textbf{Domain adaptation and semi-supervised learning theory}}:
Importance weighting~\citep{shimodaira2000improving, huang2006correcting, sugiyama2007covariate} is a popular way to deal with \emph{covariate shift} but these methods assume that $P(Y \mid X)$ is the same for the source and target, which may not hold when there are spurious correlations in the source but not target.
Additionally, sample complexity bounds for importance weighting depend on the expected density ratios between the source and target, which can often scale exponentially in the dimension. Our finite sample guarantees only depend on properties of the target distribution (assuming a decently accurate source classifier) and are agnostic to this density ratio. The theory of $H \Delta H$-divergence lower bounds target accuracy of a classifier in terms of source accuracy if some distance between the domains is small~\citep{ben2010theory}; ~\citet{zhang2019bridging} extend this distance measure to multiclass classification.
In contrast, we show self-training can improve accuracy under our structured domain shift, even when the shift is potentially large.
Other theoretical papers on semi-supervised learning focus on analyzing when unlabeled data can help, but do not analyze domain shift~\citep{rigollet2007generalization, singh2008unlabeled, shai2008unlabeled,balcan2010discriminative}.

Co-training~\citep{blum98cotraining} is an algorithm that can leverage unlabeled data when the input features can be split into $(x_1, x_2)$ that are
conditionally independent given the label. Co-training assumes
this grouping is known a-priori, and that either group can be used to predict the label accurately. In our setting, the spurious feature cannot be used to predict the label accurately in the target domain, and the algorithm does not have access to the grouping between spurious and signal features.

\noindent{\textbf{Spurious and non-robust features.}}
Many works seek to identify causal features invariant across domains~\cite{IRM, peters2015causal, heinze2017conditional}.
~\citet{heinze2017conditional} use distributionally robust optimization to reduce reliance on spurious features but they assume that the same object can be observed under multiple conditions, for example the same person in a variety of poses and outfits.
~\citet{wang2019learning} use the gray-level co-occurrence matrix to project out certain superficial, domain-specific, statistics but this is tailored to specific types of spurious correlations in image datasets.
~\citet{kim2019learning} propose a regularization method to remove spurious correlations, but they require domain experts to label the spurious features.
Concurrent work to ours~\cite{sagawa2020investigation} assumes spurious correlations are labeled and demonstrates that over-parameterization can cause overfitting to spurious correlations which are present for most, but not all, of the data.
They analyze supervised training of linear classifiers on Gaussian data without domain shift, whereas we analyze self-training when there is domain shift.
Spurious features are also related to adversarial examples, which can possibly be attributed to non-robust features that can predict the label but are brittle under domain shift \cite{ilyas2019adversarial}.

A number of papers theoretically analyze the connection between adversarial robustness and accuracy or generalization for linear classifiers in simple Gaussian settings~\citep{tsipras2018robustness,schmidt2018adversarially,uesato2019are}.~\citet{carmon2019unlabeled} show that self-training on unlabeled data can improve adversarially robust generalization for linear models in a Gaussian setting. Though these research questions are orthogonal to ours, one technical contribution of our work is that our analysis extends to more general distributions than Gaussians. 

\noindent{\textbf{Fairness.}}
Spurious correlations in datasets can lead to unfair predictions when protected attributes are involved. Our work shows that self-training can potentially employ unlabeled population samples to overcome bias in labeled data \cite{Gong12overcomingdataset, Tommasi2017}.

\section{Setup}

\noindent{\bf Model.} We consider a linear model $\hat{y} = w^\top x$ where $w = (w_1,w_2)$ and $x = (x_1, x_2)$ with $w_1, x_1\in \R^{d_1}$ and $w_2, x_2\in \R^{d_2}$. We assume that the spurious features $x_2$ have Gaussian distribution with covariance $\Sigma_2 \succ 0$, so the target data $(x,y)\sim \Dt$ is generated by
\begin{align}
y ~ \stackrel{\mathclap{\normalfont\mbox{unif}}}{\sim}~\{\pm 1\}, \textup{and } &x_1 \sim \Dtone (\cdot | y) \nonumber\\
&x_2  \sim \cN(\Vec{0}, \Sigma_2), \Sigma_2\in \R^{d_2\times d_2} \label{eqn:xtwoass}
\end{align}
for some distribution $\Dtone$ over $\R^{d_1}$. Note that $x_2$ is a spurious feature because it is independent of the label $y$. Our results and analysis also transfer to a ``scrambled setup'' \cite{IRM} where we observe $z = \mathcal{S}x$ for some rotation matrix $\mathcal{S} \in \R^{(d_1 + d_2) \times (d_1 + d_2)}$. This follows as a direct consequence of the rotational invariance of the algorithm~\eqref{eqn:alg} and our assumptions.

\noindent{\textbf{Min-entropy objective.}}
The min-entropy objective on a target unlabeled example is defined as
$ \ent(w^\top x)$ where $\ent(t) = H((1+\exp(-t))^{-1})$ and $H$ is the binary entropy function.
For mathematical convenience, we consider an approximation $\lexp(t) = \exp(-|t|)$, which is commonly used in the literature for studying the logistic loss~\cite{soudry2018the}. $\lexp$ approximates $\ent(t)$ up to a constant factor and exhibits the same tail behavior (Figure \ref{fig:ent}). We experimentally validate in Section \ref{sec:justify_approx} that training using $\exp(-|t|)$ achieves the same effect for the algorithms we analyze. The population unlabeled objective on the target distribution that we consider is 
\begin{align}
L(w) \triangleq \Exp_{x\sim \Dt} \lexp(w^\top x)
\end{align}
where $\Dt$ denotes the distribution in the target domain. We mainly focus on analyzing the population loss for simplicity, but in our main results (Theorems~\ref{thm:general_main} and~\ref{thm:mixture}) we also give finite-sample guarantees. We analyze the following equivalent algorithms for self-training.

\noindent{\bf Entropy minimization.} We initialize $w$ from a source classifier $\ws$ and run projected gradient descent on the entropy objective:\footnote{We project to the unit ball for simplicity, as the loss $\lexp$ is not scale-invariant.} 
\begin{align}
w^0  = \ws ~\textup{and} ~
w^{t+1}  = \frac{w^t - \eta \nabla L(w^t)}{\|w^t - \eta \nabla L(w^t)\|_2} \label{eqn:alg}
\end{align}

\noindent{\bf Pseudo-labeling.} We consider a variant of pseudo-labeling where we label the target data using the classifier from the previous iteration and run projected gradient descent on the supervised loss
\begin{align}
L_{pseudo}^{t+1}(w) \triangleq \Exp_{x\sim \Dt} \ell_{exp}(w^\top x, y^t)
\label{eq:pseudo_alg}
\end{align} where $y^t = \sign{({w^t}^\top x)}$ and $\ell_{exp}(t, y)=\exp{(-ty)}$.
The algorithm is the same as \ref{eqn:alg} with $L(w)$ replaced by $L_{pseudo}^{t+1}(w)$. Note that this is different from some versions of pseudo-labeling, which train for many rounds of gradient descent before re-labeling. We observe that the two algorithms above are equivalent because the iterates are the same (see Section~\ref{sec:entropy_min_pseudo} for the formal proof). 

\begin{proposition}
\label{prop:pseudo=entropy}
The pseudo-labeling algorithm above converges to the same solution as the entropy minimization algorithm in~\eqref{eqn:alg}.
\end{proposition}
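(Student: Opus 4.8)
The plan is to show that the two algorithms produce \emph{identical} iterates $\{w^t\}$, from which convergence to the same solution follows immediately. Since both algorithms start from $w^0 = \ws$ and apply the same projected-gradient update map, it suffices to prove by induction on $t$ that the gradient used by the pseudo-labeling step at the current iterate $w^t$ coincides with the gradient of the entropy objective at $w^t$, i.e. $\nabla_w L_{pseudo}^{t+1}(w^t) = \nabla L(w^t)$.

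First I would compute both gradients explicitly. For the entropy objective $L(w) = \Exp_{x \sim \Dt} \exp(-|w^\top x|)$, differentiating under the expectation gives
\begin{align}
\nabla L(w) = -\Exp_{x \sim \Dt}\, \sign(w^\top x)\, \exp(-|w^\top x|)\, x.
\end{align}
For the pseudo-labeling objective $L_{pseudo}^{t+1}(w) = \Exp_{x \sim \Dt} \exp(-y^t\, w^\top x)$ with frozen labels $y^t = \sign({w^t}^\top x)$, differentiating with respect to $w$ yields
\begin{align}
\nabla_w L_{pseudo}^{t+1}(w) = -\Exp_{x \sim \Dt}\, y^t\, \exp(-y^t\, w^\top x)\, x.
\end{align}

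The key observation is to evaluate this second gradient at the very point $w = w^t$ where the pseudo-labels were generated. There, $y^t\, {w^t}^\top x = \sign({w^t}^\top x)\, {w^t}^\top x = |{w^t}^\top x|$ and $y^t = \sign({w^t}^\top x)$, so the two expressions above become term-by-term identical: $\nabla_w L_{pseudo}^{t+1}(w^t) = \nabla L(w^t)$. Combined with the shared initialization $w^0 = \ws$ and the identical projected-gradient update, an induction on $t$ shows that $w^t$ is the same for both algorithms at every step, and hence they converge to the same solution.

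The one technical point to handle carefully is that $t \mapsto \exp(-|t|)$ is non-differentiable at the origin, so the gradient computation above is only valid away from the hyperplane $\{x : w^\top x = 0\}$. Because $x_2$ has a nondegenerate Gaussian component ($\Sigma_2 \succ 0$) and is independent of $x_1$, this set has probability zero for any $w \neq 0$, so the interchange of gradient and expectation is justified and the boundary contributes nothing to either gradient. I expect this measure-zero argument to be the only real subtlety; the rest is a direct matching of the two gradient formulas under the substitution $y^t = \sign({w^t}^\top x)$.
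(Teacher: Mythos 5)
Your proposal is correct and matches the paper's own proof: both arguments show that $\nabla_w L_{pseudo}^{t+1}(w)\big|_{w=w^t} = \nabla L(w)\big|_{w=w^t}$ by substituting $y^t = \sign({w^t}^\top x)$ at the point where the pseudo-labels were generated, so the two algorithms produce identical iterates. Your additional remark on the measure-zero set $\{x : w^\top x = 0\}$ is a reasonable technical refinement the paper leaves implicit (though when $w_2 = 0$ you should appeal to the density of $w_1^\top x_1$ from Assumption~\ref{ass:seperation} rather than to $\Sigma_2 \succ 0$).
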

\section{Overview of Main Results}

\label{sec:main_results}

We would like to show that entropy minimization~\eqref{eqn:alg} drives the spurious feature $w_2$ to 0. However, this is somewhat surprising and challenging to prove because nothing in the loss or algorithm explicitly enforces a decrease in $\|w_2\|_2$. Indeed, without additional assumptions on the target distribution $\Dt$ and the initial source classifier $\ws$, we show that entropy minimization can actually cause $\|w_2\|_2$ to increase because self-training can reinforce existing biases in the source classifier. 

Examples~\ref{ex:mu_is_zero} and~\ref{ex:small_loss} highlight cases where entropy minimization can fail, which motivates our assumptions of separation (Assumption~\ref{ass:seperation}) and that the spurious $x_1$ is a mixture of \emph{sliced log concave} distributions. Under these assumptions, our main Theorem~\ref{thm:general_main} shows that entropy minimization~\eqref{eqn:alg} initialized with a decently accurate source classifier drives the coefficient of the spurious feature, $w_2$, to 0. For a simpler Gaussian setting, Theorem~\ref{thm:mixture} shows that entropy minimization with a sufficiently accurate source classifier converges to the Bayes optimal classifier.

\subsection{Failure cases of self-training}
\label{sec:failure_cases}
We highlight cases where self-training increases reliance on the spurious features, justifying our assumptions in Section~\ref{sec:assumptions}. 

\begin{example} [No contribution from signal, i.e. $w_1^\top x_1 = 0$.] \label{ex:mu_is_zero}

See Figure~\ref{fig:hard_cases} (Left). For simplicity, suppose that $d_1 = d_2 = 1$, and suppose that $w_1 = 0$, so the signal feature is not used. In this case, increasing $|w_2|$ drives every prediction further from 0, decreasing the expected loss $L(w)$. Thus, in this example the min-entropy loss actually encourages the weight on the spurious feature, $|w_2|$, to increase. Note that this is not trivially true when $w_1$ is nonzero.

\end{example}

\begin{figure}
    \centering
    \includegraphics[width=0.45\textwidth]{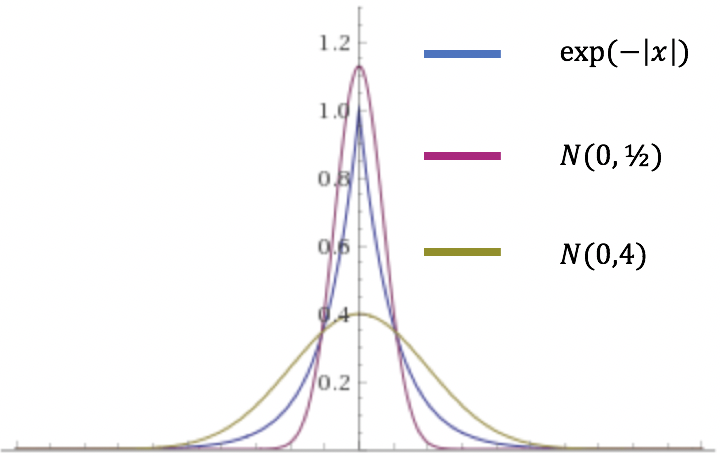}
    \includegraphics[width=0.45\textwidth]{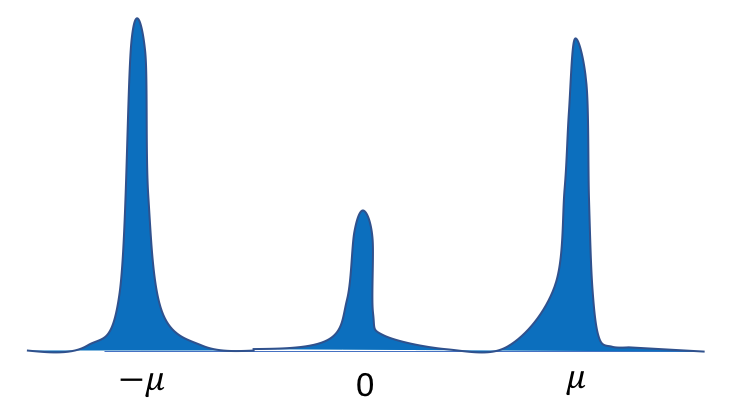} 
    \caption{Cases where entropy minimization fails to remove $w_2$. {\bf Left (Example~\ref{ex:mu_is_zero}):} When $w_1 = 0$, $w^\top x$ is distributed as a Gaussian. Increasing $w_2$, which increases this variance (e.g, going from the purple to green curve) decreases $L(w)$ by forcing every prediction further from 0. This means that entropy minimization causes reliance on the spurious feature to \textit{increase}. {\bf Right (Example~\ref{ex:small_loss}):} Distribution of $w_1^\top x_1$ in a hard case for general distributions. If there is a lot of mass of $w_1^\top x_1$ concentrated near the boundaries (i.e, $\pm \mu$ for some large $\mu \rightarrow \infty$) and a small amount of mass near 0, the loss could be small but the classifier will not want to shrink $\|w_2\|_2$.}
    \label{fig:hard_cases}
\end{figure}

In a realistic scenario, it's unlikely that $w_1^\top x_1 = 0$ for all examples because then the source accuracy on the target domain is very poor. So a priori, if we assume the source accuracy is decent (which implies $L(\ws)$ is small), we may avoid the pathological case above. However, this is not sufficient. 

\begin{example} [Initial $L(\ws)$ is small, but self-training still increases $\|w_2\|_2$.] \label{ex:small_loss}

See Figure~\ref{fig:hard_cases} (Right). Suppose that restricting to the signal feature, we have a mixture of perfectly and extremely confidently predicted examples, and a small amount of unconfident examples as in Example~\ref{ex:mu_is_zero}. The majority group of confident examples is already perfectly predicted with no incentive to remove $w_2$ (because the loss gradient is near 0), and the minority group encourages $\|w_2\|_2$ to increase as in Example~\ref{ex:mu_is_zero}, so the overall effect is for $\|w_2\|_2$ to increase though $L(w)$ is small.

\end{example}
For self-training to succeed, the correctly and confidently predicted examples must help remove the spurious features. As demonstrated above, this requires some continuum between confidently and unconfidently predicted examples. This motivates the log-concavity and smoothness assumptions, which guarantees that the sample distribution is not supported on too many extremely isolated clusters.

\subsection{Mixtures of log-concave and log-smooth distributions}
\label{sec:assumptions}

To avoid the failure cases above, we make realistic assumptions which are plausible in real-world data distributions. We start by defining a variant of log-concave and log-smooth distributions.

\begin{definition}[sliced log-concavity, log-smoothness]
	A distribution over $\R^d$ with density $p$ is $\alpha$-log-concave for $\alpha > 0$ if $ \nabla^2 \log p(t) \preceq -\alpha\cdot I_{d\times d}$, and is $\beta$-log-smooth if $\|\nabla^2 \log p(t)\|_{\textup{op}} \le \beta $. A distribution $p$ over $\R^d$ is sliced $\alpha$-log-concave or sliced $\beta$-log-smooth if for any unit vector $v$, the random variable $v^\top x$ with $x\sim p$ is $\alpha$-log-concave or $\beta$-log-smooth, respectively.
\end{definition}

A 1 dimensional density that is not Gaussian which satisfies these assumptions is $p(x) \propto \exp(-x^2 + \cos x)$. This density is 1-log concave and 3-log smooth.
Now we state our main assumption that $x_1$ consists of a mixture of sliced-log-concave and smooth distributions with sufficient separation.  
\begin{assumption}[Separation assumption on the data]~\label{ass:seperation}
	We assume that the distribution of $x_1$ in the target domain, denoted by $\Dtone$, is a mixture of $K$  sliced $\alpha$-log-concave and $\beta$-log-smooth distributions. (The reader can think of $\alpha$, $\beta$ and $K$ as absolute constants for simplicity.) Let $\tau_1,\dots, \tau_K$ denote the probability of each mixture and $\tau = \min_{i}\tau_K$. We assume that these mixtures are sufficiently separated in the sense that for scalar $\kappa$ (that depends on $\alpha$ and $\beta$), there exists $(w_1, 0) \in \R^{d_1 + d_2}$ such that  $L((w_1, 0)) \le \tau \kappa$.
\end{assumption}

We formally define $\kappa$ in Section~\ref{sec:general_case_overview}. When $\alpha$ and $\beta$ are of constant scale, $\kappa$ is also a constant. Assumption~\ref{ass:seperation} is always satisfied if the means of each mixture distribution in $\Dtone$ are sufficiently bounded from 0. We can see why Assumption~\ref{ass:seperation} is a separation condition by considering the case when there exists $(w_1, 0)$ with good classification accuracy on $x_1$. Obtaining good classification accuracy is only possible if the means of different classes are sufficiently far from 0 and also on opposite sides of 0, resulting in separation between the two classes. 

The sliced log-concavity ensures that each mixture component of $w_1^\top x_1$ is uni-modal, with upper bound $\alpha$ on its ``width''. Likewise, the sliced log-smoothness condition ensures that each component is not too narrow. These conditions rule out the hard distribution in Figure~\ref{fig:hard_cases} (right), as each of the three components change quite sharply, violating log-smoothness. 
Next, we assume the source classifier is decently accurate and has bounded usage of the spurious feature. 
\begin{assumption}[Source classifier is decently accurate, doesn't rely too much on spurious features.]\label{ass:boundinit}
We assume that the source classifier has a significant mass in the space of the signal $x_1$ in the sense that $\|\ws_1\|_2\ge 1/2$. We further assume that either $\Sigma_2$ is sufficiently small or $\ws_2$, the initialization in the spurious feature space, is sufficiently small, in the sense that 
	\begin{align}
	\sigma^2 = \ws^\top_2\Sigma_2 \ws_2\le c\cdot \min\{1, \alpha/\beta^2, \beta^{-1}, (\beta / |\log \beta|)^{-1}\}\nonumber
	\end{align}
	for some sufficiently small universal constant $c$ (e.g., $c=0.03$ can work.) Furthermore, we assume the source classifier $\ws$ has small entropy bound $L(\ws) \le \tau \kappa$, where $\kappa$ is the constant in Assumption~\ref{ass:seperation}.
\end{assumption}
The conditions on $\|\ws_1\|_2$ and $\sigma$ can be satisfied if $\ws_2$ is not too large. The following theorem shows that under our assumptions, entropy minimization succeeds in removing the spurious $w_2$.

\begin{theorem}[Main result]
	\label{thm:general_main}  
	In the setting above, suppose Assumptions~\ref{ass:seperation} and~\ref{ass:boundinit} hold and $L$ is smooth.\footnote{As there is a discontinuity in $\frac{d}{dt}\lexp(t)$ at $t = 0$, we need to assume smoothness. This regularity condition is easy to satisfy; for example, it holds if $\Dtone$ is a mixture of Gaussians.} If we run Algorithm~\eqref{eqn:alg} initialized with $\ws$ with sufficiently small step size $\eta$, after $O(\log \frac{1}{\epsilon})$ iterations, we will obtain $\hat{w}$ with very small usage of spurious features, i.e. $ \|\hat{w}_2\|_2 \le \epsilon $. The same conclusion holds with probability $1- \delta$ in the finite sample setting with $O(\frac{1}{\epsilon^4}\log \frac{1}{\delta})$ unlabeled samples from $\Dt$. 
\end{theorem}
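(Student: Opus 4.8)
The plan is to reduce the high-dimensional dynamics to a one-dimensional statement about the variance that $w_2$ injects into $w^\top x$, and then show this variance contracts geometrically. First I would exploit that, conditioned on $x_1$, the spurious contribution $w_2^\top x_2$ is a centered Gaussian of variance $\sigma^2 = w_2^\top \Sigma_2 w_2$ independent of $s := w_1^\top x_1$, so $w^\top x = s + Z$ with $Z \sim \cN(0,\sigma^2)$. Applying Stein's lemma to the Gaussian $x_2$ (using $\frac{d}{dt}[\sign(t)\lexp(t)] = -\lexp(t) + 2\delta(t)$) collapses the spurious gradient to
\begin{align}
\nabla_{w_2} L(w) = c(w)\,\Sigma_2 w_2, \qquad c(w) = L(w) - 2\,p_0(w), \nonumber
\end{align}
where $p_0(w)$ is the density of $w^\top x$ at $0$. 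Equivalently $c(w) = 2\,\partial_{\sigma^2} L$, so the scalar $c(w)$ records whether the min-entropy objective wants to grow or shrink the spurious variance, and the entire argument hinges on showing $c(w) \ge c_0 > 0$.

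\textbf{The main obstacle} is precisely establishing the uniform lower bound $c(w) \ge c_0 > 0$, i.e. $L(w) > 2\,p_0(w)$ with a gap, under Assumptions~\ref{ass:seperation} and~\ref{ass:boundinit}. This is exactly where Examples~\ref{ex:mu_is_zero} and~\ref{ex:small_loss} bite: pointwise the inequality $\Exp_Z[\lexp(s+Z)] > 2\phi_\sigma(s)$ fails near $s=0$, since the Gaussian density at $0$ blows up like $1/\sigma$, so mass of $s$ near the boundary drives $c$ negative and encourages $\|w_2\|_2$ to grow. I would therefore decompose $\Dtone$ into its $K$ sliced log-concave components and argue component-wise: for a confidently classified component (mean far from $0$), $\alpha$-log-concavity forces the density at $0$ to be much smaller than at the mode, while $\beta$-log-smoothness prevents the mass from collapsing into a narrow spike (the failure mode of Figure~\ref{fig:hard_cases}, right), so the smoothed loss dominates twice the boundary density; the few components with mass near $0$ contribute negatively, but their total is controlled by the loss budget $L(w) \le \tau \kappa$ from the separation Assumption~\ref{ass:seperation}. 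The delicate matching of the width upper bound ($\alpha$), the width lower bound ($\beta$), the smallness of $\sigma$ (Assumption~\ref{ass:boundinit}), and the threshold $\kappa$ is the technical heart of the proof, and must hold uniformly over all $w_1$ reachable along the trajectory.

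Given $c(w^t) \ge c_0$, convergence follows by an induction maintaining the invariants $\sigma_t^2 \le$ threshold, $L(w^t) \le \tau\kappa$, and $\|w_1^t\|_2 \ge 1/2$. The unnormalized step is $\tilde w_2^{t+1} = (I - \eta\,c(w^t)\Sigma_2) w_2^t$, so $\|\tilde w_2^{t+1}\|_2 \le (1 - \eta c_0 \lambda_{\min}(\Sigma_2))\|w_2^t\|_2$. Crucially, the normalization only helps: since $\langle w^t, \nabla L(w^t)\rangle = \Exp[-|w^\top x|\,\lexp(w^\top x)] \le 0$, the step satisfies $\|w^t - \eta \nabla L(w^t)\|_2 \ge 1$, hence
\begin{align}
\|w_2^{t+1}\|_2 \le \|\tilde w_2^{t+1}\|_2 \le (1 - \eta c_0 \lambda_{\min}(\Sigma_2))\,\|w_2^t\|_2. \nonumber
\end{align}
This yields geometric decay and the $O(\log\frac1\epsilon)$ iteration count; meanwhile $\sigma_t$ and $\|w_2^t\|_2$ shrink (closing the variance invariant), $\|w_1^t\|_2 = \sqrt{1-\|w_2^t\|_2^2}$ grows, and the standard descent lemma (using the assumed smoothness of $L$) keeps $L(w^t) \le \tau\kappa$, so the lower bound $c(w^t)\ge c_0$ persists.

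For the finite-sample guarantee I would rerun the same argument with the empirical gradient, an average of $\lexp'(w^\top x)\,x_2$, and control its deviation from $\nabla_{w_2}L$ uniformly along the trajectory; this also requires estimating the boundary-density term appearing in $c(w)$. Because the true spurious gradient has magnitude $\sim c_0\lambda_{\min}(\Sigma_2)\|w_2\|_2$, once $\|w_2\|_2 \approx \epsilon$ the contraction survives the sampling error only if that error is $\lesssim \epsilon$; the slower ($n^{-1/4}$-type) rate for the density/boundary term is what forces $n = O(\epsilon^{-4}\log\frac1\delta)$. I expect the component-wise positivity of $c(w)$ to be by far the hardest and most assumption-sensitive step, with the dynamics above being comparatively mechanical once that lower bound is in hand.
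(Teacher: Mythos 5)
Your proposal follows essentially the same route as the paper. The Stein's-lemma identity $\nabla_{w_2}L(w) = (L(w) - 2p_0(w))\,\Sigma_2 w_2$ is an exact repackaging of the paper's computation $\nabla_{w_2}L(w) = \tfrac{\partial L}{\partial \sigma}\cdot\tfrac{\partial\sigma}{\partial w_2}$ (via the heat-equation relation $\partial_{\sigma^2}L = \tfrac12\E[\lexp''(w^\top x)]$), so your criterion $c(w)\ge c_0>0$ is equivalent to the paper's $\tfrac{\partial}{\partial\sigma}L(w)>0$; and your plan for proving it --- small loss forces the density near zero to be small by log-concavity, log-smoothness rules out the spike failure mode, the budget $L(w)\le\tau\kappa$ handles the mixture components near the boundary --- is exactly what the paper's Lemmas~\ref{lem:Ltop}--\ref{lem:dldsigma} execute (the paper works with the density of $\mu=w_1^\top x_1$ at $0$ rather than of $w^\top x$, an immaterial difference). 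You correctly flag this as the crux but leave it at the level of a sketch, which is acceptable for a plan since the ingredients you name are the right ones. Two smaller remarks. First, your handling of the renormalization via the direct identity $\langle w,\nabla L(w)\rangle = -\E[|w^\top x|\lexp(w^\top x)]\le 0$, hence $\|w-\eta\nabla L(w)\|_2\ge 1$, is a nice simplification of the paper's argument (which separately shows $\|w_1\|_2$ increases and $\|w_2\|_2$ decreases). Second, your explanation of the $O(\epsilon^{-4})$ sample complexity is off: no density estimation occurs anywhere in the algorithm or the paper's analysis, and there is no $n^{-1/4}$ rate for a boundary term. The paper's Lemma~\ref{lem:emp_grad_pop_grad_close} gives uniform $n^{-1/2}$-type concentration of the gradient inner products $\langle\nabla\hat L(w),w_2\rangle$, and the $\epsilon^{-4}$ arises because the signal $\langle\nabla_{w_2}L(w),w_2\rangle\gtrsim c_1\|w_2\|_2^2$ is \emph{quadratic} in $\|w_2\|_2$, so one needs the concentration error to be $\lesssim\epsilon^2$ (not $\lesssim\epsilon$) once $\|w_2\|_2\approx\epsilon$. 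Your plan would still land on the right rate, but for the wrong reason.
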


Above, the notation $O(\cdot)$ hides dependencies on $\alpha, \beta, \Sigma_2$, and other distribution-dependent parameters. The interpretation of Theorem~\ref{thm:general_main} is that self-training can take a source classifier that has decent accuracy on the target and de-noise it completely, improving target accuracy by removing spurious extrapolations.
The main proof of Theorem~\ref{thm:general_main} is given in Section~\ref{sec:general:app}. We provide proof intuitions in Section~\ref{sec:proof_intuitions}. In Section~\ref{sec:general_proof}, we show we can ensure convergence to an approximate local minimum of the objective $\min_{\|w_1\|_2 \le 1} L((w_1,0))$ by adding Gaussian noise to the gradient updates.

\noindent{\bf Special case: mixtures of Gaussians.} 
We provide a slightly stronger analysis of Algorithm~\ref{eqn:alg} when the signal $x_1$ is a one-dimensional Gaussian mixture, i.e. we set $\Dtone(\cdot | y) = \cN(y \gamma, \sigma_1^2)$. Let $\tilde{\sigma}_{min}, \tilde{\sigma}_{max}$ denote the minimum and maximum eigenvalues of $\tilde{\Sigma}\triangleq \begin{pmatrix}
\sigma_1^2 & 0 \\
0 & \Sigma_2 
\end{pmatrix}$. 

We analyze a slightly more general variant of Algorithm~\ref{eqn:alg} which projects to the $R$-norm ball rather than unit ball. We now show that starting from an initial source classifier with sufficiently high accuracy on the target domain, self-training will avoid using the spurious feature and converge to the Bayes optimal classifier. Note that this is a stronger statement than Theorem~\ref{thm:general_main}, which does not bound the final target accuracy of the classifier. 

\begin{theorem}
	\label{thm:mixture} In the setting above, suppose we are given a classifier (trained on a source distribution) $\ws$ with  $\|\ws\| \le R$ and 0-1 error on the target domain at most $\rho = \frac{1}{2}\erfc{\left(\frac{r(R\tilde{\sigma}_{max})}{\sqrt{2}R\tilde{\sigma}_{min}}\right)}$.\footnote{$\erfc{(t)}=\frac{2}{\pi}\int_t^\infty{\exp{(-x^2)}dx}$.} ($r$ is a function as defined in Section~\ref{sec:gaussian_app}). Then Algorithm~\ref{eqn:alg} converges to $w^K$ satisfying 
\begin{align}
w^K_1   \ge \sqrt{R^2-\epsilon^2} \textup{ and } 
\|w^K_2\|_2  \le \epsilon \nonumber
\end{align}
within $K=O(\log\frac{1}{\epsilon})$ iterations. For the finite sample setting, the same conclusion holds with probability $1 - \delta$ using $O(\frac{1}{\epsilon^4}\log\frac{1}{\delta})$ samples. 
\end{theorem}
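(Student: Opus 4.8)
The plan is to exploit the one-dimensional Gaussian structure of the signal to collapse the whole dynamics onto a two-variable system in the scalars $a = w_1\gamma$ and $s^2 = w_1^2\sigma_1^2 + w_2^\top\Sigma_2 w_2$. Conditioned on $y$ the margin $u=w^\top x$ is Gaussian, $u\mid y\sim\cN(ya,s^2)$, so by the symmetry of $\exp(-|u|)$ the loss depends on $w$ only through $(a,s)$: $L(w)=\Exp_{u\sim\cN(a,s^2)}\exp(-|u|)$. I would compute $\nabla L$ by conditioning on $(u,y)$ and using that $(u,x_1,x_2)$ is jointly Gaussian given $y$; the regression identities $\Exp[x_2\mid u,y]=\tfrac{\Sigma_2 w_2}{s^2}(u-ya)$ and $\Exp[x_1\mid u,y]=y\gamma+\tfrac{w_1\sigma_1^2}{s^2}(u-ya)$, combined with Stein's lemma, give the clean form
\[
\nabla_{w_2}L = c_2\,\Sigma_2 w_2, \qquad \nabla_{w_1}L = \gamma c_1 + \sigma_1^2 c_2\, w_1,
\]
where $c_1=\partial_a L$ and $c_2 = 2\,\partial_{s^2}L = L - 2\phi_{a,s^2}(0)$, with $\phi_{a,s^2}$ the $\cN(a,s^2)$ density. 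Because $\partial_{s^2}L=\tfrac12\partial_a^2 L$ for Gaussian averages, one also has $c_2=\partial_a^2 L$, so the sign of the crucial coefficient $c_2$ is exactly the convexity of $a\mapsto L(a,s)$.

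The sign analysis of $c_2$ is the heart of the argument and the origin of the failure cases. Since $L(\cdot,s)$ is an even bump, concave near $a=0$ and convex in its tails, $c_2=\partial_a^2 L$ is negative for small $a$ (this is exactly Example~\ref{ex:mu_is_zero}, where shrinking $w_2$ would \emph{raise} the loss) and positive once $a$ exceeds the inflection point, which I would take as the definition of $r(s)$; thus $c_2>0$ precisely when the classifier is confident enough, $a>r(s)$. I would then translate the accuracy hypothesis into this condition: the $0$--$1$ error of $w$ equals $\tfrac12\erfc\!\big(a/(\sqrt2 s)\big)$, so the assumption error $\le\rho=\tfrac12\erfc\!\big(r(R\tilde{\sigma}_{max})/(\sqrt2 R\tilde{\sigma}_{min})\big)$ forces $a/s\ge r(R\tilde{\sigma}_{max})/(R\tilde{\sigma}_{min})$; together with the sphere bounds $R\tilde{\sigma}_{min}\le s\le R\tilde{\sigma}_{max}$ and monotonicity of $r$, this yields $a\ge r(R\tilde{\sigma}_{max})\ge r(s)$, i.e. $c_2>0$ at initialization. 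I would then show this confident region is forward invariant: the drift $-\eta\gamma c_1>0$ (recall $c_1=\partial_a L<0$ for $a>0$) keeps $w_1$, hence $a$, from decreasing, while $s\le R\tilde{\sigma}_{max}$ always, so $a\ge r(s)$ persists along the trajectory.

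With $c_2$ bounded below by a positive constant throughout, I would prove geometric convergence by tracking the ratio $\xi_t=\|w_2^t\|_2/w_1^t$, which is invariant under the projection step (both coordinates are rescaled by the same factor), so only the pre-projection update matters. There $\tilde w_2=(I-\eta c_2\Sigma_2)w_2$ contracts $\|w_2\|_2$ by a factor $1-\eta c_2\lambda_{\min}(\Sigma_2)$, while $\tilde w_1=w_1(1-\eta\sigma_1^2 c_2)+\eta\gamma|c_1|$ is pushed up by the signal drift; balancing the variance term against the drift (using the lower bounds on $c_1,c_2$ valid in the confident region and $w_1\le R$) gives $\xi_{t+1}\le(1-\Omega(\eta))\xi_t$. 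Hence $\xi_t\le\epsilon/R$ after $O(\log\tfrac1\epsilon)$ iterations, and since $w_1^2+\|w_2\|_2^2=R^2$ on the sphere this is equivalent to $\|w_2^K\|_2\le\epsilon$ and $w_1^K\ge\sqrt{R^2-\epsilon^2}$. The limiting rule $\sign(w_1 x_1)=\sign(x_1)$ is the Bayes classifier of the symmetric mixture, which gives the claimed convergence to Bayes optimality.

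For the finite-sample claim I would replace $\nabla L$ by the empirical gradient $-\tfrac1n\sum_i\sign(w^\top x^{(i)})\,x^{(i)}\exp(-|w^\top x^{(i)}|)$ and establish a uniform bound on $\sup_{\|w\|\le R}\|\nabla\widehat L(w)-\nabla L(w)\|$ holding with probability $1-\delta$; each summand is the product of the bounded scalar $t\mapsto te^{-|t|}$ with the Gaussian-tailed vector $x^{(i)}$, so vector concentration plus a covering-number argument over the $R$-ball controls the deviation, and the contraction above is robust to an additive gradient error of this size down to the target scale $\epsilon$, yielding the stated $O(\epsilon^{-4}\log\tfrac1\delta)$ samples. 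I expect the main obstacle to be the sign/threshold analysis of $c_2$: because Examples~\ref{ex:mu_is_zero}--\ref{ex:small_loss} show $c_2$ can genuinely be negative, the entire result hinges on pinning down the inflection threshold $r(s)$, matching it to the $\erfc$ accuracy condition with honest constants, and proving the confident region is forward invariant so that $c_2>0$ never fails mid-trajectory; a secondary nuisance is making the uniform gradient concentration go through despite the unbounded spurious features $x_2$.
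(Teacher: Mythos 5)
Your proposal is correct and follows essentially the same route as the paper's proof: reduce the loss to a one-dimensional Gaussian convolution $g_{\tilde\sigma}(w_1\gamma)$, show $\nabla_{w_2}L \propto \Sigma_2 w_2$ with a coefficient that is positive once the margin exceeds a threshold $r(\tilde\sigma)$ (the paper's Lemma~\ref{lem:dg_dsigma} and Claim~\ref{claim:dg_dsigma_lb_gaussian}), translate the $\erfc$ accuracy assumption into that margin condition (Lemma~\ref{lem:acc_to_a}), establish forward invariance via $\langle\nabla_{w_1}L,w_1\rangle<0$ (Lemma~\ref{lem:w1_inc}), and conclude geometric contraction of $\|w_2\|$ plus a covering-number gradient-concentration argument for finite samples. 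Your heat-equation identification of the key coefficient with $\partial_a^2 L$ and the tracking of the projection-invariant ratio $\|w_2\|/w_1$ are pleasant repackagings, but they do not change the substance of the argument.
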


As above, $O(\cdot)$ hides dependencies in $R, \tilde{\sigma}_{min}, \tilde{\sigma}_{max}, \rho, d_2$. In particular, $w$ converges to $(R, 0)$, the classifier in $\{w : \|w\|_2 \le R\}$ with the best possible accuracy. The full proof is in Section~\ref{sec:gaussian_app}. 

\section{Overview of Analysis}
\label{sec:proof_intuitions}
We will summarize the key intuitions for proving Theorems~\ref{thm:general_main} and Theorem~\ref{thm:mixture}. The main ingredient is to show that the min entropy objective encourages a decrease in $\|w_2\|_2$, as stated below:

\begin{lemma}
	\label{lem:w2_dec-simplified}In the setting of Theorem~\ref{thm:mixture}, suppose that the classifier $w$ has at most $\rho$ error on the target. Then $\langle \nabla_{w_2}L(w), w_2\rangle \ge 0.$
	This same conclusion holds in the setting of Theorem~\ref{thm:general_main} for any $w$ satisfying the conditions in Assumption~\ref{ass:boundinit}.  
\end{lemma}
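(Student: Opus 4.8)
The plan is to first reduce the inner-product condition to a one-dimensional statement about the density of the scalar margin $s := w^\top x$, and then verify that statement in each of the two settings. Write $u := w_1^\top x_1$ and $v := w_2^\top x_2$; since $x_2 \sim \cN(\Vec 0,\Sigma_2)$ is independent of $(x_1,y)$, we have $v \sim \cN(0,\sigma^2)$ with $\sigma^2 := w_2^\top \Sigma_2 w_2$, independent of $u$, and $s = u+v$. If $\sigma = 0$ then $w_2 = 0$ (as $\Sigma_2 \succ 0$) and the claim is trivial, so assume $\sigma > 0$; then $s$ has a smooth density $p_s = p_u * \varphi_\sigma$, the convolution of $p_u$ with the Gaussian density $\varphi_\sigma$. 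Differentiating the loss gives
\begin{align}
\langle \nabla_{w_2} L(w), w_2\rangle = -\Exp\big[\sign(s)\,\lexp(s)\,v\big].
\end{align}

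First I would evaluate this expectation by conditioning on $u$ and applying Gaussian integration by parts (Stein's lemma) in $v$. The subtlety is the kink of $\lexp$ at $0$: writing $\phi(t) := \sign(t)\exp(-|t|)$, its distributional derivative is $\phi'(t) = -\exp(-|t|) + 2\delta(t)$, where the Dirac mass comes from the jump of $\sign$. Stein's lemma then yields $\Exp_v[\phi(u+v)\,v] = \sigma^2\,\Exp_v[\phi'(u+v)] = \sigma^2\big(-\Exp_v[\exp(-|u+v|)] + 2\varphi_\sigma(u)\big)$. Taking expectation over $u$ and using $\Exp_u[\varphi_\sigma(u)] = (p_u * \varphi_\sigma)(0) = p_s(0)$ gives the clean identity
\begin{align}
\langle \nabla_{w_2} L(w), w_2\rangle = \sigma^2\big(L(w) - 2 p_s(0)\big),
\end{align}
so the lemma is equivalent to the scalar inequality $L(w) \ge 2 p_s(0)$. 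As a sanity check, in the failure case $w_1 = 0$ of Example~\ref{ex:mu_is_zero} the margin is $s \sim \cN(0,\sigma^2)$, so $L(w)\approx 1$ but $2p_s(0) = \sqrt{2/\pi}/\sigma \to \infty$, correctly predicting that $\|w_2\|_2$ increases. A useful equivalent form, obtained from the Green's-function identity $(1-\partial_t^2)\exp(-|t|) = 2\delta$ and integrating by parts twice, is $L(w) - 2p_s(0) = \int p_s''(t)\exp(-|t|)\,dt$, so it suffices to show this weighted integral of the density's curvature is nonnegative.

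Next I would verify $L(w)\ge 2p_s(0)$ in the Gaussian setting of Theorem~\ref{thm:mixture}. Here $u = w_1 x_1$ is a two-component Gaussian mixture, so $s$ is the symmetric mixture $p_s = \tfrac12\big(\varphi_{\tilde s}(\cdot - \mu) + \varphi_{\tilde s}(\cdot + \mu)\big)$ with $\mu = \gamma w_1$ and $\tilde s^2 = \sigma_1^2 w_1^2 + \sigma^2$. By symmetry of $\lexp$, $L(w) = \Exp_{s\sim \cN(\mu,\tilde s^2)}[\exp(-|s|)]$ and $2p_s(0) = 2\varphi_{\tilde s}(\mu)$, so the inequality becomes a one-dimensional Gaussian integral inequality in the single ratio $\mu/\tilde s$. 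Since the Gaussian factor $\varphi_{\tilde s}(\mu)$ decays like $\exp(-\mu^2/2\tilde s^2)$ while $\Exp[\exp(-|s|)]$ decays only like $\exp(-\mu)$, the inequality holds once $\mu/\tilde s$ exceeds an explicit threshold; this threshold is exactly what the hypothesis on the $0$-$1$ error $\rho = \tfrac12\erfc(\cdot)$ encodes (small error $\Leftrightarrow$ large separation $\mu/\tilde s$), and verifying it is a routine comparison of the two sides.

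The main obstacle is the general setting of Theorem~\ref{thm:general_main}, where $p_u$ is a Gaussian-smoothed mixture of the $K$ sliced $\alpha$-log-concave, $\beta$-log-smooth components. I would work from $\int p_s''(t)\exp(-|t|)\,dt \ge 0$ componentwise, writing $p_s = \sum_k \tau_k p_{s,k}$ with each $p_{s,k}$ log-concave (the convolution of a log-concave density with a Gaussian is log-concave, and for small $\sigma$ the smoothing only improves the smoothness constant). For a log-concave, log-smooth component, setting $q=\log p_{s,k}$ gives $p_{s,k}'' = p_{s,k}\big(q'' + (q')^2\big)$ with $q'' \in [-\beta', -\alpha']$, so $p_{s,k}$ is concave only in a band of width $O(\sqrt{\beta'}/\alpha')$ around its mode $m_k$ (where $q'=0$) and convex outside it. Separation (Assumption~\ref{ass:seperation}) together with $L(w)\le \tau\kappa$ (Assumption~\ref{ass:boundinit}) forces the mode of each component away from $0$, so near $t=0$ --- exactly where the weight $\exp(-|t|)$ concentrates --- every component sits in its convex region and contributes positively, while the negative contribution of each concave band near $m_k$ is suppressed by the factor $\exp(-|m_k|)$. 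The crux is to make this balance quantitative: bounding the depth and width of the concave band through $\alpha,\beta$ (log-smoothness prevents arbitrarily narrow spikes, ruling out the hard case of Figure~\ref{fig:hard_cases} right), controlling the distortion introduced by the small-$\sigma$ Gaussian smoothing near $0$ (this is where the condition $\sigma^2 \le c\min\{1,\alpha/\beta^2,\dots\}$ enters), and summing the per-component estimates against the separation guarantee so that the positive near-$0$ mass dominates. This curvature accounting is the technically delicate part, and the constant $\kappa$ in the assumptions is precisely the one that makes the aggregate bound close.
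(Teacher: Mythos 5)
Your reduction is correct and is in fact the same reduction the paper makes, seen globally rather than pointwise: integrating the paper's explicit formula~\eqref{eq:q_def}, one has $q_\sigma(\mu) = \sigma\bigl(g_\sigma(\mu) - 2\varphi_\sigma(\mu)\bigr)$, so $\frac{\partial}{\partial\sigma}L(w) = \sigma\bigl(L(w) - 2p_s(0)\bigr)$, which is exactly your Stein/Green's-function identity. (Two small cautions: Stein's lemma must be applied in its distributional form since $\phi$ jumps at $0$ --- you do handle the Dirac mass correctly --- and in the Gaussian case the inequality $g_{\tilde s}(\mu)\ge 2\varphi_{\tilde s}(\mu)$ is \emph{not} a function of the ratio $\mu/\tilde s$ alone; the paper's threshold $r(\sigma)=\sigma^2+\sigma\sqrt{2\log(4\sqrt2/(\sqrt\pi\sigma))}$ is genuinely two-parameter, and the hypothesis $\rho=\tfrac12\erfc(r(R\tilde\sigma_{\max})/(\sqrt2 R\tilde\sigma_{\min}))$ is engineered so that the accuracy condition forces $\mu\ge r(R\tilde\sigma_{\max})\ge r(\tilde s)$ for every feasible $\tilde s$, via Lemma~\ref{lem:acc_to_a} and Lemma~\ref{lem:dg_dsigma}. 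With that fix the Gaussian half of the lemma is complete.)

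The genuine gap is the general setting, which is the bulk of the lemma's content and which your proposal explicitly defers (``the crux is to make this balance quantitative''). Your componentwise plan --- show $\int p_s''(t)e^{-|t|}\,dt\ge 0$ by arguing each smoothed component is convex near $0$ and its concave band near the mode $m_k$ is suppressed by $e^{-|m_k|}$ --- names the right tension but does not resolve it: the negative contribution of a band scales like $\beta\,p(m_k)\cdot(\text{width})\cdot e^{-|m_k|}$, the positive contribution near $0$ scales like $p_s(0)\,s(0)^2$ with $p_s(0)$ \emph{small}, and whether the latter wins depends on exactly the kind of quantitative relation between $p(0)$, $|s(0)|$, $\alpha$, $\beta$, and the loss bound $\tau\kappa$ that the paper spends Lemmas~\ref{lem:Ltop}--\ref{lem:dldsigma} and Claims~\ref{claim:integral_s0}--\ref{lem:dratio_lb} establishing (small loss $\Rightarrow$ $p(0)\le p^\star$ and $|s(0)|\ge\sqrt{\smooth}/2+2$, then a split of $\int p\,q_\sigma$ at $|\mu|=\sigopt$ with Lemma~\ref{lem:dratio_bound} controlling the densities). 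Note also that the paper deliberately works with the \emph{unsmoothed} density of $\mu=w_1^\top x_1$, to which the sliced log-concavity/smoothness assumptions apply directly; your route through $p_s=p_u*\varphi_\sigma$ additionally requires a quantitative statement that Gaussian convolution degrades $\beta$-log-smoothness only to $\beta/(1-\beta\sigma^2)$ (valid here since Assumption~\ref{ass:boundinit} gives $\beta\sigma^2\ll 1$, but it must be proved, not asserted). As written, the general case is a program, not a proof.
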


The consequence of Lemma~\ref{lem:w2_dec-simplified} is that one step of gradient descent on the loss function $L(w)$ shrinks the norm of $w_2$. This leads to the conclusion of Theorems~\ref{thm:general_main} and~\ref{thm:mixture}, modulo a few other nuances such as showing that the conditions of Lemma~\ref{lem:w2_dec-simplified} hold for all the iterates, which is done inductively. We also show that $\|w_1\|_2$ increases after one gradient step (Lemma~\ref{lem:w1_inc}), so the norm of $w_2$ still decreases after re-normalization. 
To prove Lemma~\ref{lem:w2_dec-simplified}, we first express the objective as follows:
\begin{align}
L(w) = \Exp_{x_1} \left[\Exp_{x_2} \lexp(w_1^\top x_1 + w_2^\top x_2)\right]
\end{align}
Note that $w_2^\top x_2$ has Gaussian distribution with mean zero and variance $\sigma^2 \triangleq w_2^\top \Sigma_2 w_2$. Let $g_\sigma(t) = \E_{z\sim \cN(0,\sigma^2)}[\ell(t+z)]$ denote the convolution of $\lexp$ with $\cN(0,\sigma^2)$. Then we can rewrite the loss as 
$
L(w) = \Exp_{w_1^\top x_1} \left[g_\sigma(w_1^\top x_1)\right]
$.

We now have $\nabla_{w_2}L(w) = \frac{\partial L(w)}{\partial \sigma} \cdot \frac{\partial \sigma}{\partial w_2} =\frac{\partial L(w)}{\partial \sigma}  \cdot 2\Sigma_2 w_2$, which implies $\langle \nabla_{w_2}L(w), w_2\rangle = 2w_2^\top \Sigma_2 w_2 \frac{\partial}{\partial \sigma} L(w)$. As $\Sigma_2 \succ 0$, proving Lemma~\ref{lem:w2_dec-simplified} is equivalent to proving $\frac{\partial}{\partial \sigma} L(w) \ge 0$. Letting $\mu \triangleq w_1^\top x_1$, we have $\frac{\partial}{\partial \sigma} L(w) = \E_{\mu}[q_{\sigma}(\mu)]$ where $q_{\sigma}(\mu) = \frac{\partial}{\partial \sigma} g(\mu)$. We now investigate when $q_{\sigma}(\mu) \ge 0$. As discussed in Example~\ref{ex:mu_is_zero} (and visualized in Figure~\ref{fig:q_sigma_mu}), $q_{\sigma}(\mu) < 0$ for $\mu$ near 0.

\noindent{\bf Case when $\mu \gg \sigma$:} Recall that 
$g_{\sigma}(\mu) =  \E_{z\sim \cN(\mu, \sigma^2)}\left[\lexp(z)\right]$ is the average of the entropy function over a Gaussian distribution. When $\mu$ is sufficiently large, most of the mass of the Gaussian distribution $\cN(\mu, \sigma^2)$ is on the positive side of the real line, where the function $\lexp$ is convex. For convex functions $f$, Jensen's inequality tells us $\Exp_{z\sim \cN(\mu, \sigma^2)}[f(\mu)] > f(\mu)$ if $\sigma > 0$. As decreasing $\sigma$ decreases the expected loss, we can see that $q_{\sigma}(\mu) > 0$. This is visualized in Figure~\ref{fig:q_sigma_mu} (left).

\begin{figure}
	\centering
	\includegraphics[width=0.45\textwidth]{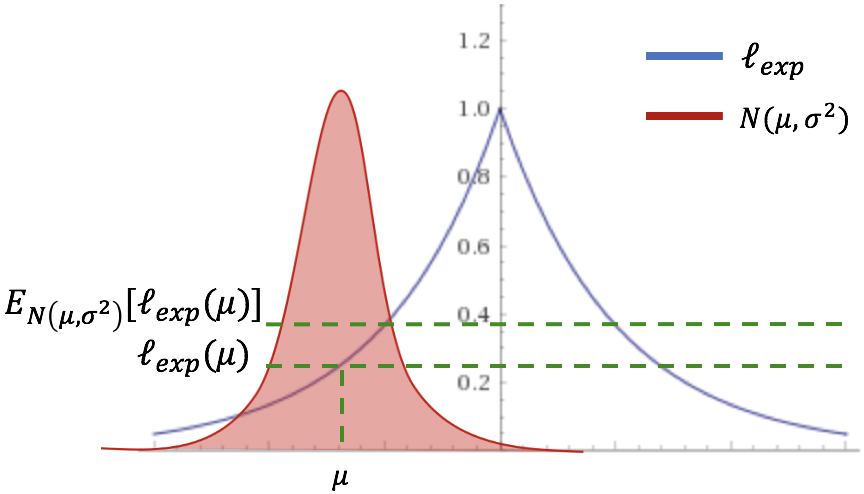}
	\includegraphics[width=0.45\textwidth]{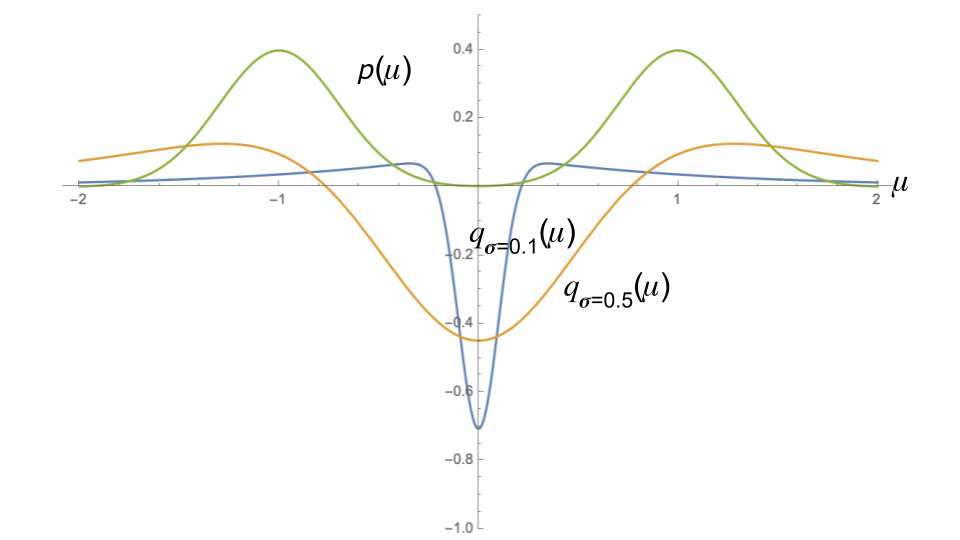}
    \caption{Analyzing dependence of $L$ on $\sigma$. {\bf Left: $\mu \gg \sigma$.} A visual depiction of why $q_{\sigma}(\mu) > 0$ when $\mu \gg \sigma$. Conditioned on $\mu = w_1^\top x_1$, $w^\top x$ is Gaussian with mean $\mu$. As $\mu \gg \sigma$, most of its mass is in the region where $\lexp$ is convex. By Jensen's inequality, driving $\sigma$ to 0 decreases the loss in this region. {\bf Right: plot of $q_{\sigma}(\mu)$.} The function $q_\sigma(\mu)$ will be convolved with $p$, the distribution over $\mu$. To guarantee $\E\left[q_\sigma(\mu)\right]\ge 0$, we would like to $\mu$ has as large amount of mass right to the positive root and left to the negative root of $q_\sigma(\cdot)$ as possible.} 
			\label{fig:q_sigma_mu}
\end{figure}

\noindent{\bf General case: } We plot $q_\sigma(\mu)$ as a function of $\mu$ for various choices of $\sigma$ in Figure~\ref{fig:q_sigma_mu} (right). We can see from the figure that for any $\sigma>0$, there is a threshold $r(\sigma)$ (defined in~\eqref{eq:r_def}) such that for any $|\mu| > r(\sigma)$, $q_\sigma(\mu) > 0$. In Lemma~\ref{lem:dg_dsigma}, we bound this value $r(\sigma)$ in terms of $\sigma$.

For the Gaussian setting, we can compute $\E_{\mu}\left[q_\sigma(\mu)\right]$ exactly and show it is positive for sufficiently accurate classifiers. For the general case (Theorem~\ref{thm:general_main}), it is difficult to bound this expectation because the expression for $q_{\sigma}(\mu)$ is complicated. Intuitively, our argument for why $\E_{\mu}[q_{\sigma}(\mu)] > 0$ is as follows: log-concavity and smoothness of each mixture component in $\mu$ ensures that the densities are uni-modal and do not change too fast. Thus, when $L(w)$ is sufficiently small, the mass of each component is spread over the real line, with most of the mass in the middle where $q_{\sigma}(\mu)$ is significantly positive, guaranteeing $\E_{\mu}[q_{\sigma}(\mu)] > 0$.

To formalize this, we use a second order Taylor expansion of the log density of $\mu$ and bound the error incurred by the expansion using smoothness and concavity. This analysis is presented in Section~\ref{sec:general:app}. 

In Section~\ref{sec:finite_sample}, we prove finite sample guarantees by showing that the gradient updates on the population and sample loss are similar ($\nabla \hat{L}(w) \approx \nabla L(w)$ for all $w$, where $\hat{L}(w)$ is the empirical loss).

\section{Experiments}
\label{sec:experiments}
We validate our theory in a variety of empirical settings. We study a more general setting with nonlinear models where the signal $x_1$ and spurious feature $x_2$ are not distinct dimensions of the data. Using a semi-synthetic colored MNIST dataset, we verify that 1. self-training avoids using spurious features in a manner consistent with our theory and 2. as our theory predicts, self-training can harm performance when the source classifier is not sufficiently accurate. We also confirm our theoretical conclusions on a celebA dataset modified to have spurious correlations in training data but not in test. 

Next, we investigate the connection between entropy minimization~\eqref{eqn:alg} and a stochastic variant of pseudo-labeling where pseudo-labels are frequently updated~\eqref{eq:pseudo_alg}. We demonstrate that entropy minimization can converge to better target accuracy within a fixed wall clock time-budget, suggesting that practitioners may benefit from pseudo-labeling with more rounds and fewer epochs per round (Section~\ref{sec:entropy_min_pseudo}). Finally, in Section~\ref{sec:toy_gaussian_exp} we verify that the conclusions of our theory also hold for more common variants of pseudo-labeling using simulations in a toy Gaussian setting. 

\subsection{Datasets}

\noindent{\bf Colored MNIST.} We create colored variants of the MNIST dataset~\cite{MNIST} inspired by~\cite{kim2019learning, IRM} where the shape of the digit is the signal feature and the color is the spurious feature. In the first variant, denoted CMNIST10, there are 10 classes. Color correlates with the label in the source with probability $p = 0.95$, but is uncorrelated with the label in the target. In the second variant, denoted CMNIST2, we group digits into two classes: 0-4 and 5-9, which allows detailed investigation of our theory. Color correlates with the class label in the source but not in the target using a construction described in detail in Section~\ref{sec:colored_mnist}. We train 3-layer feed-forward network on the source, and use this to initialize entropy minimization (Algorithm~\ref{eqn:alg}) on unlabeled target data. Evaluation is performed on held-out target samples.

\begin{figure}[t]
	\centering
	\begin{subfigure}{.32\textwidth}
		\centering
		\includegraphics[width=0.95\linewidth]{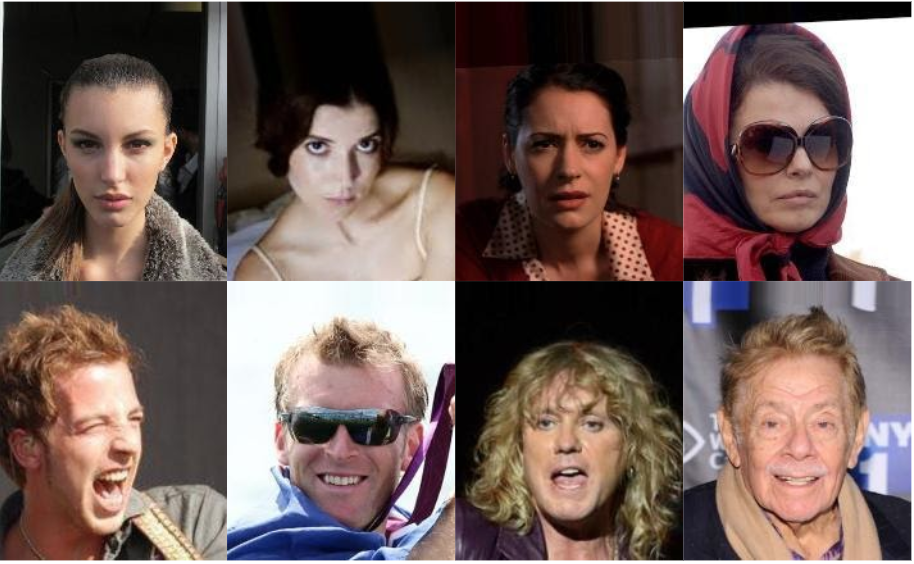}
		\caption{{\small Synthetic source data: blondness perfectly correlates with the male gender. }}\label{celeb_source}
	\end{subfigure}~
	\begin{subfigure}{.32\textwidth}
		\centering
		\includegraphics[width=0.95\linewidth]{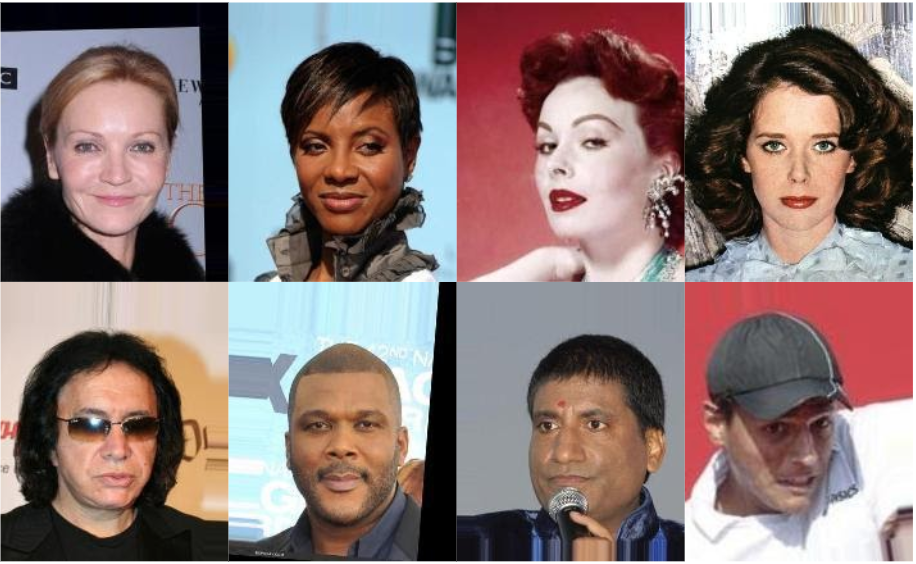}
		\caption{{\small Synthetic target data: each gender has a variety of hair colors.\vspace{\baselineskip}}}\label{celeb_target}
	\end{subfigure}~
	\begin{subfigure}{.32\textwidth}
		\centering
		\includegraphics[width=0.95\linewidth]{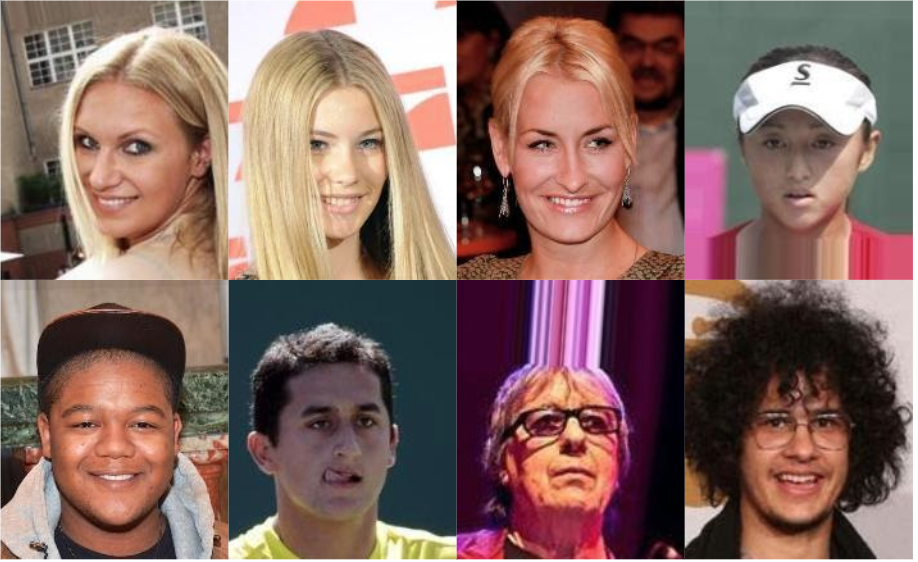}
		\caption{{\small Predictions corrected by self-training were mostly mistaken due to the spurious correlation.}}\label{fig:corrected}
	\end{subfigure}
	\caption{\small{In the synthetic CelebA experiment, the source has perfect correlation between hair color and gender, and the target does not. A classifier trained only on the source domain uses the spurious correlation. However, continuing to self-train on the unlabeled target domain reduces reliance on the spurious feature.}\label{fig:spurious}}
\end{figure}

\noindent{\bf CelebA dataset.} Inspired by~\cite{heinze2017conditional}, we partition the celebA dataset~\citep{liu2015faceattributes} so that gender correlates perfectly with hair color in source data (Figure~\ref{celeb_source}) but not in the target (Figure~\ref{celeb_target}). We train a neural net to predict gender by first training on source data alone and then performing self-training with unlabeled target data. During self-training, we add the labeled source loss to the min-entropy loss on target data. (Section~\ref{sec:celeb_a} has more details.) 

\begin{table}[t]
\begin{center}
\begin{small}
\begin{sc}
\begin{tabular}{lcccc}
\toprule
    & CelebA & CMNIST10 (p = 0.95) & CMNIST2 &  CMNIST10 (p = 0.97)\\
\midrule
    Trained on source & 81\% & 82\% & 94\% & 72\%\\
    After self-training & 88\% &  91\%  & 96\%& 67\%\\ 
\bottomrule
\end{tabular}
\end{sc}
\end{small}
\end{center}
\caption{Accuracy of models on the target before/after self-training, demonstrating that self-training can boost target accuracy under our structured domain shift. The exception is CMNIST10 with 0.97 probability of correlation between color and class. Here self-training decreases accuracy because initial accuracy is poor (only 72\%), justifying our assumption of a decently accurate source classifier.}
\label{tb:acc_improvement}
\end{table}

\subsection{Results}

\noindent{\bf Self-training improves target accuracy.} Table~\ref{tb:acc_improvement} shows that with a decently accurate source classifier, self-training on unlabeled target data leads to substantial improvements in the target domain. For example, on celebA the classifier achieves 81\% accuracy before self-training and 88\% after. This suggests that practitioners can potentially avoid overfitting to spurious correlations by self-training on large unlabeled datasets in the target domain. 

\noindent{\bf Self-training requires decent source classifier accuracy to succeed.} We test whether self-training is effective when the source classifier is bad by increasing the correlation between label and spurious color feature from 0.95 to 0.97 for CMNIST10. The resulting source classifier only obtains 72\% initial accuracy on target data, which \textit{drops} to 67\% after self-training (see Table~\ref{tb:acc_improvement}, last column, and plots in Section~\ref{sec:colored_mnist}). This shows that our assumption that the source classifier has to obtain non-trivial target accuracy (with bounded usage of the spurious feature) is also necessary in practice. 

\noindent {\bf Self-training reduces reliance on the spurious features.} In the CelebA experiment, test predictions corrected by self-training were mostly mistaken due to the spurious correlation. Figure~\ref{fig:corrected}, a random sample of the corrected examples, consists of mostly blond females, non-blond males, and subjects with hats or irregular hairstyles.

For 2-class colored MNIST, let $\mu_S(x_1)$, $\sigma_S(x_1)$ denote the mean and standard deviation of the source classifier conditioned on grayscale image $x_1$, with color distributed independently of $x_1$. Define $\mu_T(x_1)$, $\sigma_T(x_1)$ similarly for the classifier after self-training. Our theory suggests that $\sign{\mu_S(x_1)}=\sign{\mu_T(x_1)}$, $|\mu_S(x_1)|<|\mu_T(x_1)|$, and $\sigma_S(x_1) > \sigma_T(x_1)$, and we say a test example is {\em explainable} by our theory if this holds. We divide the test examples into four categories: ``-/+", ``+/-", ``+/+", ``-/-", where, for example, ``-/+" indicates source classifier was wrong but corrected by self-training. Table \ref{tb:statistics} summarizes the number of explainable examples in each category, showing that for the majority ($>90\%$) of examples, entropy minimization works due to the reason we hypothesized. In Section~\ref{sec:colored_mnist}, we provide additional detailed analyses of the influence of the spurious color feature on the prediction before and after self-training.

\begin{table}[t]
\begin{center}
\begin{small}
\begin{sc}
\begin{tabular}{lcccccr}
\toprule
    & -/+ & +/- & +/+ & -/- & Total \\
\midrule
    Explainable & 271 & 45 & 8785 & 150 & 9251\\
    Total & 349 & 86 &  9286 & 279 & 10000\\ 
\bottomrule
\end{tabular}
\end{sc}
\end{small}
\end{center}
\caption{Number of test examples explainable by our theory. See text for definitions and interpretation.}
\label{tb:statistics}
\end{table}

\section{Conclusion}
We study the impact of self-training under domain shift. We show that when there are spurious correlations in the source domain which are not present in the target, self-training leverages the unlabeled target data to avoid relying on these spurious correlations. Our analysis highlights several conditions for self-training to work in theory, such as good separation between classes and a decently accurate source classifier. Our experiments support that 1) these theoretical requirements can capture the initial conditions needed for self-training to work and 2) under these initial conditions, self-training can indeed prevent the model from using spurious features in ways predicted by our theory. It is an interesting question for future work to explore other settings we can analyze with our framework.  
\section{Acknowledgement}
We are grateful to Rui Shu, Shiori Sagawa, and Pang Wei Koh for insightful discussions. The authors would like to thank the Stanford Graduate Fellowship program for funding. CW acknowledges support from a NSF Graduate Research Fellowship. TM is also partially supported by the Google Faculty Award, Stanford Data Science Initiative, and the Stanford Artificial Intelligence Laboratory.

\bibliography{references}
\bibliographystyle{plainnat}

\newpage
\appendix

\section{Warmup: Proofs for Gaussian Setting (Theorem~\ref{thm:mixture})}
\label{sec:gaussian_app}

We will define the function $r$ in Theorem~\ref{thm:mixture} as follows:
\begin{align}
	r(\sigma) = \begin{cases}
  \sigma^2+ \sigma\sqrt{2\log{\frac{4\sqrt{2}}{\sqrt{\pi}\sigma}}}, & \text{if } 0 < \sigma \le \frac{4\sqrt{2}}{\sqrt{\pi}}. \\
  2\sigma^2, & \text{if } \sigma > \frac{4\sqrt{2}}{\sqrt{\pi}}.
\end{cases}
\label{eq:r_def}
\end{align}
The algorithm that we consider is a variant of Algorithm~\ref{eqn:alg} which more generally projects to the $R$-norm ball rather than unit ball: $w^{t + 1} = R \frac{w^t - \eta \nabla L(w^t)}{\|w^t - \eta \nabla L(w^t)\|}$.
Now define
\begin{align}
    a&=\sqrt{2}R\tilde{\sigma}_{min} \erfc^{-1}{(2\rho)} = r(R \tilde{\sigma}_{max})\\
    S&=\{w: w_1 \gamma \ge a, ||w||_2\le R\}. 
    \label{eq:S_def}
\end{align}
where the function $r$ is defined in Lemma~\ref{lem:dg_dsigma}. We first observe that the condition that classifier $w$ has at most $1 - \rho$ error corresponds directly to $w \in S$. 

\begin{lemma}
	\label{lem:acc_to_a}
	In the setting of Theorem~\ref{thm:mixture}, suppose some classifier $w$ has at least $1-\rho$ accuracy in the sense that $\Pr_{x,y}{\left[\sign{(w^\top x)}=y\right]} \ge 1-\rho$ and $||w||_2\le R$. Then $w_1  \gamma\ge a$.
\end{lemma}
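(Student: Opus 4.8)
The plan is to reduce the $0$--$1$ accuracy of $w$ to a one-dimensional Gaussian tail probability and then invert the $\erfc$ expression that defines $\rho$. First I would compute the conditional law of the score $w^\top x$. Since $x_1 \mid y \sim \cN(y\gamma,\sigma_1^2)$ and $x_2 \sim \cN(0,\Sigma_2)$ are independent, the score is Gaussian:
\[
w^\top x \mid y \;\sim\; \cN\!\big(w_1 y\gamma,\; V\big), \qquad V \triangleq w_1^2\sigma_1^2 + w_2^\top\Sigma_2 w_2 = w^\top\tilde{\Sigma} w .
\]
By the $y\mapsto -y$ symmetry, the target accuracy is $\Pr_{x,y}[\sign(w^\top x)=y]=\Phi\!\big(w_1\gamma/\sqrt{V}\big)$, where $\Phi$ is the standard normal CDF. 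In particular, if $w_1\gamma\le 0$ then the accuracy is at most $1/2<1-\rho$ (using $\rho<1/2$), contradicting the hypothesis; hence $w_1\gamma>0$ and the ratio is positive.

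Next I would translate the accuracy bound through the identity $1-\Phi(t)=\tfrac12\erfc(t/\sqrt{2})$. The hypothesis $\Phi\!\big(w_1\gamma/\sqrt{V}\big)\ge 1-\rho$ is equivalent to $\erfc\!\big(\tfrac{w_1\gamma}{\sqrt{2V}}\big)\le 2\rho$, and since $\erfc$ is strictly decreasing this rearranges to
\[
w_1\gamma \;\ge\; \sqrt{2V}\,\erfc^{-1}(2\rho),
\]
where $\erfc^{-1}(2\rho)>0$ because $2\rho<1$.

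The final step — and the crux — is to lower bound the variance $V$. Writing $V=w^\top\tilde{\Sigma}w\ge \tilde{\sigma}_{min}^2\|w\|_2^2$ (with $\tilde{\sigma}_{min}^2$ the smallest eigenvalue of $\tilde{\Sigma}$) and using $\|w\|_2=R$ gives $\sqrt{V}\ge R\tilde{\sigma}_{min}$, so that $w_1\gamma\ge \sqrt{2}\,R\tilde{\sigma}_{min}\,\erfc^{-1}(2\rho)=a$, as required. Unwinding the definition $\rho=\tfrac12\erfc\!\big(\tfrac{r(R\tilde{\sigma}_{max})}{\sqrt{2}R\tilde{\sigma}_{min}}\big)$ then confirms $\sqrt{2}\,R\tilde{\sigma}_{min}\,\erfc^{-1}(2\rho)=r(R\tilde{\sigma}_{max})$, matching the stated value of $a$.

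I expect the variance bound to be the main obstacle, for a conceptual reason: the accuracy $\Phi\!\big(w_1\gamma/\sqrt{V}\big)$ is invariant under rescaling $w$, whereas $w_1\gamma$ is not, so a small-norm $w$ can have high accuracy yet arbitrarily small $w_1\gamma$. The conclusion therefore genuinely requires $\|w\|_2$ to attain its maximal value $R$, which is exactly where the norm constraint enters; this holds for every projected iterate of Algorithm~\eqref{eqn:alg}, since after each normalization step the iterate lies on the sphere of radius $R$. The remaining work is routine: verifying the $\Phi$--$\erfc$ conversion, handling the degenerate ties and the sign of $w_1\gamma$, and checking the eigenvalue inequality.
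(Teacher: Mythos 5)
Your proof is correct and follows essentially the same route as the paper's: condition on $y$ so that $w^\top x$ is Gaussian with mean $y w_1\gamma$ and variance $\tilde{\sigma}^2 = w^\top\tilde{\Sigma}w$, rewrite the accuracy hypothesis as $\erfc\bigl(w_1\gamma/(\sqrt{2}\tilde{\sigma})\bigr)\le 2\rho$, invert $\erfc$, and lower bound the variance by $R^2\tilde{\sigma}_{min}^2$. Your remark that this last step requires $\|w\|_2=R$ rather than merely $\|w\|_2\le R$ (since the accuracy is scale-invariant in $w$ while $w_1\gamma$ is not) is a genuine point the paper's proof silently elides when it asserts $R\tilde{\sigma}_{min}\le\tilde{\sigma}$; the intended application is to classifiers on the radius-$R$ sphere, such as the projected iterates of the algorithm.
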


\begin{proof}[Proof of Lemma~\ref{lem:acc_to_a}]
	Let $\tilde{\sigma}^2=w^\top\tilde{\Sigma}w$. We have $R\tilde{\sigma}_{min}\le \tilde{\sigma} \le R\tilde{\sigma}_{max}$.
	\begin{align}
	&\Pr_{y \sim \{\pm 1\},z \sim \cN(\Vec{0}, \tilde{\sigma}^2)}{\left[\sign{(y w_1 \gamma+z)}=y\right]} \ge 1-\rho \\
	&\iff \Pr_{z \sim \cN(\Vec{0}, \tilde{\sigma}^2)}{\left[w_1  \gamma+z \ge 0\right]} \ge 1-\rho \\
	&\iff \erfc{\left(\frac{w_1  \gamma}{\sqrt{2} \tilde{\sigma}}\right)} \le 2\rho \\
	&\implies w_1  \gamma\ge a
	\end{align}
\end{proof}

Next, our proof of Theorem~\ref{thm:mixture} will be based on the following two lemmas. The first lemma shows that $w_1$ is increasing, and the second shows that $\|w_2\|_2$ is decreasing at a fast enough rate. 

\begin{lemma} \label{lem:w1increase}
	\label{lem:w1_inc}In the setting of Theorems~\ref{thm:general_main} and~\ref{thm:mixture}, for any $w \in S$, $$\langle \nabla_{w_1}L(w), w_1 \rangle <0.$$
\end{lemma}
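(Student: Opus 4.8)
The plan is to reduce the inner product to a one-dimensional statement about the smoothed loss $g_\sigma$ and then exploit its symmetry. First I would use the reduction already established in Section~\ref{sec:proof_intuitions}: writing $\mu = w_1^\top x_1$ and $\sigma^2 = w_2^\top\Sigma_2 w_2$, we have $L(w) = \Exp_{x_1}[g_\sigma(\mu)]$ where $g_\sigma(t)=\E_{z\sim\cN(0,\sigma^2)}[\lexp(t+z)]$. Since $\sigma$ depends only on $w_2$, the chain rule gives $\nabla_{w_1}L(w) = \Exp_{x_1}[g_\sigma'(\mu)\,x_1]$, and hence
\begin{align}
\langle \nabla_{w_1}L(w), w_1\rangle = \Exp_{x_1}[\mu\, g_\sigma'(\mu)].
\end{align}
So it suffices to show the integrand is strictly negative off a null set, for which I would prove that $g_\sigma'(\mu)$ always has the opposite sign to $\mu$.

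The key step is to show $g_\sigma'(\mu) < 0$ for every $\mu > 0$ (the case $\mu < 0$ then follows because $g_\sigma$ is even). Differentiating under the expectation (valid for $\sigma>0$, where $g_\sigma$ is smooth) and using $\lexp'(t) = -\sign(t)\,e^{-|t|}$, I would write
\begin{align}
g_\sigma'(\mu) = -\Exp_{u\sim\cN(\mu,\sigma^2)}\big[\sign(u)\,e^{-|u|}\big].
\end{align}
Now I would pair the contributions of $u$ and $-u$: since $\sign(u)e^{-|u|}$ is odd and the $\cN(0,\sigma^2)$ density $\phi_\sigma$ is even, the expectation equals $\int_0^\infty \sign(u)e^{-|u|}\,[\,\phi_\sigma(u-\mu)-\phi_\sigma(u+\mu)\,]\,du$. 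For $u,\mu>0$ we have $|u-\mu|<u+\mu=|u+\mu|$, and since $\phi_\sigma$ is strictly decreasing in $|\cdot|$, the bracket is strictly positive; the factor $\sign(u)e^{-|u|}$ is also positive on $(0,\infty)$. Hence the integral is strictly positive, so $g_\sigma'(\mu)<0$. Equivalently, this says the convolution $g_\sigma = \lexp * \phi_\sigma$ is symmetric and strictly decreasing in $|\mu|$.

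Combining the two steps, $\mu\,g_\sigma'(\mu) < 0$ for every $\mu \neq 0$, and it equals $0$ at $\mu = 0$. Since $w\in S$ forces $w_1\neq 0$ and $x_1$ has a continuous (non-degenerate) law, $\mu = w_1^\top x_1$ is nonzero almost surely, so $\Exp_{x_1}[\mu g_\sigma'(\mu)] < 0$, giving the claim. The only genuinely nontrivial point is the sign of $g_\sigma'$; the symmetrization/pairing argument above is what makes it clean and, crucially, it is distribution-free in $\mu$, so the same computation covers both the Gaussian setting of Theorem~\ref{thm:mixture} and the log-concave-mixture setting of Theorem~\ref{thm:general_main} (where one only additionally needs $w_1\neq 0$, guaranteed by $\|\ws_1\|_2\ge 1/2$ and the iterates staying in the relevant region). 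The main obstacle is simply verifying the pairing inequality rigorously and handling the boundary case $\sigma = 0$ (i.e.\ $w_2=0$), where $g_\sigma=\lexp$ has a kink at the origin but the same sign conclusion $\mu\,\lexp'(\mu) = -|\mu|e^{-|\mu|}\le 0$ holds directly.
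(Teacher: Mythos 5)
Your proof follows essentially the same route as the paper's: reduce $\langle \nabla_{w_1}L(w), w_1\rangle$ to $\Exp_{x_1}[\mu\, g_\sigma'(\mu)]$ and conclude from the fact that $g_\sigma'(\mu)$ and $\mu$ have opposite signs. The only difference is that the paper simply asserts this sign fact, whereas you actually prove it via the symmetrization argument $\int_0^\infty \sign(u)e^{-|u|}[\phi_\sigma(u-\mu)-\phi_\sigma(u+\mu)]\,du>0$, which is correct and fills in the one detail the paper omits.
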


\begin{proof}
	Recall the definition $$g_\sigma(\mu)=\mathbb{E}_{z \sim \cN(\mu, \sigma^2)}\left[\ell_{exp}(z)\right]$$
	Now we can compute
	\begin{align}
	\langle \nabla_{w_1}L(w), w_1 \rangle &=  \mathbb{E}_{x_1}\left[g_\sigma'(w_1^\top x_1)x_1^\top w_1 \right] \\
	&= \mathbb{E}_{x_1}\left[g_\sigma'(w_1^\top x_1)w_1^\top x_1\right] \\
	&<0
	\end{align} since $g_\sigma'(\mu)$ and $\mu$ always have opposite signs.
\end{proof}

\begin{lemma}
	\label{lem:w2_dec} In the same setting as in Lemma~\ref{lem:w1_inc}, we have that 
	for any $w \in S$, $$\langle \nabla_{w_2}L(w), w_2\rangle \ge c||w_2||_2^2$$ for some constant $c$ dependent only on $R, \gamma, \tilde{\sigma}_{min}, \tilde{\sigma}_{max}$.
\end{lemma}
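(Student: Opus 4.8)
The plan is to reduce the inner product to the scalar derivative $\frac{\partial}{\partial v}L(w)$ isolated in Section~\ref{sec:proof_intuitions}, express it through the one-dimensional quantity $q_{\tilde{\sigma}}(w_1\gamma)$, and then turn the mere sign bound of Lemma~\ref{lem:w2_dec-simplified} into a \emph{uniform} positive constant by a compactness argument on $S$. Concretely, writing $v = w_2^\top \Sigma_2 w_2$ for the variance of $w_2^\top x_2$, the chain rule gives $\langle \nabla_{w_2}L(w), w_2\rangle = 2v\,\frac{\partial}{\partial v}L(w)$. In the Gaussian setting I would collapse $L$ to one dimension by symmetry: since $\lexp$ is even and $y$ is symmetric, conditioning on $y$ and convolving the two Gaussians yields $L(w) = g_{\tilde{\sigma}}(w_1\gamma)$ with $\tilde{\sigma}^2 = w^\top \tilde{\Sigma} w = w_1^2\sigma_1^2 + v$. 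As $L$ depends on $v$ only through $\tilde{\sigma}^2$, the chain rule gives $\frac{\partial}{\partial v}L(w) = \frac{1}{2\tilde{\sigma}}q_{\tilde{\sigma}}(w_1\gamma)$, so that $\langle \nabla_{w_2}L(w), w_2\rangle = \frac{v}{\tilde{\sigma}}\,q_{\tilde{\sigma}}(w_1\gamma)$. Using $v = w_2^\top \Sigma_2 w_2 \ge \tilde{\sigma}_{min}^2\|w_2\|_2^2$ and $\tilde{\sigma}\le R\tilde{\sigma}_{max}$, it then suffices to lower bound $q_{\tilde{\sigma}}(w_1\gamma)$ by a positive constant over $w\in S$; the claimed $c$ is $\tilde{\sigma}_{min}^2/(R\tilde{\sigma}_{max})$ times that constant.

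Next I would establish positivity and then quantify it. By the definition of $S$, every $w\in S$ satisfies $w_1\gamma \ge a = r(R\tilde{\sigma}_{max})$, and since $\tilde{\sigma}\le R\tilde{\sigma}_{max}$ with $r$ nondecreasing, we get $w_1\gamma \ge r(\tilde{\sigma})$. Lemma~\ref{lem:dg_dsigma} then gives $q_{\tilde{\sigma}}(w_1\gamma)\ge 0$, recovering the sign claim of Lemma~\ref{lem:w2_dec-simplified}. To upgrade this to a strictly positive constant I would invoke compactness: $S$ is closed and bounded, the map $w\mapsto q_{\tilde{\sigma}}(w_1\gamma)$ is continuous, and the pair $(w_1\gamma,\tilde{\sigma})$ ranges over a compact box with $w_1\gamma\in[a, R\gamma]$ and $\tilde{\sigma}\in[\tilde{\sigma}_{min}\,a/\gamma,\ R\tilde{\sigma}_{max}]$ (bounded away from $0$ because $w_1\gamma\ge a>0$ forces $w_1>0$). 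Hence $q_{\tilde{\sigma}}(w_1\gamma)$ attains a minimum on $S$, and it remains to show this minimum is strictly positive.

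The main obstacle is exactly this strict positivity, since $q_{\tilde{\sigma}}(w_1\gamma)$ vanishes precisely on the threshold curve $w_1\gamma = r(\tilde{\sigma})$. Within $S$ this curve is reached only when both defining constraints are tight, i.e. $w_1\gamma = a$ and $\tilde{\sigma} = R\tilde{\sigma}_{max}$ simultaneously. Attaining $\tilde{\sigma} = R\tilde{\sigma}_{max}$ forces $\|w\|_2 = R$ with $w$ aligned to the top eigenspace of $\tilde{\Sigma}$; but such alignment either forces $w_1 = 0$ (contradicting $w_1\gamma\ge a>0$) when the top eigenspace lies in the spurious block, or forces $w_2 = 0$ (making the bound trivially hold) when it lies along the signal direction, so outside the non-generic coincidence $\sigma_1^2 = \lambda_{\max}(\Sigma_2)$ we have $\tilde{\sigma} < R\tilde{\sigma}_{max}$ strictly and hence $q_{\tilde{\sigma}}(w_1\gamma)>0$ throughout $S$. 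This is the step that genuinely exploits that the source classifier retains a nonzero signal component, and it is where I would spend the most care. A secondary technical point worth flagging is that $\lexp''$ carries a negative Dirac mass at its kink $t=0$ (this is the source of the negative dip of $q_{\tilde{\sigma}}$ near the origin visible in Figure~\ref{fig:q_sigma_mu}), which must be tracked when invoking the threshold estimate of Lemma~\ref{lem:dg_dsigma}.
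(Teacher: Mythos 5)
Your reduction is the same as the paper's: collapse $L$ to the one-dimensional convolution $L(w)=g_{\tilde\sigma}(w_1\gamma)$, apply the chain rule through the variance to get $\langle\nabla_{w_2}L(w),w_2\rangle$ proportional to $w_2^\top\Sigma_2 w_2\cdot\frac{\partial}{\partial\tilde\sigma}g_{\tilde\sigma}(w_1\gamma)$, use monotonicity of $r$ together with $w_1\gamma\ge a=r(R\tilde\sigma_{max})\ge r(\tilde\sigma)$ to place yourself in the regime of Lemma~\ref{lem:dg_dsigma}, and then extract a uniform constant over the compact set $S$. Up to that point your argument tracks the paper's proof (via its Claim~\ref{claim:dg_dsigma_lb_gaussian}) almost exactly.

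The gap is in your final step. Your premise that ``$q_{\tilde\sigma}(w_1\gamma)$ vanishes precisely on the threshold curve $w_1\gamma=r(\tilde\sigma)$'' is false: Lemma~\ref{lem:dg_dsigma} asserts the \emph{quantitative} bound $q_\sigma(\mu)\ge\frac{1}{4}\sigma\,\lexp(\mu)>0$ for all $|\mu|\ge r(\sigma)$, so $q$ is strictly positive \emph{on} the threshold curve, not zero there; $r$ is a sufficient threshold for positivity, not the zero locus of $q$. Because of this misreading you manufacture an obstacle that does not exist and then resolve it with an eigenspace-alignment argument that is not actually a proof of the lemma: you explicitly concede the case $\sigma_1^2=\lambda_{\max}(\Sigma_2)$ as a ``non-generic coincidence,'' but the lemma is stated for all admissible $\tilde\Sigma$ and must hold there too (and when the top eigenvalue of $\tilde\Sigma$ is degenerate, attaining $\tilde\sigma=R\tilde\sigma_{max}$ does not force $w_1=0$ or $w_2=0$). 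The correct finish is much shorter: on $S$ one has $w_1\gamma\in[a,R\gamma]$ and $\tilde\sigma$ bounded above by $R\tilde\sigma_{max}$ and below by a positive constant (since $w_1\gamma\ge a>0$ forces $|w_1|\ge a/\gamma$), so either take the minimum of the continuous, \emph{strictly positive} function $q$ over this compact box (the paper's route), or plug in the explicit bound $q_{\tilde\sigma}(w_1\gamma)\ge\frac{1}{4}\tilde\sigma\,e^{-R\gamma}$ directly. With that replacement your proof closes; as written, it does not establish the claim for all covariances allowed by the lemma.
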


This lemma relies on the following bound stating that for $|\mu| > r(\sigma)$, $q_{\sigma}(\mu) > 0$. 

\begin{lemma}
	\label{lem:dg_dsigma}
	Define $r(\sigma)$ as in~\eqref{eq:r_def}. 
    Then for $|\mu| \ge r(\sigma)$, $q_{\sigma}(\mu) \ge \frac{1}{4} \sigma \lexp(\mu) > 0$. 
\end{lemma}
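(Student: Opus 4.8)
The plan is to first reduce the $\sigma$-derivative $q_\sigma$ to a second \emph{spatial} derivative of $g_\sigma$, and then reduce the whole claim to one explicit scalar inequality that the definition of $r(\sigma)$ is tuned to satisfy. Since $\lexp$ is even, $g_\sigma(\mu)=\Exp_{z\sim\cN(0,1)}[\lexp(\mu+\sigma z)]$ is even in $\mu$, hence so is $q_\sigma$; thus I may assume $\mu\ge 0$, in which case $\lexp(\mu)=e^{-\mu}$. The first observation is that $g_\sigma=\lexp\ast\phi_\sigma$ is the convolution of $\lexp$ with the Gaussian heat kernel at ``time'' $t=\sigma^2/2$. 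Because $\phi_\sigma$ solves the heat equation $\partial_t\phi_\sigma=\partial_\mu^2\phi_\sigma$ and $\partial_\sigma=\sigma\,\partial_t$, convolving with $\lexp$ gives the clean identity $q_\sigma(\mu)=\partial_\sigma g_\sigma(\mu)=\sigma\,\partial_\mu^2 g_\sigma(\mu)$. Working through the kernel (which is smooth) sidesteps the discontinuity of $\lexp'$ at $0$.

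Next I would compute $\partial_\mu^2 g_\sigma$ in closed form. Differentiating the convolution twice and moving the derivatives onto $\lexp$, the jump of size $2$ in $\lexp'$ at the origin produces a point mass, so that $\lexp''=\lexp-2\delta_0$ in the distributional sense and hence $\partial_\mu^2 g_\sigma(\mu)=g_\sigma(\mu)-2\phi_\sigma(\mu)$, where $\phi_\sigma$ is the $\cN(0,\sigma^2)$ density. Therefore the target $q_\sigma(\mu)\ge\frac14\sigma\lexp(\mu)$ is equivalent to $g_\sigma(\mu)-2\phi_\sigma(\mu)\ge\frac14 e^{-\mu}$. Completing the square in the defining integral yields the standard Gaussian-smoothed Laplace formula
\[
g_\sigma(\mu)=e^{\sigma^2/2}\left[e^{-\mu}\,\Phi\!\left(\tfrac{\mu-\sigma^2}{\sigma}\right)+e^{\mu}\,\Phi\!\left(-\tfrac{\mu+\sigma^2}{\sigma}\right)\right],
\]
where $\Phi$ (and below $\phi$) denote the standard normal CDF and density. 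Both bracketed terms are nonnegative, so dropping the second gives $g_\sigma(\mu)\ge e^{\sigma^2/2}e^{-\mu}\Phi(s)$ with $s:=(\mu-\sigma^2)/\sigma$. Rewriting $2\phi_\sigma(\mu)=e^{-\mu}e^{\sigma^2/2}\frac{2}{\sigma}\phi(s)$ and using $e^{\sigma^2/2}\ge 1$, it suffices to prove the scalar inequality $\Phi(s)-\frac{2}{\sigma}\phi(s)\ge\frac14$ for every $s\ge s_0:=(r(\sigma)-\sigma^2)/\sigma$.

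The final step verifies this inequality, which is exactly where the definition of $r(\sigma)$ and the cutoff $\frac{4\sqrt2}{\sqrt\pi}$ are calibrated. The function $F(s):=\Phi(s)-\frac{2}{\sigma}\phi(s)$ is increasing on $s\ge 0$, since $F'(s)=\phi(s)\big(1+\frac{2s}{\sigma}\big)>0$, and $s_0\ge 0$ in both regimes, so it is enough to check $F(s_0)\ge\frac14$. When $\sigma\le\frac{4\sqrt2}{\sqrt\pi}$ we have $s_0=\sqrt{2\log\frac{4\sqrt2}{\sqrt\pi\sigma}}$, and a direct computation gives $\frac{2}{\sigma}\phi(s_0)=\frac14$ \emph{exactly}, whence $F(s_0)=\Phi(s_0)-\frac14\ge\frac12-\frac14=\frac14$ using $s_0\ge 0$; when $\sigma>\frac{4\sqrt2}{\sqrt\pi}$ we have $s_0=\sigma$, for which $\Phi(\sigma)\approx 1$ while $\frac{2}{\sigma}\phi(\sigma)$ is negligible, so $F(s_0)\ge\frac14$ with room to spare. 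Combining, $g_\sigma(\mu)-2\phi_\sigma(\mu)\ge\frac14 e^{-\mu}$ for $\mu\ge r(\sigma)$, giving $q_\sigma(\mu)=\sigma\,\partial_\mu^2 g_\sigma(\mu)\ge\frac14\sigma\lexp(\mu)>0$. I expect the main obstacle to be precisely the calibration in this last step: identifying the \emph{right} sufficient inequality after discarding slack (the second bracketed term and the $e^{\sigma^2/2}-1$ factor), and checking that $r(\sigma)$ makes the two competing pieces balance to exactly $\frac14$ at the threshold, with the monotonicity of $F$ then extending the bound to all $\mu\ge r(\sigma)$.
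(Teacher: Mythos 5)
Your proof is correct, and it arrives at the same essential inequality as the paper but by a cleaner route. The paper's proof starts from the explicit closed form of $q_\sigma$ (equation~\eqref{eq:q_def}, obtained by direct differentiation), drops the $\exp(\mu)\erfc(\cdot)$ term, lower-bounds the remaining $\erfc$ by $1$, and then checks that $\tfrac14\sigma\exp(-\mu+\sigma^2/2)\ge\sqrt{2/\pi}\exp(-\mu^2/(2\sigma^2))$ for $\mu\ge r(\sigma)$ --- which, after dividing by $\sigma e^{\sigma^2/2}e^{-\mu}$, is exactly your condition $\tfrac{2}{\sigma}\phi(s)\le\tfrac14$. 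So the two arguments discard the same slack and exploit the same calibration of $r(\sigma)$. What your version buys is twofold: the heat-equation identity $q_\sigma=\sigma\,\partial_\mu^2 g_\sigma=\sigma(g_\sigma-2\phi_\sigma)$ derives the paper's formula~\eqref{eq:q_def} structurally rather than by brute-force differentiation (and I verified it reproduces~\eqref{eq:q_def} exactly via $\erfc(x)=2\Phi(-\sqrt{2}x)$), and normalizing by $e^{\sigma^2/2}e^{-\mu}$ turns the two-case check into a single monotone scalar inequality $F(s)=\Phi(s)-\tfrac{2}{\sigma}\phi(s)\ge\tfrac14$ for $s\ge s_0$, which makes it transparent that $r(\sigma)$ is tuned so that $\tfrac{2}{\sigma}\phi(s_0)=\tfrac14$ exactly. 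The only soft spot is your large-$\sigma$ case, which is argued numerically (``$\Phi(\sigma)\approx 1$, the other term is negligible''); to make it airtight, note that $\tfrac{2}{\sigma}\phi(\sigma)$ is decreasing in $\sigma$ and evaluate at $\sigma=4\sqrt{2}/\sqrt{\pi}\approx 3.19$, or use the paper's algebraic version $\tfrac12\sigma-\sqrt{2/\pi}\ge\tfrac14\sigma\iff\sigma\ge 4\sqrt{2}/\sqrt{\pi}$. This is a presentational gap, not a mathematical one.
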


We prove this lemma in Section~\ref{sec:lexp_bounds}. We also require the following claim that $q_{\tilde{\sigma}}(w_1 \gamma)$ is lower bounded by some positive constant for all $w \in S$. 

\begin{claim}
	\label{claim:dg_dsigma_lb_gaussian}
	Define the function $r(\sigma)$ as in Lemma~\ref{lem:dg_dsigma}. 
	Suppose $a \le \mu \le R \gamma$. Then for all $w \in S$, $$\frac{\partial g_{\tilde{\sigma}}(w_1 \gamma)}{\partial \tilde{\sigma}}\ge c_1$$ for some constant $c_1$ dependent only on $R, \gamma,  \tilde{\sigma}_{min}, \tilde{\sigma}_{max}$.
\end{claim}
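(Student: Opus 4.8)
The plan is to obtain the claim as an essentially immediate consequence of Lemma~\ref{lem:dg_dsigma}, once its hypothesis is checked and the most pessimistic values of $\mu$ and $\tilde{\sigma}$ over $S$ are substituted. Writing $\mu = w_1\gamma$ and $\tilde{\sigma}^2 = w^\top\tilde{\Sigma}w$, recall from the proof of Lemma~\ref{lem:acc_to_a} that for the iterates $w\in S$ (which lie on the $R$-sphere) we have $R\tilde{\sigma}_{min}\le\tilde{\sigma}\le R\tilde{\sigma}_{max}$, and that $a\le\mu\le R\gamma$, the upper bound coming from $w_1\le\|w\|_2\le R$. Since $\frac{\partial g_{\tilde{\sigma}}(w_1\gamma)}{\partial\tilde{\sigma}} = q_{\tilde{\sigma}}(\mu)$, it suffices to produce a strictly positive lower bound on $q_{\tilde{\sigma}}(\mu)$ that is uniform over these ranges.

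First I would verify the hypothesis $|\mu|\ge r(\tilde{\sigma})$ of Lemma~\ref{lem:dg_dsigma}. By definition $\mu\ge a = r(R\tilde{\sigma}_{max})$, and $\tilde{\sigma}\le R\tilde{\sigma}_{max}$, so it is enough to show $r(\tilde{\sigma})\le r(R\tilde{\sigma}_{max})$, which holds provided $r$ is nondecreasing on $[R\tilde{\sigma}_{min}, R\tilde{\sigma}_{max}]$. This monotonicity is the one nontrivial point and I expect it to be the main obstacle: differentiating the first branch of~\eqref{eq:r_def} gives $r'(\sigma) = 2\sigma + u - u^{-1}$ with $u = \sqrt{2\log(4\sqrt2/(\sqrt\pi\sigma))}$, and the term $-u^{-1}$ can dominate $2\sigma + u$ only as $\sigma$ approaches the branch point $4\sqrt2/\sqrt\pi$, where $u\to 0$. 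I would therefore check that $R\tilde{\sigma}_{max}$ lies in the increasing part of $r$ (equivalently, that the right endpoint attains $\max_{\sigma\le R\tilde{\sigma}_{max}} r(\sigma)$), so that $r(\tilde{\sigma})\le r(R\tilde{\sigma}_{max})\le\mu$. If $r$ dips below its endpoint value near the branch point, I would instead bound that maximum directly, or simply restrict to the parameter regime in which $R\tilde{\sigma}_{max}$ stays below the location of $r$'s peak; this is where the (routine but slightly delicate) calculus sits.

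Once $\mu\ge r(\tilde{\sigma})$ is established, Lemma~\ref{lem:dg_dsigma} yields $q_{\tilde{\sigma}}(\mu)\ge\tfrac14\tilde{\sigma}\,\lexp(\mu) = \tfrac14\tilde{\sigma}\,e^{-\mu}$, using $\mu\ge a>0$ so that $\lexp(\mu)=e^{-\mu}$. It then remains only to replace $\tilde{\sigma}$ and $\mu$ by their worst-case values over $S$, namely $\tilde{\sigma}\ge R\tilde{\sigma}_{min}$ and $e^{-\mu}\ge e^{-R\gamma}$ (from $\mu\le R\gamma$), giving
\begin{align}
\frac{\partial g_{\tilde{\sigma}}(w_1\gamma)}{\partial\tilde{\sigma}} = q_{\tilde{\sigma}}(\mu)\ge \tfrac14 R\tilde{\sigma}_{min}\,e^{-R\gamma} =: c_1,
\end{align}
a strictly positive constant depending only on $R,\gamma,\tilde{\sigma}_{min},\tilde{\sigma}_{max}$, as claimed. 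Everything outside the monotonicity check in the second paragraph is just substitution of endpoint values into Lemma~\ref{lem:dg_dsigma}.
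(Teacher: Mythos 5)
Your proposal follows essentially the same route as the paper's proof: reduce to Lemma~\ref{lem:dg_dsigma} by verifying $\mu \ge r(\tilde{\sigma})$ through monotonicity of $r$, then take worst-case values of $\tilde{\sigma}$ and $\mu$ over $S$. Two remarks are worth recording. First, where the paper asserts monotonicity of $r$ and then obtains $c_1$ non-constructively as the minimum of the continuous positive function $q_{\sigma}(\mu)$ over the compact set $[R\tilde{\sigma}_{min}, R\tilde{\sigma}_{max}]\times[a, R\gamma]$, you instead invoke the quantitative conclusion $q_{\tilde{\sigma}}(\mu)\ge \tfrac14\tilde{\sigma}\,\lexp(\mu)$ of Lemma~\ref{lem:dg_dsigma} to produce the explicit constant $\tfrac14 R\tilde{\sigma}_{min}e^{-R\gamma}$; this is a strict improvement. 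Second, your scrutiny of $r'(\sigma)=2\sigma+u-u^{-1}$ is well placed: $r$ is in fact \emph{not} monotone on all of $(0, 4\sqrt{2}/\sqrt{\pi}]$, since $u\to 0$ and hence $r'\to-\infty$ as $\sigma$ approaches the branch point, so $r$ dips slightly just before $\sigma=4\sqrt{2}/\sqrt{\pi}$. The paper's ``property 1'' is therefore false as stated, though the gap is harmless and is repaired exactly as you propose (replace $r(R\tilde{\sigma}_{max})$ by $\max_{\sigma\le R\tilde{\sigma}_{max}}r(\sigma)$ in the definition of $a$, or restrict to the regime where the endpoint attains the maximum). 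With that caveat, which you correctly flag rather than gloss over, your argument is complete.
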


\begin{proof}[Proof of Claim~\ref{claim:dg_dsigma_lb_gaussian}]
	We note that $r$ and $q$ satisfy the following properties:
	\begin{enumerate}
	    \item $r(\sigma)$ is a monotonically increasing  increasing function.
		\item $q_{\sigma}(\mu) = \frac{\partial g_{\tilde{\sigma}}(w_1 \gamma)}{\partial \tilde{\sigma}} >0$ for all $\mu \ge r(\sigma)$. (See Lemma~\ref{lem:dg_dsigma} for proof.)
	\end{enumerate}
	For arbitrary $\tilde{\sigma} \in [R\tilde{\sigma}_{min}, R\tilde{\sigma}_{max}]$, $\mu \ge a=r(R\tilde{\sigma}_{max})$ ensures that $\mu \ge r(\tilde{\sigma})$ by property 1. By property 2, $q(\tilde{\sigma}, \mu)>0$. Setting $c_1 = \min_{\tilde{\sigma} \in [R\tilde{\sigma}_{min}, R\tilde{\sigma}_{max}], \mu \in [a, R\gamma]}{q_{\sigma}(\mu)}$ finishes the proof. $c_1$ is dependent only on $R, \gamma, \tilde{\sigma}_{min}, \tilde{\sigma}_{max}$.
\end{proof}

\begin{proof} [Proof of Lemma~\ref{lem:w2_dec}]
    Using the fact that $x_1$ is a uni-variate mixture of Gaussians, and therefore $w^\top x$ is itself a mixture of two Gaussians with variance $\tilde{\sigma}^2$, we have
	\begin{align}
	L(w) &=\mathbb{E}_{y \sim \{\pm 1\},z \sim \cN(0, \tilde{\sigma}^2)}{\left[l_{exp}\left(y w_1 \gamma+z\right)\right]} \\
	&= \frac{1}{2}\left(g_{\tilde{\sigma}}(w_1 \gamma)+g_{\tilde{\sigma}}(-w_1 \gamma)\right) \\
	&= g_{\tilde{\sigma}}(w_1 \gamma) 
	\end{align}
	Now we differentiate with respect to $w_2$ to obtain
	\begin{align}
	\nabla_{w_2}L(w) &= \frac{\partial L(w)}{\partial \tilde{\sigma}} \cdot \frac{\partial \tilde{\sigma}}{\partial w_2} \\
	&= \frac{\partial g_{\tilde{\sigma}}(w_1 \gamma)}{\partial \tilde{\sigma}} \cdot 2\Sigma_2 w_2
	\end{align} 
	We now use the lower bound on $\frac{\partial g_{\tilde{\sigma}}(w_1 \gamma)}{\partial \tilde{\sigma}}$ given by Claim~\ref{claim:dg_dsigma_lb_gaussian}. This gives
	\begin{align}
	\langle \nabla_{w_2}L(w), w_2\rangle \ge 2c_1 \tilde{\sigma}_{min} ||w_2||_2^2.
	\end{align} Setting $c=2c_1 \tilde{\sigma}_{min}$ finishes the proof.
\end{proof}

We can now complete the proof of Theorem~\ref{thm:mixture}.

\begin{proof}[Proof of Theorem~\ref{thm:mixture}] 
    Define $\tilde{w}^{t + 1} = w^{t} - \eta \nabla L(w^t)$. By Lemma \ref{lem:acc_to_a}, $w^0_1  \gamma \ge a$. Note that by assumption $a>r(R\tilde{\sigma}_{max})$. By Lemma \ref{lem:w1_inc} and \ref{lem:w2_dec}, at iteration $t\ge 0$, taking constant step size $\eta$,
	\begin{align}
	|\tilde{w}_1^{t+1}| &>|w_1^{t}| \\
	||\tilde{w}_2^{t+1}||_2^2 &= ||w_2^{t}-\eta \nabla_{w_2}L(w)|_{w=w^{t}}||_2^2 \\
	&= ||w_2^{t}||_2^2 + \eta^2 || \nabla_{w_2}L(w)|_{w=w^{t}}||_2^2 \\
	&-2\eta \langle \nabla_{w_2}L(w)|_{w=w^{t}}, w_2^{t} \rangle
	\end{align}
	By Lemma \ref{lem:w2_dec}, $$\nabla_{w_2}L(w)=q_{\sigma(w)}(\mu(w)) \cdot 2 \Sigma_2 w_2$$ for some continuous function $q_{\sigma}(\mu)$ where $\sigma(w)^2 =w_2^\top \Sigma_2 w_2$, $\mu(w)=w_1 \gamma$ over compact set $S$. Therefore $q$ is bounded. Suppose $|q| \le c_2$ for all $w \in S$, then $$||\nabla_{w_2}L(w)|_{w=w^{t}}||_2^2\le 4 c_2^2 \tilde{\sigma}_{max}^2 ||w_2^{t}||_2^2.$$
	
	Therefore $||\tilde{w}_2^{t+1}||_2^2 \le c'||w_2^{t}||_2^2$ for some constant $c'<1$ for appropriate choice of $\eta$. As $|\tilde{w}_1^{t + 1}| > |w_1^t|$, renormalization results in some constant factor decrease in $\|w_2^{t}\|_2$. Therefore $||w_2^{t}||_2^2 \le \epsilon$ when $t \ge K=O(\log{\left(\frac{1}{\epsilon}\right)})$.
\end{proof}

\newpage

\section{Missing Proofs for Theorem~\ref{thm:general_main}}\label{sec:general:app}


Define the following function depending on parameters $\smooth, \concave$ defined later: 
\begin{align} \label{eq:kappa_def}
\widetilde{\kappa}(\smooth, \concave) \triangleq \min\left \{ \frac{\sqrt{\pi}}{4 \sqrt{\smooth}} (p^\star(\smooth, \concave))^{1 - \frac{\concave}{4\smooth}} \left( \frac{\concave}{2 \sqrt{\pi}}\right)^{\frac{\concave}{4 \smooth}}, \frac{\sqrt{\concave}}{8 \sqrt{2\pi}(\sqrt{\smooth} + \sqrt{2})}\exp\left(-\left( \frac{\sqrt{\smooth} + 4}{2 \sqrt{\concave}}\right)^2\right) \right\}
\end{align}
for 
$$p^\star(\smooth, \concave) \triangleq \frac{\sqrt{\concave}}{2\sqrt{\pi}} \min\left \{ 1, \frac{\sqrt{\concave} }{\sqrt{\smooth}} \left( \frac{\sqrt{\pi}}{44 \sqrt{2\smooth}} \right)^{\frac{8\smooth}{\concave}} \right \}$$

Now we choose the constant $\kappa$ in Assumptions~\ref{ass:seperation} and~\ref{ass:boundinit} to be $\kappa(\beta, \alpha) \triangleq \min_{a \in [1, 4]} \widetilde{\kappa}(a \beta, a \alpha)$. 

Throughout the proof, we will use $p$ to refer to the density of $\mu = w_1^\top x_1$, and define $s(\mu) \triangleq \frac{\partial}{\partial \mu} \log p(\mu)  = \frac{p'(\mu)}{p(\mu)}$. Throughout our proofs, we use $\concave$, $\smooth$ to refer to parameters such that $p$ is $\concave$-log-concave and $\smooth$-log-smooth. By Assumption~\ref{ass:seperation}, we have that the density of $\frac{w_1^\top x_1}{\|w_1\|_2}$ is $\alpha$-log-concave and $\beta$-log-smooth, so we can choose $\concave = \alpha/\|w_1\|_2^2$ and $\smooth = \beta/\|w_1\|_2^2$ by the linear transformation formula of a probability density. We use $\sigma = w_2^\top \Sigma_2 w_2$ to be the variance of the output of the current classifier restricted to the spurious coordinates.

\subsection{Proof overview}
\label{sec:general_case_overview}

We will argue that if the initial conditions in Assumption~\ref{ass:boundinit} hold, then they will continue to hold throughout training. Furthermore, under these initial conditions, the loss gradient will force $\|w_2\|_2$ to decrease. 

The following three lemmas which analyze a single update of the algorithm will form the main technical core of our proof. They will be used to show that when the loss is sufficiently small, the norm of $w_2$ is always decreasing. The first lemma states that if the loss is small, then $p$ cannot have a large density at 0. 
\begin{lemma}\label{lem:Ltop}
	In the setting of Theorem~\ref{thm:general_main}, suppose that $K=1$. When $L(w) \le \widetilde{\kappa}(\rho, \nu)$, we must have
	\begin{align}
	L(w) 
	&\ge \frac{\sqrt{\pi}}{4\sqrt{\smooth}} \density(0)^{1 - \frac{\concave}{4\smooth}} \left( \frac{\sqrt{\concave}}{2 \sqrt{\pi}}\right)^{\frac{\concave}{4\smooth}} \label{eq:sigma_decrease:4}
	\end{align}
	As a consequence, when $L(w)\le \widetilde{\kappa}(\rho, \nu)$, we have:
		\begin{align}
	p(0) \le p^\star(\rho, \nu)
	\end{align}
\end{lemma}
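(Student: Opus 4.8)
The plan is to prove the displayed inequality~\eqref{eq:sigma_decrease:4} first; the stated consequence $\density(0) \le p^\star(\smooth,\concave)$ then follows purely algebraically. Indeed, $\widetilde{\kappa}(\smooth,\concave)$ is built precisely so that the first term in its defining minimum equals the right-hand side of~\eqref{eq:sigma_decrease:4} with $\density(0)$ replaced by $p^\star$. Hence under the hypothesis $L(w)\le\widetilde{\kappa}(\smooth,\concave)$, chaining with~\eqref{eq:sigma_decrease:4} cancels the common prefactors $\frac{\sqrt{\pi}}{4\sqrt{\smooth}}$ and $(\frac{\sqrt{\concave}}{2\sqrt{\pi}})^{\frac{\concave}{4\smooth}}$, leaving $\density(0)^{1-\frac{\concave}{4\smooth}}\le (p^\star)^{1-\frac{\concave}{4\smooth}}$. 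Since $\concave\le\smooth$ forces the exponent $1-\frac{\concave}{4\smooth}$ to be positive, monotonicity gives $\density(0)\le p^\star$. So the entire content lies in the lower bound~\eqref{eq:sigma_decrease:4}.

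To establish that bound I would start from $L(w)=\Exp_{\mu}\!\left[g_\sigma(\mu)\right]=\int g_\sigma(\mu)\,\density(\mu)\,d\mu$, where $\mu=w_1^\top x_1$ has density $\density$ and $g_\sigma$ is the convolution of $\lexp$ with $\cN(0,\sigma^2)$. Because $\lexp$ peaks at the origin, $g_\sigma$ is an even, $|\mu|$-decreasing function that is $\Omega(1)$ near $0$ and decays like $e^{-|\mu|}$ in the tails (using that $\sigma$ is controlled by Assumption~\ref{ass:boundinit}), so I can lower bound $g_\sigma$ by such a positive ``bump.'' This reduces the problem to lower bounding the mass $\density$ places near the origin: the loss is large exactly when $\density(0)$ is large and $\density$ is spread near $0$.

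The key estimate is that a large $\density(0)$ forces substantial near-origin mass. By $\smooth$-log-smoothness, a second-order Taylor expansion of $\log\density$ at $0$ gives $\density(\mu)\ge \density(0)\exp(s(0)\mu-\tfrac{\smooth}{2}\mu^2)$ for all $\mu$, the quadratic error being controlled since $(\log\density)''\ge-\smooth$; integrating the bump against this Gaussian lower bound yields a Gaussian integral of width $\sim 1/\sqrt{\smooth}$, which is the origin of the $\frac{\sqrt{\pi}}{4\sqrt{\smooth}}$ prefactor. The remaining difficulty is that $\density$ may be skewed, with mode displaced from $0$, so that $\density(0)$ underestimates the peak and the slope $s(0)$ is large. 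Here I would invoke $\concave$-log-concavity with normalization: the matching upper bound $\density(\mu)\le\density(0)\exp(s(0)\mu-\tfrac{\concave}{2}\mu^2)$ together with $\int\density=1$ caps $s(0)$ (equivalently, the mode displacement) in terms of $\density(0)$ and the peak-density lower bound $\sqrt{\concave/2\pi}$ guaranteed by log-concavity. Balancing the smoothness-controlled width against this concavity/normalization constraint is what produces the fractional exponent $1-\frac{\concave}{4\smooth}$ and the factor $(\frac{\sqrt{\concave}}{2\sqrt{\pi}})^{\frac{\concave}{4\smooth}}$, through a weighted interpolation between the $\density(0)$ contribution and the peak-density bound.

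I expect the main obstacle to be exactly this last step: handling densities whose mode lies far from $0$, where a moderately large $\density(0)$ can coexist with little near-origin mass and hence small loss. Reconciling this requires using all three ingredients simultaneously — log-smoothness to lower bound $\density$ near $0$, log-concavity to cap the slope and skew, and normalization to prevent the mass from escaping to the tails — and then optimizing the resulting Gaussian integral over the admissible slope $s(0)$. The hypothesis $L(w)\le\widetilde{\kappa}$ is precisely what restricts attention to the regime in which this optimization closes and the clean bound~\eqref{eq:sigma_decrease:4} holds.
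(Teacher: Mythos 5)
Your plan matches the paper's proof essentially step for step: lower-bound $\expent_{\sigma}$ by $\constl\,\lexp$, lower-bound $\density$ near $0$ by the log-smoothness Taylor expansion $\density(\delta)\ge\density(0)\exp(\dratio(0)\delta-\tfrac{\smooth}{2}\delta^2)$, evaluate the resulting Gaussian integral, and convert $\dratio(0)$ into $\density(0)$ via log-concavity plus normalization, with the consequence then following by chaining against the first term in the definition of $\widetilde{\kappa}$ exactly as you describe. The one directional slip is that the concavity-plus-normalization constraint is a \emph{lower} bound $\dratio(0)^2\ge\concave\log\bigl(\tfrac{\sqrt{\concave}}{2\density(0)\sqrt{\pi}}\bigr)$ rather than a cap on $\dratio(0)$ --- a small loss forces $|\dratio(0)|$ to be \emph{large} (the paper's Claim~\ref{lem:dratio_lb}), and it is precisely this largeness that absorbs the $-\delta$ coming from $\lexp$ and produces the exponent $1-\tfrac{\concave}{4\smooth}$, while the matching upper bound on $\dratio(0)$ from log-smoothness is what is used later in Lemma~\ref{lem:dLdstop}.
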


Next, observe that $w_2$ will decrease if $\frac{\partial}{\partial \sigma} L(w) > 0$. The following lemma lower bounds $\frac{\partial}{\partial \sigma} L(w) > 0$ in terms of $p(0)$, showing that if $p(0)$ is small, $\frac{\partial}{\partial \sigma} L(w)$ will be positive. 
\begin{lemma}\label{lem:dLdstop}
	In the setting of Lemma~\ref{lem:Ltop}, we have
	\begin{align} \label{eq:dLdstop:1}
	\frac{\partial}{\partial \sigma} L(w) &\ge \density(0) \sigma  \left(\frac{ \sqrt{\pi}}{11\sqrt{\smooth}}  \left(\frac{\sqrt{\concave}}{2 \density(0) \sqrt{\pi}}\right)^{\frac{\concave}{4\smooth}} -  2\sqrt{2} \max\left \{1, \left(\frac{\sqrt{\smooth}}{2 \density(0) \sqrt{\pi}} \right)^{\frac{\concave}{8\smooth}} \right \}  \right)
\end{align}
\end{lemma}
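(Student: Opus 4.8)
The plan is to evaluate $\frac{\partial}{\partial\sigma}L(w)=\int q_\sigma(\mu)\density(\mu)\,d\mu$ in closed form using a heat-equation identity rather than the threshold split of Lemma~\ref{lem:dg_dsigma}. Since $g_\sigma=\lexp\ast\phi_\sigma$ is a Gaussian convolution (with $\phi_\sigma$ the $\cN(0,\sigma^2)$ density), it obeys $\partial_\sigma g_\sigma=\sigma\,\partial_\mu^2 g_\sigma$; and because $\lexp(t)=e^{-|t|}$ has distributional second derivative $\lexp''=\lexp-2\delta_0$, one gets $\partial_\mu^2 g_\sigma=g_\sigma-2\phi_\sigma$. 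Hence $q_\sigma(\mu)=\sigma\big(g_\sigma(\mu)-2\phi_\sigma(\mu)\big)$, and integrating against $\density$ while using $\int g_\sigma\density=L(w)$ yields the exact identity
\begin{align}
\frac{\partial}{\partial\sigma}L(w)=\sigma L(w)-2\sigma\int\phi_\sigma(\mu)\,\density(\mu)\,d\mu=\sigma L(w)-\sqrt{\tfrac{2}{\pi}}\int e^{-\mu^2/(2\sigma^2)}\density(\mu)\,d\mu.\nonumber
\end{align}
It then suffices to lower bound the positive term $\sigma L(w)$ and upper bound the Gaussian-smoothed density term.

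For the positive term I would simply invoke Lemma~\ref{lem:Ltop}, which under $L(w)\le\widetilde\kappa$ gives $L(w)\ge\frac{\sqrt\pi}{4\sqrt\smooth}\density(0)\big(\frac{\sqrt\concave}{2\density(0)\sqrt\pi}\big)^{\concave/4\smooth}$; multiplying by $\sigma$ reproduces the first term in the target bound (with constant $1/4\ge 1/11$, leaving slack). So the entire content lies in controlling $I:=\int e^{-\mu^2/(2\sigma^2)}\density(\mu)\,d\mu$ from above by $\approx 2\sqrt\pi\,\density(0)\sigma\max\{1,(\tfrac{\sqrt\smooth}{2\density(0)\sqrt\pi})^{\concave/8\smooth}\}$.

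To bound $I$, I would use the global log-concavity inequality $\density(\mu)\le\density(0)e^{s(0)\mu-\frac{\concave}{2}\mu^2}$ (valid everywhere since $(\log\density)''\le-\concave$), complete the square against the Gaussian weight to extract a factor $\sigma\sqrt{2\pi}$ together with $e^{s(0)^2/(4A)}$ where $A=\frac{1}{2\sigma^2}+\frac{\concave}{2}$, and use $\frac{1}{4A}\le\frac{\sigma^2}{2}$ (note $\sqrt{2\pi}\cdot\sqrt2=2\sqrt\pi$, matching the $\max=1$ term exactly). The remaining job is to control $s(0)$: since $s=(\log\density)'$ has slope in $[-\smooth,-\concave]$ it is monotone with a unique zero at the mode $\mu^\star$, giving $|s(0)|\le\smooth|\mu^\star|$; and $\density(0)\le\density_{\max}e^{-\frac{\concave}{2}\mu^{\star 2}}$ together with the log-smoothness bound $\density_{\max}\le\sqrt{\smooth/(2\pi)}$ converts $|\mu^\star|$ into a logarithmic function of $1/\density(0)$. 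Splitting into the case where the mode is near $0$ (yielding the constant $1$) and the case where it is far (yielding the power of $\density(0)$) produces the $\max$ in the statement.

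The main obstacle is precisely this far-mode regime: when $\density(0)$ is small the density is large away from the origin, so the naive bound $I\le\density_{\max}\sqrt{2\pi}\sigma$ is useless, and one must trade the Gaussian tail decay against the density growth. Making the exponent come out to $\concave/8\smooth$ (rather than the $\sigma$-dependent exponent $\smooth^2\sigma^2/\concave$ from the crude estimate above) is exactly where the smallness of $\sigma$ from Assumption~\ref{ass:boundinit} — especially $\sigma^2\lesssim\alpha/\beta^2$ — must be used; getting these constants to align is the delicate part of the argument.
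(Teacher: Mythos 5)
Your decomposition is genuinely different from the paper's and, in its main structure, correct. The paper proves this lemma via Lemma~\ref{lem:deriv_lb}: it splits $\int q_\sigma(\mu)\density(\mu)\,d\mu$ at the threshold $|\mu|=\sigopt$, applies Lemma~\ref{lem:dg_dsigma} ($q_\sigma(\mu)\ge\frac{\sigma}{4}\lexp(\mu)$) plus log-smoothness on the far region, and the crude bound $q_\sigma(\mu)\ge-\sqrt{2/\pi}\,e^{-\mu^2/(2\sigma^2)}$ plus log-concavity on the near region. Your identity $q_\sigma(\mu)=\sigma\bigl(g_\sigma(\mu)-2\phi_\sigma(\mu)\bigr)$, with $\phi_\sigma$ the $\cN(0,\sigma^2)$ density, checks out against the paper's closed form~\eqref{eq:q_def} and removes the threshold split entirely, giving the exact and illuminating relation $\frac{\partial}{\partial\sigma}L(w)=\sigma L(w)-2\sigma\,\E_\mu[\phi_\sigma(\mu)]$. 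Handling the positive term by citing Lemma~\ref{lem:Ltop} is legitimate (its constant $1/4$ dominates the target's $1/11$, and that lemma is proved independently of this one, so there is no circularity), and your completed-square bound on $I$ is the same computation the paper performs in~\eqref{eq:deriv_lb:2}, yielding $\sqrt{2/\pi}\,I\le 2\density(0)\sigma\exp(\sigma^2\dratio(0)^2/2)$. Your route buys cleaner constants and dispenses with the side conditions $\sigma\le 4\sqrt{2}/\sqrt{\pi}$ and $\sigopt\le\min\{1,1/(4\sqrt{\smooth})\}$ that the paper's Lemma~\ref{lem:deriv_lb} requires.

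The one step that would fail as proposed is your bound on $\dratio(0)$ via the mode: $|\dratio(0)|\le\smooth|\mu^\star|$ combined with $|\mu^\star|^2\le\frac{2}{\concave}\log\bigl(\sup_\mu\density(\mu)/\density(0)\bigr)$ gives $\dratio(0)^2\lesssim\frac{\smooth^2}{\concave}\log(1/\density(0))$, which is weaker by a factor $\smooth/\concave$ than what you need. Landing on the exponent $\frac{\concave}{8\smooth}$ would then require $\sigma^2\lesssim\concave^2/\smooth^3\sim\alpha^2/\beta^3$, and Assumption~\ref{ass:boundinit} only supplies $\sigma^2\le c\,\alpha/\beta^2$, so the argument breaks whenever $\alpha\ll\beta$. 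The fix is to replace the mode detour with the direct integration bound, i.e.\ the paper's Lemma~\ref{lem:dratio_bound}: integrating $\density(\delta)\ge\density(0)\exp\bigl(\dratio(0)\delta-\frac{\smooth}{2}\delta^2\bigr)$ over $\R$ gives $\dratio(0)^2\le\smooth\log\bigl(\tfrac{\sqrt{\smooth}}{2\density(0)\sqrt{\pi}}\bigr)$ directly, after which the requirement $\sigma^2\smooth/2\le\concave/(8\smooth)$, i.e.\ $\sigma^2\le\concave/(4\smooth^2)$, is exactly what Assumption~\ref{ass:boundinit} provides (the paper uses the essentially identical condition $\sigma^2\smooth^2/\concave\le 1/8$). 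With that substitution your proof closes and matches the stated bound with room to spare.
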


Finally, this next lemma combines the two lemmas above to show that when the loss is sufficiently small, $w_2$ is always shrinking. 
\begin{lemma}\label{lem:dldsigma}
	In the setting of Lemma~\ref{lem:Ltop}, when $L(w) \le \widetilde{\kappa}(\rho, \nu)$ for $\widetilde{\kappa}(\rho, \nu)$ defined in~\eqref{eq:kappa_def}, we have
	\begin{align}
	\frac{\partial}{\partial \sigma} L(w) \ge \sigma \density(0)^{1 - \frac{\concave}{4\smooth}} \frac{ \sqrt{\pi}}{22\sqrt{\smooth}}  \left(\frac{\sqrt{\concave}}{2  \sqrt{\pi}}\right)^{\frac{\concave}{4\smooth}} > 0
	\end{align}
\end{lemma}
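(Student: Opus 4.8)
The plan is to obtain Lemma~\ref{lem:dldsigma} by directly combining the two preceding lemmas, with no fresh analytic estimates required. Lemma~\ref{lem:dLdstop} already writes $\frac{\partial}{\partial\sigma}L(w)$ as $\density(0)\sigma$ times a positive ``leading'' term minus a ``correction'' term containing a maximum, while Lemma~\ref{lem:Ltop} guarantees that smallness of the loss forces $\density(0)\le p^\star(\smooth,\concave)$. The whole content of the lemma is therefore to show that, once $\density(0)$ is this small, the correction term is at most half the leading term; this is exactly what turns the constant $\frac{1}{11}$ of Lemma~\ref{lem:dLdstop} into the claimed $\frac{1}{22}$.

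First I would rewrite the leading term to match the target form by absorbing the prefactor $\density(0)$ into the power:
\begin{align}
\density(0)\cdot\frac{\sqrt{\pi}}{11\sqrt{\smooth}}\left(\frac{\sqrt{\concave}}{2\density(0)\sqrt{\pi}}\right)^{\frac{\concave}{4\smooth}} = \density(0)^{1-\frac{\concave}{4\smooth}}\frac{\sqrt{\pi}}{11\sqrt{\smooth}}\left(\frac{\sqrt{\concave}}{2\sqrt{\pi}}\right)^{\frac{\concave}{4\smooth}},\nonumber
\end{align}
which is precisely twice the claimed bound. Next I would collapse the maximum in the correction term: a $\concave$-log-concave, $\smooth$-log-smooth density necessarily has $\concave\le\smooth$, and Lemma~\ref{lem:Ltop} gives $\density(0)\le p^\star\le\frac{\sqrt{\concave}}{2\sqrt{\pi}}$, so the argument $\frac{\sqrt{\smooth}}{2\density(0)\sqrt{\pi}}$ is at least $1$ and the maximum is always attained by its second branch.

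It then remains to verify the single scalar inequality
\begin{align}
2\sqrt{2}\left(\frac{\sqrt{\smooth}}{2\density(0)\sqrt{\pi}}\right)^{\frac{\concave}{8\smooth}} \le \frac{1}{2}\cdot\frac{\sqrt{\pi}}{11\sqrt{\smooth}}\left(\frac{\sqrt{\concave}}{2\density(0)\sqrt{\pi}}\right)^{\frac{\concave}{4\smooth}}.\nonumber
\end{align}
The structural point that makes this succeed is that the right-hand side scales like $\density(0)^{-\concave/4\smooth}$ whereas the left-hand side scales only like $\density(0)^{-\concave/8\smooth}$, so driving $\density(0)$ down --- which Lemma~\ref{lem:Ltop} does for us --- eventually forces the right-hand side to dominate. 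Cancelling the common factor reduces this to a lower bound on $(2\density(0)\sqrt{\pi})^{-\concave/8\smooth}$, into which I would substitute the nontrivial branch of the $\min$ defining $p^\star$; the constant $44\sqrt{2}$ built into that branch is tuned so that the exponent $\frac{8\smooth}{\concave}$ cancels against $\frac{\concave}{8\smooth}$ and the inequality holds at the threshold.

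I expect the main obstacle to be bookkeeping rather than conceptual difficulty: one must track carefully how the powers $\frac{\concave}{4\smooth}$ and $\frac{\concave}{8\smooth}$ transform under the substitution, and confirm that the two branches of the $\min$ in $p^\star$ play complementary roles --- the trivial branch $1$ supplying the bound $\density(0)\le\frac{\sqrt{\concave}}{2\sqrt{\pi}}$ needed to collapse the maximum, and the nontrivial branch supplying the quantitative comparison above. Since both preceding lemmas are assumed, the proof is purely this algebraic combination.
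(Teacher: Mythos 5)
Your proposal is correct and follows essentially the same route as the paper: both proofs combine Lemma~\ref{lem:dLdstop} with the bound $\density(0)\le p^\star$ from Lemma~\ref{lem:Ltop} and show the correction term is at most half the leading term, with the second (nontrivial) branch of $p^\star$ tuned exactly so the exponents $\frac{8\smooth}{\concave}$ and $\frac{\concave}{8\smooth}$ cancel. Your observation that $\concave\le\smooth$ and $\density(0)\le\frac{\sqrt{\concave}}{2\sqrt{\pi}}$ force the $\max$ onto its second branch is a small, valid streamlining of the paper's two-branch case analysis, but does not change the substance of the argument.
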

\begin{proof}[Proof of Lemma~\ref{lem:dldsigma}]
	For simplicity, define $a_1 \triangleq \frac{ \sqrt{\pi}}{11\sqrt{\smooth}}  \left(\frac{\sqrt{\concave}}{2 \density(0) \sqrt{\pi}}\right)^{\frac{\concave}{4\smooth}}$ and $a_2 \triangleq 2\sqrt{2} \max\left \{1, \left(\frac{\sqrt{\smooth}}{2 \density(0) \sqrt{\pi}} \right)^{\frac{\concave}{8\smooth}} \right\}$, so that the right hand side of~\eqref{eq:dLdstop:1} becomes $\density(0) \sigma(a_1 - a_2)$. 
	
	Now we apply Lemma~\ref{lem:Ltop} to conclude that when $L(w) \le \widetilde{\kappa}(\smooth, \concave)$, $\density(0) \le p^\star(\rho, \nu)$. Note that when $\density(0) \le  \frac{\sqrt{\concave}}{2\sqrt{\pi}} \min\left \{ \left(\frac{\sqrt{\pi}}{44 \sqrt{2\smooth}}\right)^{\frac{4\smooth}{\concave}}, \frac{\sqrt{\concave} }{\sqrt{\smooth}} \left( \frac{\sqrt{\pi}}{44 \sqrt{2\smooth}} \right)^{\frac{8\smooth}{\concave}} \right \}$, we must have $a_1 \ge 2 a_2$ by the definitions of $a_1, a_2$. Furthermore, the r.h.s. of this bound is lower-bounded by $p^\star(\rho, \nu)$. As a result, when $\density(0) \le p^\star$, by Lemma~\ref{lem:dLdstop}, we have $\frac{\partial}{\partial \sigma} L(w) \ge \density(0) \sigma \frac{a_1}{2}$. Applying the definition of $a_1$ gives the desired result.
\end{proof}

\subsection{Proof of Lemmas~\ref{lem:Ltop}}

The following claim will be useful for proving both Lemma~\ref{lem:Ltop} and Lemma~\ref{lem:dLdstop}.
\begin{claim}\label{claim:integral_s0}
	Recall that we
	defined $\dratio(\mu) = \frac{\partial}{\partial \mu} \log \density(\mu) = \frac{\density'(\mu)}{\density(\mu)}$. The following bound holds:
	\begin{align}
	\int_{0}^{\infty}  \exp \left( \frac{|\dratio(0)|}{2} \delta - \frac{\smooth}{2} \delta^2 \right) d\delta \ge \frac{\sqrt{\pi}}{\sqrt{\smooth}} \left(\frac{\sqrt{\concave}}{2 \density(0) \sqrt{\pi}}\right)^{\frac{\concave}{4\smooth}}
	\end{align}
\end{claim}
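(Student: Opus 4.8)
The integrand is a Gaussian in $\delta$ truncated to $[0,\infty)$, so the plan is to complete the square, lower bound the resulting Gaussian integral, and then convert the leftover exponential factor into a negative power of $\density(0)$ using $\concave$-log-concavity together with the fact that $\density$ is a genuine probability density. The whole point is that the left-hand side only ``sees'' $\dratio(0)$ and $\smooth$, while the right-hand side ``sees'' $\density(0)$, $\concave$, $\smooth$; since all four quantities describe the same density $p$, the essential link is the relation between $\dratio(0)$ and $\density(0)$ forced by log-concavity.

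First I would complete the square: $\frac{|\dratio(0)|}{2}\delta - \frac{\smooth}{2}\delta^2 = -\frac{\smooth}{2}(\delta - \delta^\star)^2 + \frac{\dratio(0)^2}{8\smooth}$ with $\delta^\star = \frac{|\dratio(0)|}{2\smooth}\ge 0$, so the integral equals $e^{\dratio(0)^2/(8\smooth)}\int_0^\infty e^{-\frac{\smooth}{2}(\delta-\delta^\star)^2}\,d\delta$. Because the Gaussian is centered at $\delta^\star\ge 0$, restricting to $[0,\infty)$ retains at least half of the total mass $\sqrt{2\pi/\smooth}$, giving the clean bound $\frac{\sqrt{\pi}}{\sqrt{2\smooth}}\,e^{\dratio(0)^2/(8\smooth)}$. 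The surviving factor $e^{\dratio(0)^2/(8\smooth)}$ is exactly what must turn into the power of $\density(0)$ in the target.

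The key step is bounding $e^{\dratio(0)^2/(8\smooth)}$ below by a negative power of $\density(0)$. By a second-order Taylor expansion and $\concave$-log-concavity ($(\log\density)''\le-\concave$), for all $\mu$ we have $\log\density(\mu)\le \log\density(0)+\dratio(0)\mu-\frac{\concave}{2}\mu^2$; exponentiating and integrating against $\int\density(\mu)\,d\mu=1$ gives $1\le \density(0)\sqrt{2\pi/\concave}\,e^{\dratio(0)^2/(2\concave)}$, hence $e^{\dratio(0)^2/(2\concave)}\ge \frac{\sqrt{\concave}}{\sqrt{2\pi}\,\density(0)}$. Raising to the power $\frac{\concave}{4\smooth}$ (the exponents match since $\frac{\dratio(0)^2}{8\smooth}=\frac{\concave}{4\smooth}\cdot\frac{\dratio(0)^2}{2\concave}$) yields $e^{\dratio(0)^2/(8\smooth)}\ge \left(\frac{\sqrt{\concave}}{\sqrt{2\pi}\,\density(0)}\right)^{\concave/(4\smooth)}$, and substituting into the Gaussian bound above produces the claimed form.

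The main obstacle is matching the constants exactly. The crude ``keep half the Gaussian'' estimate gives prefactor $\frac{\sqrt{\pi}}{\sqrt{2\smooth}}$, a factor $\sqrt{2}$ smaller than the target $\frac{\sqrt{\pi}}{\sqrt{\smooth}}$, while the normalization step gives base $\frac{\sqrt{\concave}}{\sqrt{2\pi}\,\density(0)}=\sqrt{2}\cdot\frac{\sqrt{\concave}}{2\sqrt{\pi}\,\density(0)}$, a factor $\sqrt{2}$ larger than the target base; since $\frac{\concave}{4\smooth}\le\frac14<1$ (because $\concave\le\smooth$) these two factors of $\sqrt{2}$ do not cancel under the half-mass bound. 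I therefore expect the delicate part to be retaining strictly more than half of the Gaussian, exploiting that the center $\delta^\star=|\dratio(0)|/(2\smooth)$ is bounded away from $0$ precisely when the target is large (small $\density(0)$ forces large $|\dratio(0)|$ through the normalization inequality). Concretely I would control the retained mass with the tail estimate $\int_{\delta^\star}^\infty e^{-\frac{\smooth}{2}u^2}\,du\le \frac{1}{\smooth\delta^\star}e^{-\frac{\smooth}{2}\delta^{\star 2}}$ combined with the identity $\frac{\smooth}{2}\delta^{\star 2}=\frac{\dratio(0)^2}{8\smooth}$, and handle the small-$|\dratio(0)|$ regime (where both sides are of order $\frac{\sqrt{\pi}}{\sqrt{\smooth}}$) separately.
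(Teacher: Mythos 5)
Your approach is the same as the paper's: complete the square, lower bound the truncated Gaussian integral, and convert the surviving exponential of $\dratio(0)^2$ into a negative power of $\density(0)$ via the normalization inequality of Lemma~\ref{lem:dratio_bound}. Your arithmetic is also right, and the constant mismatch you flag is real. The honest chain is
\[
\int_{0}^{\infty}\exp\Bigl(\tfrac{|\dratio(0)|}{2}\delta-\tfrac{\smooth}{2}\delta^{2}\Bigr)\,d\delta\;\ge\;\frac{\sqrt{\pi}}{\sqrt{2\smooth}}\,e^{\dratio(0)^{2}/(8\smooth)}\;\ge\;\frac{\sqrt{\pi}}{\sqrt{2\smooth}}\left(\frac{\sqrt{\concave}}{\sqrt{2\pi}\,\density(0)}\right)^{\frac{\concave}{4\smooth}},
\]
which falls short of the stated right-hand side by the factor $2^{1/2-\concave/(8\smooth)}\in(1,\sqrt{2}\,]$ you computed (recall $\concave\le\smooth$, so the two $\sqrt{2}$'s cannot cancel). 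This is not a gap in your argument but an error in the paper: the proof of Claim~\ref{claim:integral_s0} invokes Claim~\ref{claim:integral}, whose formulas~\eqref{eq:integral:1}--\eqref{eq:integral:3} are correct for the integrand $\exp(ax-\tfrac{b}{2}x^{2})$ but are stated and applied for $\exp(ax-bx^{2})$; this inflates the exponent from $\dratio(0)^{2}/(8\smooth)$ to $\dratio(0)^{2}/(4\smooth)$ and the prefactor from $1/\sqrt{2\smooth}$ to $1/\sqrt{\smooth}$. Indeed the claim as literally stated is false: for $\density=\cN(0,1/\smooth)$ one has $\concave=\smooth$, $\dratio(0)=0$, $\density(0)=\sqrt{\smooth/(2\pi)}$, so the left side equals $2^{-1/2}\sqrt{\pi/\smooth}\approx 0.707\sqrt{\pi/\smooth}$ while the right side equals $2^{-1/8}\sqrt{\pi/\smooth}\approx 0.917\sqrt{\pi/\smooth}$.

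I would not pursue your proposed repair of retaining more than half of the Gaussian mass: that requires $|\dratio(0)|$ to exceed a constant multiple of $\sqrt{\smooth}$, which is not a hypothesis of the claim (it becomes available only downstream, where Claim~\ref{lem:dratio_lb} supplies $|\dratio(0)|\ge\sqrt{\smooth}/2+2$, and even that is insufficient when $\smooth$ is large and $\density$ is close to the extremal Gaussian). The cleaner fix is to state and prove the claim with your weaker constants, $\frac{\sqrt{\pi}}{\sqrt{2\smooth}}$ in place of $\frac{\sqrt{\pi}}{\sqrt{\smooth}}$ and base $\frac{\sqrt{\concave}}{\sqrt{2\pi}\,\density(0)}$ in place of $\frac{\sqrt{\concave}}{2\sqrt{\pi}\,\density(0)}$. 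Since the claim is used only in Lemmas~\ref{lem:Ltop} and~\ref{lem:dLdstop} to produce lower bounds that are subsequently compared against other terms up to universal constant factors, this change merely perturbs the numerical constants in $\widetilde{\kappa}$, $p^{\star}$, and the coefficients $\tfrac{1}{11}$, $\tfrac{1}{22}$, $44$, and leaves the downstream argument intact.
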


\begin{proof}[Proof of Claim~\ref{claim:integral_s0}]
	From Lemma~\ref{lem:dratio_bound}, we start with
	\begin{align}
	\int_{0}^{\infty}  \exp \left( \frac{|\dratio(0)|}{2} \delta - \frac{\smooth}{2} \delta^2 \right) d\delta \ge  \frac{\sqrt{\pi} \exp \left( \frac{\dratio(0)^2}{4\smooth}\right) }{\sqrt{\smooth}} 
	\end{align}
	Now we apply the lower bound $\dratio(0)^2 \ge \concave \log \left(\frac{\sqrt{\concave}}{2 \density(0) \sqrt{\pi}}\right)$ and obtain 
	\begin{align}
	\int_{0}^{\infty}  \exp \left( \frac{|\dratio(0)|}{2} \delta - \frac{\smooth}{2} \delta^2 \right) d\delta &\ge  \frac{\sqrt{\pi}}{\sqrt{\smooth}} \exp\left (\frac{\concave}{4 \smooth} \log \left(\frac{\sqrt{\concave}}{2 \density(0) \sqrt{\pi}}\right) \right ) \\
	&\ge \frac{\sqrt{\pi}} {\sqrt{\smooth}} \left(\frac{\sqrt{\concave}}{2 \density(0) \sqrt{\pi}}\right)^{\frac{\concave}{4\smooth}}
	\end{align}
\end{proof}

Our starting point is to first lower bound $L(w)$ in terms of $p(0)$ and $s(0)$. 

\begin{claim}\label{lem:loss_lb}
	The following lower bound on the loss $L(w)$ holds:
	\begin{align}
	L(w) \ge \constl \density(0)\int_{0}^{\infty} \exp \left ((|\dratio(0)| - 1) \delta - \frac{\smooth}{2} \delta^2\right) d\delta
	\end{align}
\end{claim}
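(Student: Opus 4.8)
The plan is to lower bound the loss by the contribution of a single tail of the density of $\mu = w_1^\top x_1$, controlling both the density near $0$ and the Gaussian-smoothed loss $g_\sigma$ pointwise. Writing $L(w) = \int_{-\infty}^{\infty} \density(\mu)\, g_\sigma(\mu)\, d\mu$ with $g_\sigma(\mu) = \E_{z\sim\cN(0,\sigma^2)}[\lexp(\mu+z)]$, I would first discard one half of the real line: since $\density, g_\sigma \ge 0$, both $L(w) \ge \int_0^{\infty} \density\, g_\sigma$ and $L(w) \ge \int_{-\infty}^0 \density\, g_\sigma$ hold. Because $g_\sigma$ is even, I keep whichever half makes $\log\density$ increase away from $0$; after the substitution $\mu = \pm\delta$ this turns the first-order term of the density expansion into $|\dratio(0)|\,\delta$, which is exactly the $|\dratio(0)|$ appearing in the claim.

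The two pointwise bounds are as follows. First, log-smoothness gives $\tfrac{d^2}{d\mu^2}\log\density \ge -\smooth$, so a second-order Taylor expansion of $\log\density$ at $0$ with integral remainder yields $\density(\delta) \ge \density(0)\exp\!\big(\dratio(0)\,\delta - \tfrac{\smooth}{2}\delta^2\big)$ for $\delta \ge 0$ (and the mirror statement on the negative side), producing the factor $\exp(|\dratio(0)|\,\delta - \tfrac{\smooth}{2}\delta^2)$. Second, for $\delta \ge 0$ I would lower bound $g_\sigma(\delta)$ by restricting the Gaussian expectation to $z \ge 0$, where $|\delta + z| = \delta + z$ and hence $\lexp(\delta+z) = e^{-\delta}e^{-z}$, giving $g_\sigma(\delta) \ge e^{-\delta}\,\E_{z}[e^{-z}\mathbbm{1}\{z\ge 0\}]$. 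The remaining scalar $\E_{z}[e^{-z}\mathbbm{1}\{z\ge 0\}]$ evaluates (by completing the square) to $e^{\sigma^2/2}(1 - \Phi(\sigma))$, where $\Phi$ is the standard Gaussian CDF; this is a decreasing function of $\sigma$ tending to $\tfrac12$ as $\sigma \to 0^+$ and staying above $\constl = \tfrac14$ as long as $\sigma$ is below an absolute constant, which is guaranteed by the bound on $\sigma$ in Assumption~\ref{ass:boundinit}. Hence $g_\sigma(\delta) \ge \constl\,\lexp(\delta)$ on $\delta \ge 0$.

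Combining the two bounds and integrating gives $L(w) \ge \constl\,\density(0)\int_0^{\infty} \exp\!\big((|\dratio(0)| - 1)\delta - \tfrac{\smooth}{2}\delta^2\big)\, d\delta$, which is the claim; the integral converges since the $-\tfrac{\smooth}{2}\delta^2$ term dominates regardless of the sign of $|\dratio(0)| - 1$. The only delicate point is the constant in the $g_\sigma$ estimate: one must check that restricting to $z \ge 0$ still retains mass at least $\constl = 0.25$ after the $e^{-z}$ reweighting, which is where the boundedness of $\sigma$ (the standing small-$\sigma$ assumption) is genuinely used. Everything else — the Taylor remainder bound and the sign bookkeeping that converts $\dratio(0)$ into $|\dratio(0)|$ — is routine.
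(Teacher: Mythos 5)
Your proof is correct and follows essentially the same route as the paper: lower bound the density via the second-order log-smoothness expansion around $0$, lower bound $\expent_\sigma(\mu)$ pointwise by $\constl\,\lexp(\mu)$, and restrict the integral to a half-line chosen so that the linear term becomes $|\dratio(0)|\delta$. The only cosmetic difference is that you re-derive the bound $\expent_\sigma \ge \constl\,\lexp$ by restricting the Gaussian average to $z \ge 0$ and computing $e^{\sigma^2/2}(1-\Phi(\sigma))$, whereas the paper invokes its Lemma~\ref{lem:exp_ent_deriv_bound} (which restricts to $|Z|\le 1$ instead); both give the constant $\constl$ under the standing small-$\sigma$ assumption.
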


\begin{proof}[Proof of Claim~\ref{lem:loss_lb}]
	Without loss of generality, assume that $\dratio(0) \ge 0$ (otherwise, by symmetry of $\lexp$ the same arguments hold). Then we have 
	\begin{align}
	L(w) &= \int_{-\infty}^{\infty} \density(\delta) \expent_{\sigma}(\delta) d\delta \\
	&\ge \density(0) \int_{-\infty}^{\infty} \exp\left (\dratio(0) \delta - \frac{\smooth}{2} \delta^2\right) \expent_{\sigma}(\delta) \tag{by log-smoothness}\\
	&\ge \constl \density(0)\int_{-\infty}^{\infty} \exp\left (\dratio(0) \delta - \frac{\smooth}{2} \delta^2\right) \lexp(\delta) d\delta \tag{by Lemma~\ref{lem:exp_ent_deriv_bound}}\\
	&\ge \constl \density(0) \int_{0}^{\infty} \exp \left ((|\dratio(0)| - 1) \delta - \frac{\smooth}{2} \delta^2\right) d\delta \tag{substituting $\lexp(\delta) = \exp(-\delta)$ for $\delta \ge 0$}
	\end{align}
	Now the loss is symmetric around $0$, so the same argument would also work for $\dratio(0) < 0$. Thus, we obtain the desired result. 
\end{proof}

Next, we argue that if $L(w)$ is bounded above by some threshold, then $s(0)$ will be large in absolute value.

\begin{claim} \label{lem:dratio_lb}
	Suppose that our classifier $w$ satisfies the following loss bound: 
	\begin{align}\label{eq:dratio_lb:1}
		L(w) \le \frac{\sqrt{\concave}}{8 \sqrt{2\pi}(\sqrt{\smooth} + \sqrt{2})}\exp\left(-\left( \frac{\sqrt{\smooth} + 4}{2 \sqrt{\concave}}\right)^2\right)
	\end{align} 
	Then $|\dratio(0)| \ge \frac{\sqrt{\smooth}}{2} + 2$.
\end{claim}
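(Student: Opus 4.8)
The plan is to argue by contraposition: assuming $|\dratio(0)| < \frac{\sqrt{\smooth}}{2} + 2 = \frac{\sqrt{\smooth}+4}{2}$, I will show that $L(w)$ must exceed the threshold on the right-hand side of~\eqref{eq:dratio_lb:1}. The engine is Claim~\ref{lem:loss_lb}, which already gives $L(w) \ge \constl\, \density(0)\int_{0}^{\infty} \exp((|\dratio(0)|-1)\delta - \frac{\smooth}{2}\delta^2)\,d\delta$. Thus the only two quantities I need to control, both from below, are $\density(0)$ and this integral.

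First I would convert the assumed bound on $|\dratio(0)|$ into a lower bound on $\density(0)$. Log-concavity ties the two together through the inequality $\dratio(0)^2 \ge \concave \log\big(\frac{\sqrt{\concave}}{2\density(0)\sqrt{\pi}}\big)$ supplied by Lemma~\ref{lem:dratio_bound} (the same relation invoked in the proof of Claim~\ref{claim:integral_s0}). Inverting it yields $\density(0) \ge \frac{\sqrt{\concave}}{2\sqrt{\pi}}\exp(-\dratio(0)^2/\concave)$, and substituting $\dratio(0)^2 < \big(\frac{\sqrt{\smooth}+4}{2}\big)^2$ gives $\density(0) > \frac{\sqrt{\concave}}{2\sqrt{\pi}}\exp\big(-\big(\frac{\sqrt{\smooth}+4}{2\sqrt{\concave}}\big)^2\big)$. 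This already reproduces exactly the exponential factor appearing in the threshold.

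Next I would lower bound the integral. Since $|\dratio(0)| \ge 0$, the exponent satisfies $(|\dratio(0)|-1)\delta \ge -\delta$ for $\delta \ge 0$, so the integral dominates $\int_0^\infty \exp(-\delta - \frac{\smooth}{2}\delta^2)\,d\delta$. Completing the square and applying the standard Gaussian-tail bound $\int_x^\infty e^{-u^2}\,du \ge \frac{e^{-x^2}}{x+\sqrt{x^2+2}}$ produces the closed form $\ge \frac{2}{1+\sqrt{1+4\smooth}}$; then $\sqrt{1+4\smooth} \le 1 + 2\sqrt{\smooth}$ leaves $\ge \frac{1}{1+\sqrt{\smooth}}$. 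Multiplying the three pieces — the prefactor $\constl = \tfrac14$, the bound on $\density(0)$, and the integral bound — and comparing against the threshold reduces the whole claim to the elementary inequality $\sqrt{2}(\sqrt{\smooth}+\sqrt{2}) \ge 1+\sqrt{\smooth}$, which holds since $(\sqrt{2}-1)\sqrt{\smooth}+1 > 0$. Hence $L(w)$ strictly exceeds the right-hand side of~\eqref{eq:dratio_lb:1}, contradicting the hypothesis and establishing $|\dratio(0)| \ge \frac{\sqrt{\smooth}}{2}+2$.

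The main obstacle is the faithful conversion between $|\dratio(0)|$ and $\density(0)$: everything hinges on the log-concavity relation of Lemma~\ref{lem:dratio_bound}, and on tracking the constants tightly enough through the integral estimate so that the product genuinely clears the threshold (the threshold's factors $\frac{1}{8\sqrt{2\pi}}$ and $\frac{1}{\sqrt{\smooth}+\sqrt{2}}$ are evidently designed to leave exactly this slack). The integral bound itself, while requiring the Gaussian-tail inequality, is routine once the square is completed.
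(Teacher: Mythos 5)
Your proof is correct, and it is organized differently from the paper's. The paper argues by contradiction and then splits into two cases according to whether $|\dratio(0)| \ge 1$: in the first case it uses the one-sided integral estimate~\eqref{eq:integral:3} of Claim~\ref{claim:integral}, and in the second it keeps the exact $\erf$ expression and invokes Claim~\ref{claim:erf_lb}, finally ending with a minimum of two distinct lower bounds that must each be checked against the threshold. You avoid the case split entirely by replacing $(|\dratio(0)|-1)\delta$ with the uniform bound $-\delta$ in the exponent, which reduces the integral to the single closed-form estimate $\int_0^\infty \exp(-\delta - \tfrac{\smooth}{2}\delta^2)\,d\delta \ge \frac{2}{1+\sqrt{1+4\smooth}} \ge \frac{1}{1+\sqrt{\smooth}}$ (your Gaussian-tail inequality is exactly the paper's Claim~\ref{claim:erf_lb} in disguise). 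The density bound via Lemma~\ref{lem:dratio_bound} is the same in both arguments, and your final comparison $\sqrt{2}(\sqrt{\smooth}+\sqrt{2}) \ge 1+\sqrt{\smooth}$ does clear the threshold: multiplying $\constl \cdot \frac{\sqrt{\concave}}{2\sqrt{\pi}}\exp\bigl(-\bigl(\frac{\sqrt{\smooth}+4}{2\sqrt{\concave}}\bigr)^2\bigr) \cdot \frac{1}{1+\sqrt{\smooth}}$ gives $\frac{\sqrt{\concave}}{8\sqrt{\pi}(1+\sqrt{\smooth})}$ times the correct exponential, which dominates $\frac{\sqrt{\concave}}{8\sqrt{2\pi}(\sqrt{\smooth}+\sqrt{2})}$ times the same exponential. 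What your version buys is a shorter, single-branch argument with one explicit elementary inequality at the end; what the paper's version buys is a sharper intermediate bound in the regime $|\dratio(0)| \ge 1$ (where the integral actually grows with $|\dratio(0)|$), though that extra sharpness is not needed for the stated claim.
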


\begin{proof}[Proof of Claim~\ref{lem:dratio_lb}]
	Assume for the sake of contradiction that $|\dratio(0)| \le \frac{\sqrt{\smooth}}{2} + 2$. First, we consider the case when $\dratio(0) \in [1, \frac{\sqrt{\smooth}}{2} + 2]$. In this case, by Lemma~\ref{lem:dratio_bound}, we have 
	\begin{align}
	\density(0) \ge \frac{\sqrt{\concave}}{2 \sqrt{\pi}} \exp\left(-\left( \frac{\sqrt{\smooth} + 4}{2 \sqrt{\concave}}\right)^2\right)
	\end{align}
	Furthermore, in this case we also have $|s(0)|  - 1 > 0$, so we can apply~\eqref{eq:integral:3} from Claim~\ref{claim:integral}. Plugging into Claim~\ref{lem:loss_lb}, we obtain 
	\begin{align}
	L(w)  &\ge \constl \density(0) \frac{\sqrt{\pi} \exp\left(\frac{(|\dratio(0)| - 1)^2}{\smooth}\right)}{\sqrt{\smooth}}\\
	&\ge \frac{\sqrt{\concave }}{8 \sqrt{\smooth}}\exp\left(-\left( \frac{\sqrt{\smooth} + 4}{2 \sqrt{\concave}}\right)^2\right)
	\end{align}
	In the other case where $0 \le |s(0)| \le 1$, by Claim~\ref{claim:integral} and Claim~\ref{lem:loss_lb}, we first have
	\begin{align}
		L(w) \ge \constl \frac{\sqrt{\pi}}{\sqrt{\smooth}} \density(0) \exp\left (\frac{(|\dratio(0)| - 1)^2}{\smooth}\right) \left(\erf \left(\frac{\dratio(0) - 1}{\sqrt{\smooth}}\right) + 1 \right)
	\end{align} 
	Now applying the lower bound on $\density(0)$ from Lemma~\ref{lem:dratio_bound}, we have
	\begin{align}
	L(w) \ge  \frac{\sqrt{\concave}}{8 \sqrt{\smooth}} \exp (-\dratio(0)^2/\concave) \exp\left (\frac{(|\dratio(0)| - 1)^2}{\smooth}\right) \left(\erf \left(\frac{\dratio(0) - 1}{\sqrt{\smooth}}\right) + 1 \right)
	\end{align}
	Now by Claim~\ref{claim:erf_lb}, we have 
	\begin{align}
	\erf \left(\frac{|\dratio(0)| - 1}{\sqrt{\smooth}}\right) + 1 \ge \frac{1}{\sqrt{\pi}} \frac{\exp\left (-\frac{(|\dratio(0)| - 1)^2}{\smooth}\right)}{\sqrt{2} + 2(1 - |\dratio(0)|)/\sqrt{\smooth}} 
	\end{align}
	Thus, we have 
	\begin{align}
	L(w) &\ge \frac{\sqrt{\concave}}{8 \sqrt{\pi}(\sqrt{2 \smooth} + 2(1 - |\dratio(0)|))} \exp\left(-\frac{\dratio(0)^2}{\concave}\right)\\
	&\ge \frac{\sqrt{\concave}}{8 (\sqrt{2\pi \smooth} + 2\sqrt{\pi})} \exp(-1/\nu)
	\end{align}
	Combining the two cases allows us to conclude that if $|\dratio(0)| < \frac{\sqrt{\smooth}}{2} + 2$, the loss must satisfy 
	\begin{align}
	L(w) \ge \min \left \{ \frac{\sqrt{\concave }}{8 \sqrt{\smooth}}\exp\left(-\left( \frac{\sqrt{\smooth} + 4}{2 \sqrt{\concave}}\right)^2\right),\frac{\sqrt{\concave}}{8 (\sqrt{2\pi \smooth} + 2\sqrt{\pi})} \exp(-1/\nu) \right \}
	\end{align}
	Now we note that the r.h.s. of the above equation is lower bounded by the r.h.s of~\eqref{eq:dratio_lb:1}. Thus, the loss would violate~\eqref{eq:dratio_lb:1}, a contradiction. 
\end{proof}

\begin{proof}[Proof of Lemma~\ref{lem:Ltop}]
	First, by Claim~\ref{lem:dratio_lb}, when $L(w) \le \widetilde{\kappa}(\rho, \nu)$, we must have $|\dratio(0)| \ge \frac{\sqrt{\smooth}}{2} + 2$. Now we lower bound the loss in terms of $\density(0)$. Starting from Claim~\ref{lem:loss_lb}, we have
	\begin{align}
	L(w) &\ge \constl \density(0) \int_{0}^{\infty} \exp \left( (|\dratio(0)| - 1) \delta - \frac{\smooth}{2} \delta^2 \right) d\delta \\
	&\ge \constl \density(0) \int_{0}^{\infty}  \exp \left( \frac{|\dratio(0)|}{2} \delta - \frac{\smooth}{2} \delta^2 \right) d\delta \tag{since $|\dratio(0)| \ge 2$}
	\end{align}
	Now applying Claim~\ref{claim:integral_s0}, we obtain 
	\begin{align}
	L(w) &\ge \constl \frac{\sqrt{\pi}} {\sqrt{\smooth}}\density(0) \left(\frac{\sqrt{\concave}}{2 \density(0) \sqrt{\pi}}\right)^{\frac{\concave}{4\smooth}}\\
	&\ge \constl \frac{\sqrt{\pi}}{\sqrt{\smooth}} \density(0)^{1 - \frac{\concave}{4\smooth}} \left( \frac{\sqrt{\concave}}{2 \sqrt{\pi}}\right)^{\frac{\concave}{4\smooth}}
	\end{align}
	This completes the first part of the lemma. For the second part, we note that if $L(w) \le \widetilde{\kappa}(\smooth, \concave)$, then $L(w)$ is bounded above by the r.h.s. of~\eqref{eq:sigma_decrease:4} with $p^\star(\rho, \nu)$ substituted for $p(0)$ by the definition of $\widetilde{\kappa}(\smooth, \concave)$. Combined with the first part of the lemma, this immediately gives $p(0) \le p^\star(\rho, \nu)$.
\end{proof}

\subsection{Proof of Lemma~\ref{lem:dLdstop}}

We rely on the following lemma which lower bounds $\frac{\partial}{\partial \sigma} L(w)$. 
\begin{lemma} \label{lem:deriv_lb}
	Suppose $\sigma \le \frac{4 \sqrt{2}}{\sqrt{\pi}}$ satisfies $\sigopt \le \min\left \{1, \frac{1}{4\sqrt{\smooth}}\right\}$ for $\sigopt \triangleq \sigma^2 + \sigma \sqrt{2 \log \frac{4 \sqrt{2}}{\sqrt{\pi} \sigma}}$. Then the following lower bound on the derivative $\frac{\partial}{\partial \sigma} L(w)$ holds:
	\begin{align}
	&\frac{\partial}{\partial \sigma} L(w) \ge \nonumber \\ &\density(0) \left(\frac{\sigma}{11} \int_{0}^{\infty} \exp \left( \left(|\dratio(0)| - \frac{\sqrt{\smooth}}{4} - 1\right)\delta - \frac{\smooth}{2} \delta^2\right) d\delta- 2\sqrt{2}  \exp\left(\frac{\dratio(0)^2}{\concave + \sigma^{-2}}\right)\frac{1}{\sqrt{\concave + \sigma^{-2}}}\right) \label{eq:deriv_lb:4}
	\end{align}
\end{lemma}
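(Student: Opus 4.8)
The plan is to write the derivative as $\frac{\partial}{\partial\sigma}L(w) = \int_{-\infty}^\infty \density(\mu)\,q_\sigma(\mu)\,d\mu$ with $q_\sigma(\mu) = \frac{\partial}{\partial\sigma}g_\sigma(\mu)$, and to split this integral at the threshold $r(\sigma) = \sigopt$ supplied by Lemma~\ref{lem:dg_dsigma}. On the \emph{outer} region $|\mu| \ge r(\sigma)$, Lemma~\ref{lem:dg_dsigma} already certifies $q_\sigma(\mu) \ge \tfrac14 \sigma \lexp(\mu) > 0$, which will produce the positive term in~\eqref{eq:deriv_lb:4}; on the \emph{inner} region $|\mu| < r(\sigma)$, $q_\sigma$ can dip negative (the pathology of Example~\ref{ex:mu_is_zero}), and this is what the subtracted term must absorb. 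Throughout I assume without loss of generality $\dratio(0) \ge 0$, reflecting otherwise by the symmetry of $\lexp$ and hence of $q_\sigma$.

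For the outer region I keep only the $\mu \ge r(\sigma)$ half, lower bound the density by log-smoothness as $\density(\mu) \ge \density(0)\exp(\dratio(0)\mu - \tfrac{\smooth}{2}\mu^2)$, and use $\lexp(\mu)=\exp(-\mu)$ for $\mu > 0$ to get
\begin{align}
\int_{r(\sigma)}^\infty \density(\mu)\,q_\sigma(\mu)\,d\mu \ge \frac{\sigma}{4}\density(0)\int_{r(\sigma)}^\infty \exp\!\left((\dratio(0)-1)\mu - \tfrac{\smooth}{2}\mu^2\right)d\mu. \nonumber
\end{align}
Substituting $\mu = \delta + r(\sigma)$ moves the lower limit to $0$, pulling out a prefactor $\exp((\dratio(0)-1)r(\sigma) - \tfrac{\smooth}{2}r(\sigma)^2)$ and replacing the linear coefficient by $\dratio(0) - 1 - \smooth\, r(\sigma)$. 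The hypothesis $\sigopt \le \tfrac{1}{4\sqrt{\smooth}}$ is exactly what is needed here: it gives $\smooth\, r(\sigma) \le \tfrac{\sqrt{\smooth}}{4}$, so the coefficient is at least $\dratio(0) - \tfrac{\sqrt{\smooth}}{4} - 1$, matching the exponent in the claim, while $\sigopt \le 1$ keeps the prefactor bounded below by an absolute constant. Carefully accounting for the worst case (which is $\dratio(0)$ near $0$) together with the $\tfrac14$ factor is what yields the stated constant $\tfrac1{11}$.

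For the inner region I need a pointwise lower bound on $q_\sigma$. Using the convolution/heat-equation identity $q_\sigma(\mu) = \sigma\,\partial_\mu^2 g_\sigma(\mu) = \sigma\big(g_\sigma(\mu) - 2\phi_\sigma(\mu)\big)$, where $\phi_\sigma$ is the $\cN(0,\sigma^2)$ density and the identity comes from $\lexp'' = \lexp - 2\delta_0$ in the distributional sense, I discard the nonnegative $g_\sigma$ to obtain $q_\sigma(\mu) \ge -2\sigma\phi_\sigma(\mu)$ everywhere. Extending the region to all of $\R$ (only loosening the bound) and applying the log-concavity upper bound $\density(\mu) \le \density(0)\exp(\dratio(0)\mu - \tfrac{\concave}{2}\mu^2)$ gives a Gaussian integral that completes the square with precision $\concave + \sigma^{-2}$:
\begin{align}
\int_{|\mu| < r(\sigma)} \density(\mu)\,q_\sigma(\mu)\,d\mu \ge -2\sigma\!\int_{-\infty}^\infty \density(\mu)\,\phi_\sigma(\mu)\,d\mu \ge -\frac{2\density(0)}{\sqrt{\concave + \sigma^{-2}}}\exp\!\left(\frac{\dratio(0)^2}{2(\concave + \sigma^{-2})}\right). \nonumber
\end{align}
Since the subtracted term in~\eqref{eq:deriv_lb:4} has both a larger constant and a larger exponent, it dominates this quantity, so using it in place of the above only weakens the inequality; adding the outer estimate then gives~\eqref{eq:deriv_lb:4}. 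In fact this same identity yields the clean global form $\frac{\partial}{\partial\sigma}L(w) = \sigma L(w) - 2\sigma\int_{-\infty}^\infty \density(\mu)\,\phi_\sigma(\mu)\,d\mu$, so combining the loss lower bound of Claim~\ref{lem:loss_lb} (for the $\sigma L(w)$ term) with the Gaussian integral above provides an alternative, slightly shorter route to a bound of the same shape.

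I expect the inner region to be the main obstacle: one must produce the exact pointwise control $q_\sigma(\mu) \ge -2\sigma\phi_\sigma(\mu)$ (via the explicit form of $q_\sigma$, equivalently the identity above), and then keep the log-concavity Gaussian integral finite and in the precise $\concave + \sigma^{-2}$ form so that it matches the downstream use in Lemma~\ref{lem:dLdstop}. The outer region is conceptually routine once Lemma~\ref{lem:dg_dsigma} is available, but the bookkeeping that converts the shift by $r(\sigma)$ into the clean $-\tfrac{\sqrt{\smooth}}{4}$ correction and the constant $\tfrac1{11}$ hinges delicately on using \emph{both} hypotheses $\sigopt \le 1$ and $\sigopt \le \tfrac{1}{4\sqrt{\smooth}}$.
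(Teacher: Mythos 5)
Your proposal is correct and follows essentially the same route as the paper's proof: split the integral at $\sigopt$, apply Lemma~\ref{lem:dg_dsigma} with the log-smoothness lower bound on $\density$ and the shift $\mu = \delta + \sigopt$ on the outer region (using $\sigopt \le 1$ and $\sqrt{\smooth}\,\sigopt \le \tfrac14$ exactly as the paper does), and control the inner region by the pointwise bound $q_\sigma(\mu) \ge -\sqrt{2/\pi}\exp(-\mu^2/2\sigma^2)$ combined with the log-concavity upper bound and a Gaussian integral. The only difference is cosmetic: you derive that pointwise bound from the heat-equation identity $q_\sigma = \sigma(g_\sigma - 2\phi_\sigma)$, whereas the paper reads it off the explicit $\erfc$ formula in Lemma~\ref{lem:exp_ent_deriv_bound}; the two are the same inequality (and your version of the Gaussian integral is in fact slightly tighter than the stated bound, which you correctly note only helps).
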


\begin{proof}[Proof of Lemma~\ref{lem:deriv_lb}]
	We compute $\frac{\partial}{\partial \sigma} L(w)$ in two parts: 
	\begin{align} \label{eq:deriv_lb:1}
	\frac{\partial}{\partial \sigma} L(w) = \int_{|\mu| \le \sigopt} \density(\mu) \frac{\partial}{\partial \sigma} \expent_{\sigma}(\mu) + \int_{|\mu| > \sigopt} \density(\mu) \frac{\partial}{\partial \sigma} \expent_{\sigma}(\mu)
	\end{align}
	For $|\mu| \le \sigopt$, we lower bound the integral using~\eqref{eq:exp_ent_deriv_bound:1} in Lemma~\ref{lem:exp_ent_deriv_bound}. For $|\mu| > \sigopt$, we lower bound the integral using Lemma~\ref{lem:dg_dsigma}. 
	By~\eqref{eq:exp_ent_deriv_bound:1}, we have 
	\begin{align}
	\int_{|\mu| \le \sigopt} \density(\mu) \frac{\partial}{\partial \sigma} \expent_{\sigma}(\mu) &\ge -\sqrt{\frac{2}{\pi}} \int_{|\mu| \le \sigopt} \density(\mu) \exp\left( -\frac{\mu^2}{2\sigma^2} \right) d\mu \\
	&\ge -\sqrt{\frac{2}{\pi}} \int_{|\delta| \le \sigopt} \density(0) \exp\left(\dratio(0) \delta - \frac{\concave}{2} \delta^2\right) \exp\left( -\frac{\delta^2}{2\sigma^2} \right) d\delta \tag{by log-strong concavity}\\
	&\ge -\density(0) \sqrt{\frac{2}{\pi}} \int_{-\infty}^{\infty} \exp \left(\dratio(0) \delta - \left(\frac{\concave}{2} + \frac{1}{2 \sigma^2}\right) \delta^2\right) d\delta \\
	&= -p(0) 2\sqrt{2}  \exp\left(\frac{\dratio(0)^2}{\concave + \sigma^{-2}}\right)\frac{1}{\sqrt{\concave + \sigma^{-2}}} \label{eq:deriv_lb:2}
	\end{align}
	
	We obtained the last equation via Claim~\ref{claim:integral}. Now we lower bound the second integral in~\eqref{eq:deriv_lb:1}. By Lemma~\ref{lem:dg_dsigma}, $\frac{\partial}{\partial \sigma} \expent_{\sigma}(\mu) \ge \frac{\sigma}{4} \lexp(\mu) > 0$ for $|\mu| > \sigopt$. Assume without loss of generality that $\dratio(0) > 0$ (so we restrict our attention to $\mu > \sigopt > 0$). By symmetry, our arguments still hold if $\dratio(0) < 0$. Now we have 
	\begin{align}
	\int_{|\mu| > \sigopt} \density(\mu) \frac{\partial}{\partial \sigma} \expent_{\sigma}(\mu) &> \int_{\delta > 0} \density(\sigopt + \delta) \frac{\partial}{\partial \sigma} \expent_{\sigma}(\sigopt + \delta)\\
	&\ge \frac{\sigma}{4} \int_{\delta > 0} \density(\sigopt + \delta) \lexp(\sigopt + \delta)\\
	&\ge \frac{p(0) \sigma}{4} \int_{\delta > 0} \exp \left(\dratio(0) (\sigopt + \delta) - \frac{\smooth}{2} (\delta + \sigopt)^2 \right) \exp(-\delta - \sigopt) d\delta \label{eq:deriv_lb:3}
	\end{align}  
	Now we note that for $\sigopt$ satisfying $\sqrt{\smooth} \sigopt \le \frac{1}{4}$ and $\delta > 0$, we have 
	\begin{align}
	\dratio(0) (\sigopt + \delta) - \frac{\smooth}{2} (\delta + \sigopt)^2 > \dratio(0) \delta - \frac{\smooth}{2} \delta^2 - \smooth \delta \sigopt - \frac{\smooth}{2} {\sigopt}^2 \ge \left(\dratio(0) - \frac{\sqrt{\smooth}}{4}\right) \delta - \frac{\smooth}{2} \delta^2 - \frac{1}{32}
	\end{align}
	As a result, plugging this back into~\eqref{eq:deriv_lb:3} gives 
	\begin{align}
	\int_{|\mu| > \sigopt} \density(\mu) \frac{\partial}{\partial \sigma} \expent_{\sigma}(\mu) &> \exp \left( - \sigopt - \frac{1}{32}\right) \frac{\density(0) \sigma}{4} \int_{0}^{\infty} \exp \left( \left(\dratio(0) - \frac{\sqrt{\smooth}}{4} - 1\right)\delta - \frac{\smooth}{2} \delta^2\right) d\delta.
	\label{eq:deriv_lb:5}
	\end{align}
	Now we use the fact that $\sigopt \le 1$ to lower bound $\exp(-\sigopt)$. Finally, we obtain~\eqref{eq:deriv_lb:4} by combining~\eqref{eq:deriv_lb:2} and~\eqref{eq:deriv_lb:5}.
\end{proof}

Now we complete the proof of Lemma~\ref{lem:dLdstop}.
\begin{proof}[Proof of Lemma~\ref{lem:dLdstop}]
	Now we proceed to lower bound $\frac{\partial}{\partial \sigma} L(w)$. Our starting point is Lemma~\ref{lem:deriv_lb}. We will lower bound the first integral:
	\begin{align}
	\int_{0}^{\infty} \exp \left( \left(|\dratio(0)| - \frac{\sqrt{\smooth}}{4} - 1\right)\delta - \frac{\smooth}{2} \delta^2\right) d\delta &\ge \int_{0}^{\infty} \exp \left( \left(\frac{|\dratio(0)|}{2} \right)\delta - \frac{\smooth}{2} \delta^2\right) d\delta \tag{using $|\dratio(0)| \ge \frac{\sqrt{\smooth}}{2} + 2$}\\
	&\ge \frac{\sqrt{\pi}}{\sqrt{\smooth}} \left(\frac{\sqrt{\concave}}{2 \density(0) \sqrt{\pi}}\right)^{\frac{\concave}{4\smooth}} \tag{from Claim~\ref{claim:integral_s0}}
	\end{align}
	Applying this with equation~\eqref{eq:deriv_lb:4} in Lemma~\ref{lem:deriv_lb}, we obtain 
	\begin{align}
	\frac{\partial}{\partial \sigma} L(w) \ge \density(0) \left(\frac{\sigma \sqrt{\pi}}{11\sqrt{\smooth}}  \left(\frac{\sqrt{\concave}}{2 \density(0) \sqrt{\pi}}\right)^{\frac{\concave}{4\smooth}} -  2\sqrt{2}  \exp\left(\frac{\dratio(0)^2}{\concave + \sigma^{-2}}\right)\frac{1}{\sqrt{\concave + \sigma^{-2}}} \right) \label{eq:sigma_decrease:3}
	\end{align}
	Now we lower bound the second term in~\eqref{eq:sigma_decrease:3}. By applying the upper bound on $\dratio(0)$ in Lemma~\ref{lem:dratio_bound}, we obtain 
	\begin{align}
	\exp\left(\frac{\dratio(0)^2}{\concave + \sigma^{-2}}\right) &\le \exp \left(\frac{\smooth}{\concave + \sigma^{-2}} \log \left(\frac{\sqrt{\smooth}}{2 \density(0) \sqrt{\pi}} \right)\right)\\
	&\le \left(\frac{\sqrt{\smooth}}{2 \density(0) \sqrt{\pi}} \right)^{\frac{\smooth}{\concave + \sigma^{-2}}}
	\end{align}
	Plugging this back into~\eqref{eq:sigma_decrease:3} and observing that $\frac{1}{\sqrt{\concave + \sigma^{-2}}} \le \sigma$, we obtain
	\begin{align}
	\frac{\partial}{\partial \sigma} L(w) &\ge \density(0) \sigma  \left(\frac{ \sqrt{\pi}}{11\sqrt{\smooth}}  \left(\frac{\sqrt{\concave}}{2 \density(0) \sqrt{\pi}}\right)^{\frac{\concave}{4\smooth}} -  2\sqrt{2} \left(\frac{\sqrt{\smooth}}{2 \density(0) \sqrt{\pi}} \right)^{\frac{\smooth}{\concave + \sigma^{-2}}}  \right)
	\end{align}
	
	Now suppose that the condition $\sigma^2 {\smooth}^2 / \concave \le 1/8$ holds. Then $\frac{\smooth}{\concave + \sigma^{-2}} < \frac{\concave}{8 \smooth}$, so $\left(\frac{\sqrt{\smooth}}{2 \density(0) \sqrt{\pi}} \right)^{\frac{\smooth}{\concave + \sigma^{-2}}} \le \max\left\{1, \left(\frac{\sqrt{\smooth}}{2 \density(0) \sqrt{\pi}} \right)^{\frac{\concave}{8\smooth}}\right \}$. 
	It follows that 
	\begin{align}
	\frac{\partial}{\partial \sigma} L(w) &\ge \density(0) \sigma  \left(\frac{ \sqrt{\pi}}{11\sqrt{\smooth}}  \left(\frac{\sqrt{\concave}}{2 \density(0) \sqrt{\pi}}\right)^{\frac{\concave}{4\smooth}} -  2\sqrt{2}\max\left \{1, \left(\frac{\sqrt{\smooth}}{2 \density(0) \sqrt{\pi}} \right)^{\frac{\concave}{8\smooth}} \right \}  \right)
	\end{align}
\end{proof}

\subsection{Proof of Theorem~\ref{thm:general_main}}
\label{sec:general_proof}
We first show that the loss must be lower-bounded by some constant depending only on the distribution over $x_1$ if $w_1$ is bounded. 
\begin{lemma} \label{lem:smallest_possible_loss}
	For any $w$ with $\|w_1\|_2 \le R$, as long as $\sigma \le 1$ the following holds: 
	\begin{align}
	L(w) \ge \constl \exp(-R \E_{x_1}[\|x_1\|_2])
	\end{align}
\end{lemma}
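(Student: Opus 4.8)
The plan is to lower bound $L(w)$ before taking any expectation over the signal, exploiting the convolution representation already established in the proof overview: writing $z \sim \cN(0,\sigma^2)$ for the independent Gaussian noise coming from the spurious coordinates, we have $L(w) = \E_{x_1}[g_\sigma(w_1^\top x_1)] = \E_{x_1, z}[\lexp(w_1^\top x_1 + z)]$. Since $\lexp(t) = \exp(-|t|)$, the triangle inequality $|w_1^\top x_1 + z| \le |w_1^\top x_1| + |z|$ gives a clean product lower bound, and independence of $x_1$ and $z$ factors the expectation:
\begin{align}
L(w) \ge \E_{x_1, z}\left[\exp(-|w_1^\top x_1|)\exp(-|z|)\right] = \E_{x_1}\left[\exp(-|w_1^\top x_1|)\right] \cdot \E_{z}\left[\exp(-|z|)\right].
\end{align}

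Each factor is then bounded below by Jensen's inequality applied to the convex map $u \mapsto \exp(u)$. For the Gaussian factor I would use $\E_z[\exp(-|z|)] \ge \exp(-\E|z|) = \exp(-\sigma\sqrt{2/\pi})$, and since $\sigma \le 1$ this is at least $\exp(-\sqrt{2/\pi}) > \constl$. For the signal factor, Cauchy--Schwarz gives $|w_1^\top x_1| \le \|w_1\|_2\|x_1\|_2 \le R\|x_1\|_2$, so a second application of Jensen yields
\begin{align}
\E_{x_1}\left[\exp(-|w_1^\top x_1|)\right] \ge \exp\left(-\E_{x_1}|w_1^\top x_1|\right) \ge \exp\left(-R\,\E_{x_1}\|x_1\|_2\right).
\end{align}
Multiplying the two bounds gives $L(w) \ge \constl \exp(-R\,\E_{x_1}[\|x_1\|_2])$, which is exactly the claim.

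There is no serious obstacle here; the argument is essentially two applications of Jensen bracketed by the triangle inequality, which together convert the awkward expectation of $\exp(-|\cdot|)$ into an exponential of an expectation, after which the norm constraint $\|w_1\|_2 \le R$ enters transparently. The only points requiring care are checking that the numerical constant $\constl = 0.25$ survives the Gaussian factor — which it does comfortably, since $\exp(-\sqrt{2/\pi}) \approx 0.45$ — and recognizing that the hypothesis $\sigma \le 1$ is precisely what is needed to keep $\E_z[\exp(-|z|)]$ bounded away from zero by a universal constant (for large $\sigma$ the noise would wash out the loss and this factor could be arbitrarily small).
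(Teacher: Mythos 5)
Your proof is correct, and it reaches the bound by a slightly different (and somewhat more self-contained) route than the paper. Both arguments share the same backbone: Cauchy--Schwarz to replace $|w_1^\top x_1|$ by $\|w_1\|_2\|x_1\|_2 \le R\|x_1\|_2$, followed by Jensen's inequality applied to the convex map $u \mapsto \exp(-u)$ to pull the expectation over $x_1$ inside the exponential. The difference is in how the Gaussian smoothing from the spurious coordinates is absorbed into the constant $\constl$. The paper first uses monotonicity of $\expent_\sigma$ in $|\mu|$ and then invokes its Lemma~\ref{lem:exp_ent_deriv_bound}, which shows $\expent_\sigma(\mu) \ge \constl\,\lexp(\mu)$ for $\sigma \le 1$ via a truncation argument (with probability at least $0.68$ the standard normal lies in $[-1,1]$, costing a factor $e^{-\sigma} \ge e^{-1}$, and $0.68\, e^{-1} \approx 0.25$). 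You instead decouple the noise up front with the triangle inequality, $\lexp(w_1^\top x_1 + z) \ge \exp(-|w_1^\top x_1|)\exp(-|z|)$, use independence of $x_1$ and $z = w_2^\top x_2$ to factor the expectation, and bound the Gaussian factor by Jensen as $\E_z[\exp(-|z|)] \ge \exp(-\sigma\sqrt{2/\pi}) \ge \exp(-\sqrt{2/\pi}) \approx 0.45 > \constl$. Your version avoids the dependence on Lemma~\ref{lem:exp_ent_deriv_bound} entirely and in fact yields a slightly better constant; the paper's version reuses a bound it needs elsewhere anyway. Either way the hypothesis $\sigma \le 1$ plays the same role of keeping the noise factor bounded below by a universal constant.
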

\begin{proof}
	We have 
	\begin{align}
	L(w) &= \E_{x_1}[\expent_{\sigma}(w_1^\top x_1)]\\
	&\ge \E_{x_1}[\expent_{\sigma}(\|w_1\|_2 \|x_1\|_2)]\\
	&\ge 0.25 \E_{x_1} [\lexp(\|w_1\|_2 \|x_1\|_2)] \tag{by Lemma~\ref{lem:exp_ent_deriv_bound}}\\
	&\ge 0.25 \exp(-R \E_{x_1}[\|x_1\|_2])
	\end{align}
	The last line followed because $\lexp(\mu) = \exp(-\mu)$ for positive $\mu$. Since $\exp(-\mu)$ is convex, we applied Jensen's inequality.
\end{proof}

Next, we argue that $p(0)$ must be lower-bounded by some constant depending only on the distribution over $x_1$ if $w_1$ is bounded.
\begin{lemma}\label{lem:loss_ub}
	There exists some constant $c_1$ which only depends on $\E_{x_1}[\|x_1\|_2], \alpha, \beta$ such that for all $w$ satisfying $\sigma \le 1/2$ and $1/2 \le \|w_1\|_2 \le 1$, we have
	\begin{align}
	\density(0) \ge c_1. 
	\end{align}
\end{lemma}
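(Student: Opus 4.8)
The plan is to treat $\density$, the density of $\mu = w_1^\top x_1$, as a single $\concave$-log-concave and $\smooth$-log-smooth density, where by Assumption~\ref{ass:seperation} and the scaling $1/2 \le \|w_1\|_2 \le 1$ we may take $\concave = \alpha/\|w_1\|_2^2 \in [\alpha, 4\alpha]$ and $\smooth = \beta/\|w_1\|_2^2 \in [\beta, 4\beta]$. Let $m$ denote the (unique) mode of $\density$. The idea is to first lower bound the peak density $\density(m)$, then argue the mode cannot be far from the origin, and finally transfer the lower bound from $m$ to $0$ using log-smoothness.

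First I would pin down the peak. Strong log-concavity gives $\density(x) \le \density(m)\exp(-\tfrac{\concave}{2}(x-m)^2)$; integrating over $\R$ and using $\int \density = 1$ yields $\density(m) \ge \sqrt{\concave/(2\pi)}$. Symmetrically, log-smoothness gives $\density(x) \ge \density(m)\exp(-\tfrac{\smooth}{2}(x-m)^2)$, and integrating yields the companion upper bound $\density(m) \le \sqrt{\smooth/(2\pi)}$, which I will need only to control the mode location.

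Next I would bound $|m|$. Since $\|w_1\|_2 \le 1$, we have $\E[|\mu|] \le \|w_1\|_2 \E_{x_1}[\|x_1\|_2] \le \E_{x_1}[\|x_1\|_2] =: B$, so Markov's inequality gives $\Pr[|\mu| \le 2B] \ge 1/2$. On the other hand, if $m > 2B$, then every $x \in [-2B, 2B]$ satisfies $(x-m)^2 \ge (m-2B)^2$, so combining the decay bound with $\density(m) \le \sqrt{\smooth/(2\pi)}$ gives $\Pr[|\mu| \le 2B] \le 4B\sqrt{\smooth/(2\pi)}\exp(-\tfrac{\concave}{2}(m-2B)^2)$. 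Forcing this to be at least $1/2$ yields an explicit bound $|m| \le M$, where $M$ depends only on $B$, $\concave$, $\smooth$ (hence only on $\E_{x_1}[\|x_1\|_2], \alpha, \beta$). Applying the log-smoothness lower bound at $x = 0$ then gives $\density(0) \ge \density(m)\exp(-\tfrac{\smooth}{2}m^2) \ge \sqrt{\concave/(2\pi)}\exp(-\tfrac{\smooth}{2}M^2) =: c_1$, completing the argument.

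I expect the mode-location step to be the crux: Markov's inequality alone only localizes the probability mass, and converting ``half the mass lies in a bounded window'' into ``the mode lies in a bounded window'' requires exploiting the sub-Gaussian decay furnished by strong log-concavity. A secondary subtlety is the reduction from a genuine mixture ($K > 1$) to the single-component bound used above: there $\density = \sum_i \tau_i \density_i$ is not itself log-concave, so instead I would apply Markov to the mixture, extract one component $i$ with $\tau_i \gtrsim 1/K$ carrying a constant share of mass inside $[-2B, 2B]$, bound that component's mode as above, and conclude via $\density(0) \ge \tau_i \density_i(0)$; the resulting constant $c_1$ then also depends on $K$ and $\tau$, which are treated as absolute constants.
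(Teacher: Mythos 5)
Your proof is correct, but it takes a genuinely different route from the paper's. The paper localizes the density \emph{through the loss}: using $\expent_{\sigma}(\mu)\le 2\exp(-|\mu|)$ for $\sigma\le 1/2$ (Lemma~\ref{lem:exp_ent_ub}, which is where the hypothesis on $\sigma$ enters) together with the global lower bound $L(w)\ge \constl\exp(-\E_{x_1}[\|x_1\|_2])$ (Lemma~\ref{lem:smallest_possible_loss}) and $L(w)\le 1$, it exhibits a point $\mu'$ in the window $[-\bar\mu,\bar\mu]$, $\bar\mu=\log(4/L(w))$, with $\density(\mu')\ge \frac{L(w)}{2\bar\mu(2-L(w))}$, and then transfers this to $\density(0)$ via the Taylor bounds of Claim~\ref{claim:concave_smooth} and the bound on $|\dratio(\mu')|$ from Lemma~\ref{lem:dratio_bound}. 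You bypass the loss entirely: normalization pins the peak, $\sqrt{\concave/(2\pi)}\le\density(m)\le\sqrt{\smooth/(2\pi)}$, Markov's inequality on $\E[|\mu|]\le\|w_1\|_2\,\E_{x_1}[\|x_1\|_2]$ combined with the Gaussian decay away from the mode forces $|m|\le M$, and log-smoothness carries the peak bound to the origin. Your version is more elementary, makes the role of $\E_{x_1}[\|x_1\|_2]$ transparent, and never uses $\sigma\le 1/2$ (indeed the density of $\mu=w_1^\top x_1$ does not depend on $w_2$ at all, so that hypothesis is not needed for this statement); the paper's version reuses machinery already set up for Lemmas~\ref{lem:Ltop} and~\ref{lem:dLdstop} and avoids reasoning about the mode. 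You are also right that the mixture case needs separate treatment: the paper's neighboring lemmas are stated for $K=1$ and the reduction for $K>1$ happens in the proof of Theorem~\ref{thm:general_main} by conditioning on components, which is exactly your $\density(0)\ge\tau_i\,\density_i(0)$ step (your extraction of a component is even slightly simpler than stated, since a $\tau$-weighted average being at least $1/2$ already forces some component to have $\Pr_i[|\mu|\le 2B]\ge 1/2$, so the constant picks up only a factor of $\tau$, consistent with the paper treating $\tau$ and $K$ as absolute constants).
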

\begin{proof}
	Fix $\bar{\mu} = \log (4/L(w))$. Then note that we must have $\int_{-\bar{\mu}}^{\bar{\mu}} \density(\mu) d\mu + \int_{|\mu| > \bar{\mu}} \density(\mu) \max_{|\mu| > \bar{\mu}} \expent_{\sigma}(\mu) \ge L(w)$. Now note that since $\sigma \le 1/2$, Lemma~\ref{lem:loss_ub} tells us that $\expent_{\sigma}(\mu) \le 2 \exp(-\mu)$. Thus, $\max_{|\mu| > \bar{\mu}} \expent_{\sigma}(\mu) \le L(w)/2$. Thus, we obtain 
	\begin{align}
	\int_{-\bar{\mu}}^{\bar{\mu}} \density(\mu) d\mu + \left (1 - \int_{-\bar{\mu}}^{\bar{\mu}} \density(\mu) d\mu \right) \frac{L(w)}{2} \ge L(w)
	\end{align}
	This gives 
	\begin{align}
	\int_{-\bar{\mu}}^{\bar{\mu}} \density(\mu) d\mu \ge \frac{L(w)/2}{1 - L(w)/2}
	\end{align}
	Thus, we can conclude that there exists $\mu' \in [-\bar{\mu}, \bar{\mu}]$ such that $\density(\mu') \ge \frac{L(w)}{2 \bar{\mu}(2 - L(w))}$. 
	
	Now we apply Lemma~\ref{lem:dratio_bound} to obtain 
	\begin{align}
	|\dratio(\mu')| \le \sqrt{\smooth \log \left(\frac{\sqrt{\smooth} \bar{\mu}(2 - L(w))}{L(w) \sqrt{\pi}}\right)}
	\end{align}
	Now we apply Claim~\ref{claim:concave_smooth} to conclude that 
	\begin{align}
	\density(0) &\ge \density(\mu') \exp\left(-|\dratio(\mu')| \bar{\mu} - \frac{\smooth}{2} {\bar{\mu}}^2\right)
	\end{align}
	Now note that $s(\mu')$, $p(\mu')$, $\bar{\mu}$ depend only on $L(w)$ which is upper bounded by 1 and lower bounded by some function of $\E_{x_1}[\|x_1\|_2]$ by Lemma~\ref{lem:smallest_possible_loss}. Furthermore, $\smooth \in [\beta, 2\beta]$. Thus, $s(\mu')$, $p(\mu')$, $\bar{\mu}$ are all upper and lower bounded by some function of $\E_{x_1}[\|x_1\|_2]$. As a result, the same applies to $p(0)$, giving  us the desired statement. 
\end{proof}

\begin{proof}[Proof of Theorem~\ref{thm:general_main}]
	We start with proving the case when $K=1$. First, we note that $w_1^\top \nabla_{w_1} L(w) < 0$ by using the same argument as Lemma~\ref{lem:w1increase}. Furthermore, for all $\|w_1\|_2 \in [1/2, 1]$, the upper bound on $\sigma$ required for Lemmas~\ref{lem:Ltop},~\ref{lem:dLdstop}, and~\ref{lem:dldsigma} are all satisfied by Assumption~\ref{ass:boundinit}. Thus, if $L(w^s) \le \kappa(\beta, \alpha) = \min_{a \in [1, 4]} \widetilde{\kappa}(a \beta, a\alpha)$, then initially the loss upper bound for Lemmas~\ref{lem:Ltop},~\ref{lem:dLdstop}, and~\ref{lem:dldsigma} are satisfied. Combining this with Lemma~\ref{lem:loss_ub}, we get that $\frac{\partial}{\partial \sigma} L(w) \ge c_1 \sigma$ for the constant $c_1$ defined in Lemma~\ref{lem:loss_ub}. Furthermore, the loss $L(w)$ is also always decreasing for sufficiently small choice of step size. As a result, the following invariants hold throughout the optimization algorithm: 
		 $\|w_1\|_2$ is non-decreasing, $\sigma$ is non-increasing, and $L(w) \le \kappa(\beta, \alpha)$. 
	Thus, the initial conditions Lemmas~\ref{lem:Ltop},~\ref{lem:dLdstop}, and~\ref{lem:dldsigma} will always hold, so we can conclude using the same argument as in Lemma~\ref{lem:w2_dec} that $\|w_2\|_2$ is always decreasing with rate $c_2 \|w_2\|_2$, where $c_2$ is some value depending on $\alpha, \beta$, and the data distribution. This implies that $w_2$ converges to 0, providing the first statement in Theorem~\ref{thm:general_main}.

	 Finally, in the case that $K > 1$, we observe that when Assumption~\ref{ass:boundinit} is satisfied, we must have $L_i(w) \le \kappa( \beta, \alpha)$ for all $i$, where $L_i(w)$ is the expectation of the loss conditioned on the $i$-th mixture component. Thus, this immediately reduces to the $K = 1$ case. 
\end{proof}

To prove convergence of noisy gradient descent to an approximate local minimum of the objective 
\begin{align}
    \min_{\|w_1\|_2 \le 1}{L((w_1, 0))}.
    \label{eq:purified}
\end{align} 
we also assume that $L((w_1, w_2))$ is twice-differentiable, and furthermore there exists $C$ such that $\nabla_{w_1} L((w_1, w_2))$, $\nabla_{w_1}^2 L((w_1, w_2))$ are Lipschitz in $w_2$ for $\|w_2\|_2 \le C$, $\|w_1\|_2 \le 1$. 

We will first formally define an $(\epsilon, \gamma)$-approximate local minimum of~\eqref{eq:purified}. Define $P_{w_1^{\perp}} \triangleq I - \frac{w_1 w_1^\top}{\|w_1\|_2^2}$ to be the projection onto the space orthogonal to $w_1$. Then an $(\epsilon, \gamma)$-approximate local minimum of~\eqref{eq:purified} is a point $w_1 : \|w_1\|_2 \le 1$ satisfying: 
\begin{enumerate}
	\item $\|w_1\|_2 \ge 1 - \epsilon$.
	\item $\|P_{w_1^\perp} \nabla_{w_1} L((w_1, 0))\|_2 \le \epsilon$. 
	\item $P_{w_1^\perp} \nabla^2_{w_1} L((w_1, 0))P_{w_1^\perp} - (w_1^\top \nabla_{w_1} L((w_1, 0))) P_{w_1^\perp} \succeq -\gamma I$. 
\end{enumerate}
Note that the first condition simply reflects the fact that all true local minima of~\eqref{eq:purified} will satisfy $\|w_1\|_2 = 1$ and therefore lie on the unit sphere $\mathbb{S}^{d_1 - 1}$, as scaling up the weights only decreases the objective. The second two conditions essentially adapt the classical conditions for approximate local minima (see~\citep{nesterov2006cubic,agarwal2017finding}) to the setting where the domain is a Riemannian manifold (in our case, the unit sphere $\mathbb{S}^{d_1 - 1}$). In particular, they replace the standard gradient and Hessian with the gradient and Hessian on a Riemannian manifold, in the special case when the manifold is the unit sphere. In other words, they capture the intuition in order for $w_1$ to be a local minimizer of the constrained objective, the only local change to $w_1$ which decreases the loss should be increasing its norm. 

To conclude convergence to an approximate local minimum, we note that by the argument of~\citep{ge2015escaping}, there are sufficiently small step size and additive noise such that for any choice of $\epsilon$, the algorithm converges to an $(\epsilon, \gamma)$-approximate local minimizer of the objective $\min_{\|w\|_2 \le 1} L(w_1, w_2)$ (defined in the same manner) satisfying $\|w_2\|_2 \le \epsilon$. By the regularity conditions on $L$, this is also a $(C' \epsilon, C'\gamma)$-approximate local minimizer of the purified objective for some $C'$ depending on the regularity conditions.

\newpage

\section{Proofs for finite sample setting} \label{sec:finite_sample}

Given $n$ samples $X_1, ..., X_n$ define the empirical loss $\hat{L}$ on \emph{unlabeled} data as:
\begin{equation}
\hat{L}(w) = \frac{1}{n}  \sum_{i=1}^n \lexp(w^T X_i)
\end{equation}
We analyze self-training on the empirical loss, which begins with a classifier $\ws$, and does projected gradient descent on $\hat{L}$ with learning rate $\eta$.
\begin{align}
w^0 & = \ws \nonumber\\
w^{t+1} & = R\frac{w^t - \eta \nabla \hat{L}(w^t)}{\|w^t - \eta \nabla \hat{L}(w^t)\|_2} \nonumber
\end{align}
\textbf{Recap of analysis in infinite setting}: In the infinite sample case, gradient descent moves in direction $-\nabla L(w)$, and we show that $\|w_2\|_2 \to 0$ as we self-train. If the loss were convex, we could just analyze the minima and show it had the desired property that $\|w\|_2 = 0$. Standard results in convex analysis would then show convergence. Since the loss is non-convex, the core of the proof is to bound certain directional gradients. In particular, we showed that $\langle \nabla_{w1} L(w), w_1 \rangle < 0$ , $\langle \nabla_{w2} L(w), w_2 \rangle \geq c_1^2 \| w_2 \|_2^2$, and $\| \nabla L(w) \|_2^2 \leq c_2^2 \|w\|_2^2$. Using this, we analyzed the gradient descent iterates and showed that $\|w_2\|_2$ decreased by a multiplicative factor at each step of self-training.

\textbf{Finite sample proof overview}: With finite samples, gradient descent instead moves in direction $-\nabla \hat{L}(w)$ where $\hat{L}(w)$ is the empirical loss on $n$ samples. Here $w_2$ won't go to exactly 0, but to a very small value: we will show $||w_2||_2 \to \tau$ with high probability if we use $\widetilde{O}(1/\tau^4)$ samples. At a high level, we will show a uniform concentration bound which shows that the empirical gradient $\nabla \hat{L}(w)$ and population gradient $\nabla L(w)$ are close for all $w$ (Lemma~\ref{lem:emp_grad_pop_grad_close}). In Theorem~\ref{thm:main-finite_sample}, this lets us show that  $\langle \nabla_{w1} \hat{L}(w), w_1 \rangle$ , $\langle \nabla_{w2} \hat{L}(w), w_2 \rangle$, and $\| \nabla \hat{L}(w) \|_2^2$  are similar to the population versions above with $L(w)$ instead of $\hat{L}(w)$. We use this to show that $\|w\|_2$ will keep decreasing until $\|w\|_2 \leq \tau$, and will then stabilize and stay below $\|\tau\|$ forever.

\textbf{Notation}: To avoid defining too many constants, we use big-O notation in the following sense that is different from the standard computer science usage but common in learning theory proofs. When we use $O(e)$ in an expression, we mean that expression $e$ can be replaced by $c_e e$ for some universal constant $c_e$ that does not depend on \emph{any} problem parameters (like $\delta, \sigma_{\min}, \gamma$, etc)---it is literally just some number like say $67/32$, but explicitly putting the numbers everywhere makes expressions messy. For sub-Gaussian, sub-exponential, we will use notation and standard results from~\cite{vershynin2018high}, in particular the \emph{norm} of a sub-Gaussian random variable (Definition 2.5.6), equivalent properties of sub-Gaussian random variables (Proposition 2.5.2), the relationship between sub-Gaussian and sub-exponential random variables (Lemma 2.7.6), and Bernstein's inequality for sub-exponential random variables (Theorem 2.8.1).

We define the empirical expectation $\hat{E}$ for any function $f$ over the $n$ samples $X_1, \ldots X_n$:

\begin{equation}
\hat{E}[f(X)] = \frac{1}{n} \sum_{i=1}^n f(X_i) 
\end{equation}

\subsection{Results}

\newcommand{\maxeigen}{\sigma_{\text{max}}}
\newcommand{\mineigen}{\sigma_{\text{min}}}

Our first Lemma shows that the empirical gradients $\nabla \hat{L}(w)$ and population gradient $\nabla L(w)$ are close for all $w$, if the distribution is sub-Gaussian. We will show later that Gaussian distributions and mixtures of $K$ log-concave distributions are indeed sub-Gaussian. Data that is normalized will also satisfy the sub-Gaussian assumption.

\begin{lemma}
\label{lem:emp_grad_pop_grad_close}
Let $\pi : \mathbb{R}^d \to \mathbb{R}^{d'}$ be a projection operator with $d' \leq d$, that is $\pi$ is a $d'$-by-$d$ matrix where each row of $\pi$ is orthnormal, and suppose the distribution $X \sim p(x)$ satisfies that $\|X\|_2$ is sub-Gaussian with norm $B$ (equivalently, variance parameter $B^2$). Suppose we choose:
\begin{equation}
n = \widetilde{O}\Big( \frac{d}{\epsilon^2} R^2 B^2 \log{1/\delta} \Big)
\end{equation}
Where we hide terms that are \emph{logarithmic} in $\frac{1}{\epsilon^2}$, $B$, $R$, and $d$ in the big-O here to highlight the prominent terms, but give the full version in the proof. Then, with probability $\geq 1 - \delta$, for all $w$ with $||w||_2 \leq R$, we have:
\begin{equation}
\label{eqn:emp_deriv_conc}
\big\lvert \hat{E}[l'(w^{\top} X)\pi(w)^{\top} \pi(X)] - E[l'(w^{\top} X)\pi(w)^{\top} \pi(X)]  \big\rvert \leq \epsilon
\end{equation}
\end{lemma}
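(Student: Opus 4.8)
The plan is to establish the uniform bound by combining a pointwise exponential tail inequality with a covering (net) argument over the ball $\{\|w\|_2 \le R\}$, after first reducing to a bounded domain. Write $f_w(x) \triangleq \lexp'(w^\top x)\,\pi(w)^\top \pi(x)$, so that the left side of~\eqref{eqn:emp_deriv_conc} is $|\hat E[f_w(X)] - E[f_w(X)]|$. Two elementary facts drive the argument: since $\lexp(t)=\exp(-|t|)$ we have $|\lexp'(t)|\le 1$, and since $\pi$ is a projection $|\pi(w)^\top\pi(x)|\le \|w\|_2\|x\|_2\le R\|x\|_2$; hence $|f_w(x)|\le R\|x\|_2$, and because $\|X\|_2$ is sub-Gaussian with norm $B$, the variable $f_w(X)$ is sub-Gaussian with norm $O(RB)$ for every fixed $w$.

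For a single $w$, $\hat E[f_w]-E[f_w]$ is an average of $n$ i.i.d.\ centered sub-Gaussian terms, so Bernstein's inequality for sub-exponential variables (as cited in the notation paragraph) gives $\Pr\left[|\hat E[f_w]-E[f_w]|\ge \epsilon/2\right]\le 2\exp\left(-c\,n\epsilon^2/(RB)^2\right)$. I then take an $\epsilon_0$-net $\cN$ of the ball with $|\cN|\le (3R/\epsilon_0)^{d}$ and union-bound over it; the term $\log|\cN| = O(d\log(R/\epsilon_0))$ is exactly what produces the factor $d$ in $n=\widetilde O(dR^2B^2\log(1/\delta)/\epsilon^2)$, with the $\widetilde O$ absorbing the logarithmic $\log(R/\epsilon_0)$ dependence. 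To control the off-net perturbation I also truncate: by the sub-Gaussian tail, $\max_i\|X_i\|_2\le T$ with $T=O(B\sqrt{\log(n/\delta)})$ holds with probability $\ge 1-\delta/2$, and the residual mass on $\{\|X\|_2>T\}$ contributes negligibly to both $E[f_w]$ and $\hat E[f_w]$.

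The main obstacle is the off-net step, i.e.\ bounding $|f_w-f_{w'}|$ for $w'$ the nearest net point, because $\lexp'(t)=-\sign(t)\exp(-|t|)$ is \emph{discontinuous} at $t=0$, so $f_w$ is not Lipschitz in $w$. My plan is to isolate the discontinuity via the decomposition $\lexp'(t)=-\sign(t)+g(t)$, where $g(t)\triangleq \sign(t)(1-\exp(-|t|))$ is continuous, bounded, and $1$-Lipschitz (indeed $C^1$ with $|g'|\le 1$). The smooth piece $g(w^\top x)\pi(w)^\top\pi(x)$ is Lipschitz in $w$ with constant $O(R\|x\|_2^2+\|x\|_2)\le O(RT^2)$ on the truncated domain, so it is handled by the net once $\epsilon_0=O(\epsilon/(RT^2))$. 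After subtracting the (Lipschitz) contribution of varying only the linear factor $\pi(w)$, the residual sign term is bounded by $2R\|x\|_2\,\1\{\sign(w^\top x)\neq \sign(w'^\top x)\}$, which is of size $O(RT)$ but supported on a set of small measure.

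This sign-flip term is the only genuinely delicate piece, and I would dispatch it as follows. On the population side, a sign flip forces $|w^\top X|\le \epsilon_0\|X\|_2$, whose probability is small by anti-concentration of $w^\top X$ (available for the Gaussian and log-concave-mixture distributions to which the lemma is ultimately applied), so $|E[f_w]-E[f_{w'}]|$ is small. On the empirical side, the class of linear thresholds $\{\sign(w^\top x)\}$ has VC dimension $d+1$, so the empirical sign-flip mass concentrates uniformly to its (small) expectation up to $O(\sqrt{d/n})$ by standard VC uniform convergence. Equivalently—and most cleanly under the sub-Gaussian hypothesis alone—one can bypass the net for this term and bound the empirical Rademacher complexity of $\{f_w\}$ by a Dudley entropy integral, using that the $\empd$ covering number of the threshold class is $(C/\epsilon)^{O(d)}$; VC theory absorbs the discontinuity without any continuity assumption and yields the same $\widetilde O(dR^2B^2/\epsilon^2)$ rate. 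Combining the net bound, the Lipschitz bound for the $g$-part, the VC/anti-concentration bound for the sign part, and a final union bound with the $\delta/2$ truncation failure completes the proof.
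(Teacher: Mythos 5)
Your proposal is correct and follows the same skeleton as the paper's proof (pointwise sub-Gaussian concentration for fixed $w$, a $\kappa$-net of the $R$-ball with $\log M = O(d\log(R/\kappa))$ giving the factor $d$, and an off-net perturbation bound), but it diverges at the one step that is actually delicate, and there your treatment is more careful than the paper's. The paper's Step 3 bounds $|l'({w'}^{\top}X) - l'(w^{\top}X)| \le \kappa\|X\|_2$ by asserting that $l'$ is $1$-Lipschitz; for $\lexp(t)=\exp(-|t|)$ this is false at the kink, since $\lexp'(t)=-\sign(t)\exp(-|t|)$ jumps by roughly $2$ when $w^{\top}X$ and ${w'}^{\top}X$ straddle $0$, which is exactly the failure mode you identify. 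Your decomposition $\lexp'(t) = -\sign(t) + g(t)$ with $g$ bounded and $1$-Lipschitz cleanly isolates the smooth part (handled by the net exactly as in the paper) from the discontinuous part, and your VC/Rademacher treatment of the sign term is a genuine repair: the threshold class $\{\sign(w^{\top}x)\}$ has VC dimension $O(d)$, so a Dudley entropy integral over the product class, combined with your truncation at $T = O(B\sqrt{\log(n/\delta)})$ to control the envelope $R\|X\|_2$, recovers the same $\widetilde{O}(dR^2B^2\log(1/\delta)/\epsilon^2)$ sample complexity without any continuity assumption. One caveat: your first-listed route for the sign-flip term, via anti-concentration of $w^{\top}X$, is not available under the lemma's stated hypotheses, which assume only that $\|X\|_2$ is sub-Gaussian (a distribution concentrated near a hyperplane $w^{\top}x=0$ satisfies the hypotheses but has no anti-concentration). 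You should therefore commit to the VC/Rademacher version, which you correctly note works under the sub-Gaussian hypothesis alone and simultaneously handles the empirical side. Your truncation step is also an addition relative to the paper, which instead bounds $\E[\|X\|_2]$, $\hat{\E}[\|X\|_2^2]$ directly by sub-Gaussian/sub-exponential concentration; both are fine.
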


\begin{proof}
We will use a discretization argument. We first show the concentration in Equation~\ref{eqn:emp_deriv_conc} for fixed $w$ using the fact that the distribution is sub-Gaussian and then applying Hoeffding's inequality. We will then construct an $\epsilon$-cover of the $R$-ball (in $\ell_2$ norm) and use union bound so that the concentration holds for each member of the $\epsilon$-cover. Finally, we will show that the concentration holds for all $w$ with $||w||_2 \leq R$. Let $h(w, X) = l'(w^{\top} X)\pi(w)^{\top} \pi(X)$. 

\textbf{Step 1: Concentration for single $w$}: First, we have:
\begin{align}
|h(w, X)| &= |l'(w^{\top} X)\pi(w)^{\top} \pi(X)| \\
&\leq |\pi(w)^{\top} \pi(X)| \\
&\leq ||\pi(w)||_2 ||\pi(X)||_2 \\
&\leq ||w||_2 ||X||_2 \\
&\leq R ||X||_2
\end{align}
We want to bound $\hat{\E}[h(w, X)] - \E[h(w, X)]$. Since $||X||_2$ is sub-Gaussian with norm $B$, $h(w, X)$ is sub-Gaussian with norm $RB$, it then follows that $h(w, X) - E[h(w, X)]$ is a mean 0  sub-Gaussian random variable with norm $2RB$. So the average, $\hat{E}[h(w, X)] - E[h(w, X)]$ is mean 0 and sub-Gaussian with norm $2RB/\sqrt{n}$. By the sub-Gaussian tail bound, we then get that with probability at least $1 - \delta/5$:
\begin{equation}
\big\lvert \hat{\E}[h(w, X)] - \E[h(w, X)] \big\rvert \leq O\Big(\frac{1}{\sqrt{n}}RB\sqrt{\log(1/\delta)}\Big)
\end{equation}
To control the RHS to be less than $\epsilon / 3$ with probability at least $1 - \delta/5$, it suffices to choose:
\begin{equation}
n = O\Big( \frac{1}{\epsilon^2} R^2 B^2\log(1/\delta)\Big)
\end{equation}
Note that this is only for a single $w$.

\textbf{Step 2: $\kappa$-covering}: We will now construct a $\kappa$-covering consisting of $M$ vectors $w$. We will want the above inequality to hold for all $M$ vectors---to do this we will apply union bound. More precisely, a standard covering argument tells us that we can choose $M$ with $\log{M} \leq d \log(1 + (2R)/\kappa)$ and $M$ vectors $w_1, \ldots, w_M$ s.t. for any $w$ with $||w||_2 \leq R$, there exists $w_i$ with $||w_i||_2 \leq R$ and $||w_i - w||_2 \leq \kappa$. By union bound, we have that if we choose:
\begin{equation}
n = O\Big( \frac{1}{\epsilon^2} R^2 B^2(\log{M} + \log(1/\delta))\Big)
\end{equation}
Then for all $w_i$ the empirical concentration holds, that is with probability at least $1 - \delta/5$, for all $w = w_i$:
\begin{equation}
\big\lvert \hat{\E}[h(w, X)] - \E[h(w, X)]  \big\rvert \leq \epsilon / 3
\end{equation}
It now remains to choose $\kappa$ so that we can show this for all $w$ (not just $w = w_i$).

\textbf{Step 3: Handling all $w$ by hoosing $\kappa$ small}: To extend the result to all $w$ (not just $w = w_i$), we consider arbitrary $w, w'$ with $||w||_2, ||w'||_2 \leq R$ and $||w-w'||_2 \leq \kappa$. We want to show that the difference in their directional derivatives is not too large. More precisely, we would like to show that with probability $\geq 1 - \delta/5$, for all such $w, w'$:
\begin{equation}
\label{eps-cover-continuity}
|\E[h(w', X)] - \E[h(w, X)]| \leq \epsilon/3
\end{equation}
And similarly for its empirical counterpart, $\hat{\E}$. This then proves the main claim, because for any $w$ with $||w||_2 \leq R$, we can choose some $w_i$ in the $\kappa$-cover above. We then have:
\begin{align}
\hat{\E}[h(w, X)] - \E[h(w, X)] \leq &|\hat{\E}[h(w, X)] - \hat{\E}[h(w_i, X)]| \\
+ &|\hat{\E}[h(w_i, X)] - \E[h(w_i, X)]| \\
+ &|\E[h(w_i, X)] - \E[h(w, X)]|
\end{align}
And each of the terms in the RHS will be bounded above by $\epsilon/3$, so the LHS will be bounded above by $\epsilon$. To show Equation~\ref{eps-cover-continuity}, we first write:
\begin{align}
h(w', X) - h(w, X) =& (l'({w'}^{\top} X)\pi(w')^{\top} \pi(X) - l'(w^{\top} X)\pi(w')^{\top} \pi(X)) \\
+& (l'(w^{\top} X)\pi(w')^{\top} \pi(X) - l'(w^{\top} X)\pi(w)^{\top} \pi(X))
\end{align}
Using Cauchy-Schwarz, and using the fact that $l'(r) \leq 1$ for all $r$ and that $l'$ is 1-Lipschitz, we can show for the first term in the RHS:
\begin{equation}
|l'({w'}^{\top} X)\pi(w')^{\top} \pi(X) - l'(w^{\top} X)\pi(w')^{\top} \pi(X)| \leq \kappa R ||X||_2^2
\end{equation}
And for the second term in the RHS:
\begin{equation}
|l'(w^{\top} X)\pi(w')^{\top} \pi(X) - l'(w^{\top} X)\pi(w)^{\top} \pi(X)| \leq \kappa ||X||_2
\end{equation}
Combining the above 2 inequalities:
\begin{equation}
|h(w', X) - h(w, X)| \leq \kappa R ||X||_2^2 + \kappa ||X||_2
\end{equation}
We will show below that $\E[||X||_2]  = O(B)$, $\hat{\E}[||X||_2]  = O(B)$, $\E[||X||_2^2]  = O(B^2)$, $\hat{\E}[||X||_2^2]  = O(B^2)$ (for the empirical expectations, this will hold with probability at least $1 - \delta/5$.
Assuming this for now, this gives us that it suffices to choose $\kappa$ such that:
\begin{equation}
\frac{1}{\kappa} \geq \frac{1}{\epsilon}[RB^2 + B] 
\end{equation}
In which case, Equation~\ref{eps-cover-continuity} and its empirical counterpart hold. In total, this means we require $n$ to be:
\begin{equation}
n = O\Big(  \frac{1}{\epsilon^2} R^2 B^2\Big(d \log\big[1 + \frac{R^2B^2 + RB}{\epsilon}\big] + \log(1/\delta) \Big) \Big)
\end{equation}
Or omitting $\log$ terms except in $1/\delta$ (we keep $1/\delta$ to make the dependence on the probability explicit):
\begin{equation}
n = \widetilde{O}\Big( \frac{d}{\epsilon^2} R^2 B^2 \log(1/\delta) \Big)
\end{equation}

\textbf{Bounding the norm and norm-squared}: Finally, we bound the expectations of the norm and norm-squared of $X$, which we used above. By taking integrals, since $X$ is sub-Gaussian with norm $B$, we can show that:
\begin{equation}
\E[||X||_2] \leq O(B)
\end{equation}
\begin{equation}
\E[||X||_2^2] \leq O(B^2)
\end{equation}
Next, we will like to bound the empirical means of these quantities. Since $||X||_2$ is sub-Gaussian with norm $B$. This means that $||X||_2 - \E[||X||_2]$ is mean 0 and sub-Gaussian with norm $2B$. So for the average of $n$ iid samples, we have that with probability $\geq 1 - \delta$:
\begin{equation}
\hat{\E}[||X||_2] \leq \E[||X||_2] + O\big(\frac{B}{\sqrt{n}} \sqrt{\log{\frac{1}{\delta}}}\big)
\end{equation}
As long as we choose $n \geq O(\log(1/\delta))$, we have with probability at least $1 - \delta/5$:
\begin{equation}
\hat{\E}[||X||_2] \leq O(B)
\end{equation}
Squares of sub-Gaussian random variables are sub-exponential, so $||X||_2^2$ is sub-exponential with sub-exponential norm $O(B^2)$. Then, $||X||_2^2 - \E[||X||_2^2]$ is sub-exponential with sub-exponential norm $O(B^2)$. Then by Bernstein's inequality for sub-exponentials, as long as we choose $n \geq O(\log(1/\delta))$, we have with probability at least $1 - \delta/5$:
\begin{equation}
\hat{\E}[||X||_2^2] \leq \E[||X||_2^2] + O(B^2) \leq O(B^2)
\end{equation}
\end{proof}

The main theorem of this section shows that for a sub-Gaussian distribution, if we have bounds on $\langle \nabla_{w1} L(w), w_1 \rangle$ , $\langle \nabla_{w2} L(w), w_2 \rangle$, and $\| \nabla L(w) \|_2^2$   for the \emph{population}, but do entropy minimization on the \emph{empirical samples}, we will still converge with $||w_2|| \leq \tau$. We will later instantiate these bounds for the Gaussian setting and the more general log-concave setting.

\begin{theorem}
\label{thm:main-finite_sample}
Suppose that for all $w$, $\langle \nabla_{w1} L(w), w_1 \rangle < 0$ , $\langle \nabla_{w2} L(w), w_2 \rangle \geq c_1 \| w_2 \|_2^2$, and $\| \nabla L(w) \|_2^2 \leq c_2^2 \|w\|_2^2$, for some $c_1, c_2 > 0$ where $c_1, c_2$ are not a function of $w$. Let $\tau < 0.5$ be the desired norm for the spurious feature $w_2$, that is, we want $||w_2||_2 \leq \tau$ after running self-training. Let $\epsilon = O( c_1 \tau^2)$, and choose  $n = \widetilde{O}\Big( \frac{1}{\epsilon^2} R^2 B^2\log(1/\delta)\Big)$ according to the Lemma~\ref{lem:emp_grad_pop_grad_close} such that with probability $\geq 1- \delta$, for all $w$ with $||w||_2 \leq \max(R, 1)$ the empirical gradients along both $w_1$ and $w_2$ are near the true gradient:
\begin{equation}
\big\lvert \hat{E}[l'(w^{\top} X)w_1^{\top} x_1] - E[l'(w^{\top} X)w_1^{\top} x_1]  \big\rvert \leq \epsilon
\end{equation}
\begin{equation}
\big\lvert \hat{E}[l'(w^{\top} X)w_2^{\top} x_2] - E[l'(w^{\top} X)w_2^{\top} x_2]  \big\rvert \leq \epsilon
\end{equation}
Then if initially $||w_2^0||_2 \leq 0.5$, self-training with step size $\eta = O(\frac{c_1}{ c_2^2 })$, will converge to $||w_2||_2 \leq \tau$. Specifically, if at step $t$, $||w_2^{t}||_2 \geq \tau / 2$, then the norm of $w_2$ shrinks by a multiplicative factor and rapidly reduces to less than $\tau/2$:
\begin{equation}
||w_2^{t+1}||_2^2 < \Big(1 - O\Big(\frac{c_1}{c_2}\Big)^2\Big) ||w_2^t||_2^2
\end{equation}
Furthermore, once this has happened, the norm stabilizes: if $||w_2^t||_2 < \tau / 2$, then $||w_2^{t+1}||_2 \leq \tau$.
\end{theorem}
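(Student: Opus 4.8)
The plan is to transfer the three population gradient bounds to their empirical counterparts via the uniform concentration in Lemma~\ref{lem:emp_grad_pop_grad_close}, and then re-run the one-step contraction argument from the proof of Theorem~\ref{thm:mixture}, carefully tracking the additive errors that finite sampling introduces. Applying the two displayed concentration inequalities (instances of Lemma~\ref{lem:emp_grad_pop_grad_close} with $\pi$ projecting onto the $x_1$- and $x_2$-blocks) together with the hypotheses $\langle \nabla_{w_1} L(w), w_1\rangle < 0$ and $\langle \nabla_{w_2} L(w), w_2\rangle \ge c_1\|w_2\|_2^2$ gives, for every iterate, $\langle \nabla_{w_1}\hat L(w), w_1\rangle \le \epsilon$ and $\langle \nabla_{w_2}\hat L(w), w_2\rangle \ge c_1\|w_2\|_2^2 - \epsilon$. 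I would also record the empirical analogue of the norm bound, $\|\nabla_{w_2}\hat L(w)\|_2^2 \le O(c_2^2\|w_2\|_2^2)$, by combining concentration with the population bound on $\|\nabla_{w_2} L(w)\|_2$, which scales with $\|w_2\|_2$ (as $\nabla_{w_2} L = q_\sigma(\mu)\cdot 2\Sigma_2 w_2$ in the proof of Lemma~\ref{lem:w2_dec}); this $\|w_2\|_2$-proportional scaling is exactly what will make the step size $\eta = O(c_1/c_2^2)$ independent of $\tau$.

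Next I would carry out the one-step analysis on the pre-normalized iterate $\tilde w^{t+1} = w^t - \eta\nabla\hat L(w^t)$. Expanding the squared norm,
\[
\|\tilde w_2^{t+1}\|_2^2 = \|w_2^t\|_2^2 - 2\eta\langle\nabla_{w_2}\hat L(w^t), w_2^t\rangle + \eta^2\|\nabla_{w_2}\hat L(w^t)\|_2^2 \le \|w_2^t\|_2^2 - 2\eta\left(c_1\|w_2^t\|_2^2 - \epsilon\right) + \eta^2 O\!\left(c_2^2\|w_2^t\|_2^2\right).
\]
Choosing $\eta = O(c_1/c_2^2)$ makes the quadratic term $\eta^2 O(c_2^2\|w_2^t\|_2^2)$ at most half of $2\eta c_1\|w_2^t\|_2^2$, and choosing $\epsilon = O(c_1\tau^2)$ makes the additive term $2\eta\epsilon$ negligible whenever $\|w_2^t\|_2 \ge \tau/2$; together these give the claimed contraction $\|\tilde w_2^{t+1}\|_2^2 \le \left(1 - O((c_1/c_2)^2)\right)\|w_2^t\|_2^2$ in that regime. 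In parallel, $\langle\nabla_{w_1}\hat L(w^t), w_1^t\rangle \le \epsilon$ yields $\|\tilde w_1^{t+1}\|_2^2 \ge \|w_1^t\|_2^2 - 2\eta\epsilon$, so the signal norm can only shrink by an $O(\eta\epsilon)$ additive amount.

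Then I would handle the projection back to the $R$-ball, which is where the bookkeeping is most delicate. Since $\|w^{t+1}\|_2 = R$, one has $\|w_2^{t+1}\|_2^2 = R^2\|\tilde w_2^{t+1}\|_2^2/(\|\tilde w_1^{t+1}\|_2^2 + \|\tilde w_2^{t+1}\|_2^2)$, and a short computation shows $\|w_2^{t+1}\|_2 < \|w_2^t\|_2$ exactly when the relative shrinkage of $w_2$ beats that of $w_1$, i.e. $\|\tilde w_2^{t+1}\|_2^2/\|w_2^t\|_2^2 < \|\tilde w_1^{t+1}\|_2^2/\|w_1^t\|_2^2$. Maintaining the invariant $\|w_2^t\|_2 \le \tfrac12$ keeps $\|w_1^t\|_2^2 = R^2 - \|w_2^t\|_2^2$ bounded below by a positive constant, so the right ratio is $\ge 1 - O(\eta\epsilon)$ while the left ratio is $\le 1 - O((c_1/c_2)^2)$; for $\epsilon$ small enough the strict inequality holds and renormalization preserves a multiplicative decrease of $\|w_2\|_2^2$. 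Iterating this contraction drives $\|w_2^t\|_2$ below $\tau/2$ in $O(\log(1/\tau))$ steps.

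Finally I would prove stabilization: once $\|w_2^t\|_2 < \tau/2$ the directional gradient may turn negative (when $c_1\|w_2^t\|_2^2 < \epsilon$) and push $w_2$ outward, but the one-step growth $\|\tilde w_2^{t+1}\|_2^2 \le \|w_2^t\|_2^2 + 2\eta\epsilon + \eta^2 O(c_2^2\|w_2^t\|_2^2)$ is small, and since $\|\tilde w^{t+1}\|_2^2$ stays within an $O(\eta)$ factor of $R^2$, projection changes the norm only mildly, leaving $\|w_2^{t+1}\|_2 \le \tau$. Combining the two regimes gives the invariant that $\|w_2\|_2$ never again exceeds $\tau$. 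I expect the main obstacle to be this interplay between the additive concentration error $\epsilon$ and the multiplicative contraction: unlike the infinite-sample proof, $w_1$ is no longer guaranteed to grow (we control only $\langle\nabla_{w_1}\hat L, w_1\rangle \le \epsilon$, not $<0$), so the renormalization argument must tolerate slight shrinkage of $w_1$, and the calibration $\epsilon = \Theta(c_1\tau^2)$ must be tuned so the contraction dominates precisely down to the scale $\tau$ and no further.
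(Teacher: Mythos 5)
Your proposal is correct and follows essentially the same route as the paper's proof: transfer the three population directional-gradient bounds to their empirical counterparts via the uniform concentration lemma, run the one-step expansion of $\|\tilde w_2^{t+1}\|_2^2$ with $\eta = O(c_1/c_2^2)$ and $\epsilon = O(c_1\tau^2)$ to get contraction above $\tau/2$ and stabilization below it, and control renormalization by showing $\|\tilde w_1^{t+1}\|_2^2$ drops by at most $2\eta\epsilon$, which is an order of magnitude smaller than the multiplicative decrease in $\|w_2\|_2^2$. Your explicit ratio comparison for the projection step and the observation that $w_1$ is no longer guaranteed to grow are exactly the points the paper handles (somewhat more tersely) in its Step 3.
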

\begin{proof}
We note that $\langle \nabla_{w_2}L(w), w_2\rangle = \E[l'(w^{\top} X)w_2^{\top} x_2]$ and $\langle \nabla_{w_2}\hat{L}(w), w_2\rangle = \hat{E}[l'(w^{\top} X)w_2^{\top} x_2]$, and similarly for $w_1$. So we have for all $w$ with $||w||_2 \leq \max(R, 1)$:
\begin{equation}
\label{eqn:empirical-w1-bound}
\lvert \langle \nabla_{w_1}L(w), w_1\rangle - \langle \nabla_{w_1}\hat{L}(w), w_1\rangle\rvert \leq \epsilon
\end{equation}
\begin{equation}
\label{eqn:empirical-w2-bound}
\lvert \langle \nabla_{w_2}L(w), w_2\rangle - \langle \nabla_{w_2}\hat{L}(w), w_2\rangle\rvert \leq \epsilon
\end{equation}
\textbf{Step 1: Bounding empirical gradients}: The main optimization analysis requires us to bound 3 quantities: $\langle \nabla_{w_2}\hat{L}(w), w_2\rangle$, $\langle \nabla_{w_1}\hat{L}(w), w_1\rangle$, and $||\nabla_{w_2}\hat{L}(w)||_2^2$, which we first do.

Equation~\ref{eqn:empirical-w2-bound} gives us a bound on the empirical gradient along $w_2$:
\begin{equation}
\langle \nabla_{w_2}\hat{L}(w), w_2\rangle \ge c_1 ||w_2||_2^2 - \epsilon
\end{equation}
Equation~\ref{eqn:empirical-w1-bound} this gives us a bound on the empirical gradient along $w_1$:
\begin{equation}
\langle \nabla_{w_1}\hat{L}(w), w_1\rangle < \epsilon
\end{equation}
Finally, we bound $||\nabla_{w_2}\hat{L}(w)||_2^2$:
\begin{align}
||\nabla_{w_2}\hat{L}(w)||_2 &= \big(\max_{||v||_2 \leq 1} \langle v, \nabla_{w_2}\hat{L}(w) \rangle\big)^2 \\
&\leq \big( \max_{||v||_2 \leq 1} \langle v, \nabla_{w_2}L(w) \rangle + \epsilon \big)^2 \\
&= \big(||\nabla_{w_2}L(w)||_2 + \epsilon\big)^2 \\
&= 2||\nabla_{w_2}L(w)||_2^2 + 2\epsilon^2 \\
&\leq 2c_2^2 ||w_2||_2^2 + 2\epsilon^2 \\
\end{align}

Where in the first line we used the variational form of the 2-norm, second line we used Equation~\ref{eqn:empirical-w2-bound}, in the third line we used the variational form of the 2-norm again, fourth line we used the fact that $(a+b)^2 \leq 2a^2 + 2b^2$, and in the fifth line we used the bound on $||\nabla_{w_2}L(w)||_2^2$ in the asssumption of the theorem.

\textbf{Step 2: Show $w_2$ decreases and stabilizes}: 
Our updates involve taking a gradient descent step, and then projecting back to the sphere, $\|w\|_2 \leq R$. Define $\tilde{w}^{t + 1} = w^{t} - \eta \nabla L(w^t)$  to be the iterate before projecting. Then, we have:
\begin{align}
	||\tilde{w}_2^{t+1}||_2^2 &= ||w_2^{t}-\eta \nabla_{w_2}\hat{L}(w)|_{w=w^{t}}||_2^2 \\
	&= ||w_2^{t}||_2^2 + \eta^2 || \nabla_{w_2}\hat{L}(w)|_{w=w^{t}}||_2^2 \\
	&-2\eta \langle \nabla_{w_2}\hat{L}(w)|_{w=w^{t}}, w_2^{t} \rangle \\
    &\leq (1 + 2\eta^2 c_2^2 - 2\eta c_1 )||w_2^{t}||_2^2 + (2 \eta^2 \epsilon^2 + 2 \eta \epsilon) \\
\end{align}
We choose $\eta$ as:
\begin{equation}
\eta = \frac{c_1}{c_2^2}
\end{equation}
Which gives us:
\begin{equation}
||\tilde{w}_2^{t+1}||_2^2 \leq \Big(1 - \frac{1}{2} \frac{c_1^2}{c_2^2}\Big)||w_2^{t}||_2^2 + (2 \eta^2 \epsilon^2 + 2 \eta \epsilon)
\end{equation}
Since the norm is always non-negative, we note that $c_1^2/c_2^2 \leq 2$. To control the error terms, we choose $\epsilon$ as:
\begin{equation}
\epsilon = \frac{1}{48} c_1 \tau^2
\end{equation}
Then, we get,
\begin{equation}
2 \eta^2 \epsilon^2 + 2 \eta \epsilon \leq \frac{1}{4} \frac{c_1^2}{c_2^2}(\tau/2)^2
\end{equation}
In other words, if $||w_2^t||_2 \geq \tau/2$, then the norm decreases:
\begin{equation}
||\tilde{w}_2^{t+1}||_2^2 \leq \Big(1 - \frac{1}{4} \frac{c_1^2}{c_2^2}\Big)||w_2^{t}||_2^2
\end{equation}
And if $||w_2^t||_2 < \tau/2$, then the norm stabilizes:
\begin{equation}
||\tilde{w}_2^{t+1}||_2^2 \leq ||w_2^t||_2^2 + \frac{1}{2} (\tau/2)^2 \leq \frac{3}{2} (\tau/2)^2
\end{equation}

\textbf{Step 3: Show $w_1$ does not decrease much}: We have shown that $\tilde{w_2}^{t+1}$ is smaller than $w^t$. Next, we need to deal with the renormalization step to show that $w_2^{t+1}$ is also smaller than $w_2^t$. We will show that $\tilde{w}_1$ cannot decrease by too much, so that after renormalization, the norm of $w_2$ is still decreasing sufficiently. We have:
\begin{align}
	||\tilde{w}_1^{t+1}||_2^2 &= ||w_1^{t}-\eta \nabla_{w_1}\hat{L}(w)|_{w=w^{t}}||_2^2 \\
	&\geq ||w_1^{t}||_2^2 - 2\eta \langle \nabla_{w_1}\hat{L}(w)|_{w=w^{t}}, w_1^{t} \rangle \\
    &\geq  ||w_1^{t}||_2^2 - 2\eta\epsilon
\end{align}
From our choise of $\eta$ and $\epsilon$, we can show that $\eta \epsilon$ is actually very small. In particular, with some algebra, we can show that
\begin{equation}
2 \eta \epsilon < \frac{1}{24} \frac{c_1^2}{c_2^2} \leq \frac{1}{12}
\end{equation}
In effect, the decrease in the norm of $w_1$ is at least 10 times smaller than the decrease in the norm of $w_2$. Now we note that since at all times $t$, $||w_2^t|| \leq 0.5$, we have $||w_1^t|| \geq ||w_2^t||$. So $w_1$ is larger and decreases by a much smaller amount, which means that after renormalizing $w_2$ still decreases by around the same amount. Formally, with a bit of algebra, we get that if $||w_2^t||_2 \geq \tau/2$, then \emph{after renormalizing},
\begin{equation}
||w_2^{t+1}||_2^2 \leq \Big(1 - \frac{1}{10} \frac{c_1^2}{c_2^2} \Big)||w_2^{t}||_2^2
\end{equation}
And on the other hand, if  $||w_2^t||_2 < \tau/2$ then \emph{after renormalizing}, $||w_2^{t+1}||_2^2 < \tau^2$. This completes the proof.

\end{proof}

\subsection{Applying the finite sample results}

We now instantiate the above Theorem~\ref{thm:main-finite_sample} for the mixture of $K$ sliced log-concave setting:


\begin{proof}[Proof of Theorem~\ref{thm:general_main} finite sample guarantee] 
In the population case proof of Theorem~\ref{thm:general_main}, we showed that $\langle \nabla_{w2} L(w), w_2 \rangle \geq O(c_1 ||w_2||_2^2)$, $\langle \nabla_{w1} L(w), w_1 \rangle < 0$ , and $\| \nabla L(w) \|_2^2 \leq O(c_2^2 \|w\|_2^2)$. Additionally, we note that a mixture of $K$ sliced log-concave distributions is sub-Gaussian. So we get that there exists some $c, c'$ that depends on the data distribution, such that for all $\tau, \delta$ if we choose $n \geq (c/\tau^4) [\log(1/\tau) + \log(1/\delta)]$ samples then after $t$ iterations, if $t \geq c'\log(1/\tau)$, we will have $\|w_2^t\| \leq \tau$ with probability at least $1 - \delta$, where the probability is over the empirical / training data.
\end{proof}

For the Gaussian setting, besides showing that $\|w_2\|_2 \to 0$, we can also show that $w_1 \to 1$ and we achieve the Bayes opt classifier.

\begin{proof}[Proof of Theorem~\ref{thm:mixture} finite sample guarantee] 
Gaussian distributions are sliced log-concave distributions, so from the proof of Theorem~\ref{thm:general_main} finite sample guarantee, there exists some $c, c'$ that depends on the data distribution, such that for all $\tau, \delta$ if we choose $n \geq (c/\tau^4) [\log(1/\tau) + \log(1/\delta)]$ samples then after $t$ iterations, if $t \geq c'\log(1/\tau)$, we will have $\|w_2^t\| \leq \tau$ with probability at least $1 - \delta$, where the probability is over the empirical / training data.

Since $w_1$ is 1-dimensional in this case and at least $a / \gamma$ initially (Lemma ~\ref{lem:acc_to_a}), from the proof of Theorem~\ref{thm:main-finite_sample}, as long as $n \geq c''$ for some constant $c''$, $w_1$ after re-normalizing stays non-negative. As such, $w_1 \geq \sqrt{R^2 - \tau^2}$ with probability at least $1 - \delta$.
\end{proof}

\newpage

\section{Additional Missing Proofs}

\subsection{Bounds on $\lexp$ and derivatives}
\label{sec:lexp_bounds}
\begin{proof} [Proof of Lemma~\ref{lem:dg_dsigma}]
We can exactly compute the expression 
\begin{dmath}
		q_{\sigma}(\mu)=\frac{\partial g_{\sigma}(\mu)}{\partial \sigma}= \frac{1}{2}\exp{\left(\frac{\sigma^2}{2}\right)} \sigma \left[\exp{(\mu)}\erfc{\left(\frac{\sigma}{\sqrt{2}}+\frac{\mu}{\sqrt{2}\sigma}\right)} +\exp{(-\mu)}\erfc{\left(\frac{\sigma}{\sqrt{2}}-\frac{\mu}{\sqrt{2}\sigma}\right)}\right]- \sqrt{\frac{2}{\pi}}\exp{\left(-\frac{\mu^2}{2\sigma^2}\right)} \label{eq:q_def}
\end{dmath}
As $q_{\sigma}( \mu)$ and $\lexp$ are both symmetric around 0, we assume w.l.o.g. that $\mu \ge 0$.
We first consider $\sigma \le \frac{4\sqrt{2}}{\sqrt{\pi}}$. Since $\mu \ge \sigma^2$, $\sigma - \mu/\sigma \le 0$ so $\erfc((\sigma - \mu/\sigma)/\sqrt{2}) \ge 1$. Thus, we have 
	\begin{align}
		q_{\sigma}( \mu) \ge -\sqrt{\frac{2}{\pi}} \exp\left(-\frac{\mu^2}{2\sigma^2} \right) + \frac{1}{2} \sigma \exp \left(-\mu + \frac{\sigma^2}{2}\right)
	\end{align}
	Note that for $\mu \ge \sigopt$, we have 
	\begin{align}
		\frac{1}{4} \sigma \exp\left(-\mu + \frac{\sigma^2}{2}\right) \ge \sqrt{\frac{2}{\pi}} \exp\left(-\frac{\mu^2}{2\sigma^2} \right)
	\end{align} 
	in which case we can obtain $q_{\sigma}( \mu) \ge \frac{1}{4} \sigma \exp\left(-\mu + \frac{\sigma^2}{2}\right)$ by rearranging. 

Now we consider $\sigma > \frac{4\sqrt{2}}{\sqrt{\pi}}$. Since $\mu \ge 2\sigma^2$, we have $\erfc((\sigma - \mu/\sigma)/\sqrt{2}) \ge 1$ and $-\sqrt{\frac{2}{\pi}}\exp\left(-\frac{\mu^2}{2\sigma^2}\right) \ge -\sqrt{\frac{2}{\pi}}\lexp{(\mu)}$. Therefore \begin{align}
	q_{\sigma}( \mu) \ge -\sqrt{\frac{2}{\pi}}\lexp(\mu)+\frac{1}{2} \sigma \lexp(\mu) \ge \frac{1}{4} \sigma \lexp(\mu).
	\end{align} 
\end{proof}

\begin{lemma} \label{lem:exp_ent_deriv_bound}
	For all $\mu$, the following holds:
	\begin{align}\label{eq:exp_ent_deriv_bound:1}
	q_{\sigma}( \mu)= \frac{\partial}{\partial \sigma} \expent_{\sigma}(\mu) \ge -\sqrt{\frac{2}{\pi}} \exp\left(-\frac{\mu^2}{2\sigma^2}\right)
	\end{align}
	Furthermore, for $\sigma \le 1$, we also have 
	\begin{align} \label{eq:exp_ent_deriv_bound:2}
		\expent_{\sigma}(\mu) \ge 0.25 \lexp(\mu)
	\end{align}
\end{lemma}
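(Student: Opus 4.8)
The statement has two parts that are proved by quite different means, so the plan is to treat them separately. The first bound~\eqref{eq:exp_ent_deriv_bound:1} is essentially immediate from the exact expression~\eqref{eq:q_def} for $q_\sigma(\mu) = \frac{\partial}{\partial\sigma}\expent_\sigma(\mu)$ already derived in the proof of Lemma~\ref{lem:dg_dsigma}. There, $q_\sigma(\mu)$ is written as $\frac12 \exp(\sigma^2/2)\,\sigma\,[\,\cdots\,] - \sqrt{2/\pi}\,\exp(-\mu^2/(2\sigma^2))$, where the bracket is a sum of two terms, each a product of a positive exponential and an $\erfc$ evaluation. Since $\erfc$ is nonnegative everywhere and $\sigma > 0$, this entire first term is nonnegative, so discarding it yields exactly $q_\sigma(\mu) \ge -\sqrt{2/\pi}\,\exp(-\mu^2/(2\sigma^2))$. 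No further work is needed here.

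For the lower bound~\eqref{eq:exp_ent_deriv_bound:2}, the plan is to use a closed form for $\expent_\sigma(\mu) = \E_{z \sim \cN(\mu,\sigma^2)}[\lexp(z)]$. By symmetry of $\lexp$ and of the convolution, I may assume $\mu \ge 0$ and show $\expent_\sigma(\mu) \ge \constl\,\exp(-\mu)$. Splitting the Gaussian integral at $z = 0$ and completing the square on the $z \ge 0$ piece (where $\lexp(z) = \exp(-z)$) gives $\E_{z\sim\cN(\mu,\sigma^2)}[\exp(-z)\mathbf{1}[z\ge 0]] = \frac12\exp(-\mu + \sigma^2/2)\,\erfc\!\left(\frac{\sigma}{\sqrt2} - \frac{\mu}{\sqrt2\,\sigma}\right)$. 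Since the $z < 0$ contribution is nonnegative, this shows $\expent_\sigma(\mu) \ge \exp(-\mu)\,h(\mu,\sigma)$ where $h(\mu,\sigma) \triangleq \frac12\exp(\sigma^2/2)\,\erfc\!\left(\frac{\sigma}{\sqrt2} - \frac{\mu}{\sqrt2\,\sigma}\right)$, and it remains to prove $h(\mu,\sigma) \ge \constl$ for all $\mu \ge 0$ and $0 < \sigma \le 1$.

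To locate the minimizer of $h$ I would invoke two monotonicity facts. Because $\erfc$ is decreasing and its argument $\frac{\sigma}{\sqrt2} - \frac{\mu}{\sqrt2\,\sigma}$ is decreasing in $\mu$, the function $h(\mu,\sigma)$ is increasing in $\mu$, so its minimum over $\mu \ge 0$ is at $\mu = 0$, namely $h(0,\sigma) = \frac12\exp(\sigma^2/2)\,\erfc(\sigma/\sqrt2)$. Substituting $x = \sigma + u$ in the Gaussian tail integral yields the representation $\exp(\sigma^2/2)\,\erfc(\sigma/\sqrt2) = \sqrt{2/\pi}\int_0^\infty \exp(-\sigma u - u^2/2)\,du$, which is manifestly decreasing in $\sigma$; hence the minimum over $0 < \sigma \le 1$ is attained at $\sigma = 1$. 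The claim therefore reduces to the single numerical inequality $\frac12\exp(1/2)\,\erfc(1/\sqrt2) \ge \constl$, whose left side equals $\exp(1/2)(1 - \Phi(1)) \approx 0.262 > 0.25$.

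The main obstacle is precisely this last step: the target constant $\constl$ is essentially tight, since the worst-case value of $h$ is only about $0.262$, leaving almost no slack. In particular one cannot afford crude simplifications such as discarding the factor $\exp(\sigma^2/2)$, which would degrade the bound to roughly $0.16 < 0.25$ at $(\mu,\sigma) = (0,1)$. Retaining that factor and pinning down the exact worst case through the two monotonicity arguments is the crux of the proof; the remaining completing-the-square computations are routine Gaussian integration.
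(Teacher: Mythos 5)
Your proof is correct. For the first inequality~\eqref{eq:exp_ent_deriv_bound:1} you do exactly what the paper does: discard the nonnegative $\erfc$ terms in the exact expression~\eqref{eq:q_def}. For the second inequality~\eqref{eq:exp_ent_deriv_bound:2} you take a genuinely different route. The paper's argument is a one-line probabilistic truncation: restrict to the event $|Z|\le 1$, which has probability at least $0.68$, note that $\lexp(\mu+\sigma Z)\ge \exp(-\sigma)\lexp(\mu)$ there by the triangle inequality on $|\cdot|$, and conclude $\expent_\sigma(\mu)\ge 0.68\,e^{-1}\lexp(\mu)\approx 0.2512\,\lexp(\mu)$. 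You instead evaluate the $z\ge 0$ piece of the Gaussian integral in closed form, reduce to minimizing $h(\mu,\sigma)=\tfrac12 e^{\sigma^2/2}\erfc\bigl(\tfrac{\sigma}{\sqrt2}-\tfrac{\mu}{\sqrt2\sigma}\bigr)$ via two monotonicity arguments (both of which check out, including the substitution giving $e^{\sigma^2/2}\erfc(\sigma/\sqrt2)=\sqrt{2/\pi}\int_0^\infty e^{-\sigma u-u^2/2}\,du$), and land on the endpoint value $e^{1/2}(1-\Phi(1))\approx 0.2616$. Both arguments clear the threshold $0.25$ by a razor-thin margin, so neither has real slack to spare; yours is sharper at the worst case ($0.2616$ vs.\ $0.2512$) and identifies exactly where the bound is tight, at the cost of an explicit completing-the-square computation and a final numerical evaluation of $\erfc(1/\sqrt2)$, whereas the paper's truncation argument is shorter and needs only the standard $68\%$ rule. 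Your observation that the factor $e^{\sigma^2/2}$ cannot be discarded is accurate and worth keeping, since dropping it degrades the worst case to about $0.159$.
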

\begin{proof}
	To conclude~\eqref{eq:exp_ent_deriv_bound:1}, we simply use the fact that $\erfc$ is always positive, so only the last term in~\eqref{eq:q_def} can be negative. 
	
	To conclude the second statement, assume without loss of generality that $\mu > 0$. We first note that with probability at least $0.68$, $1 \ge Z \ge -1$, and additionally, $\lexp(\mu + \sigma Z) \ge \exp(-\sigma ) \lexp(\mu)$ for $1\ge Z \ge -1$. When $\sigma \le 1$, we thus have $\expent_{\sigma} \ge 0.25 \lexp(\mu)$.
\end{proof}

\begin{lemma}\label{lem:exp_ent_ub}
	For all $\mu$ and $\sigma \le 1/2$, the following holds: 
	\begin{align}
		\expent_{\sigma}(\mu) \le 2 \lexp(\mu)
	\end{align}
\end{lemma}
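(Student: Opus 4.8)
The plan is to reduce to the case $\mu \ge 0$ and then peel off the factor $\lexp(\mu) = e^{-\mu}$ using the reverse triangle inequality. Since $\lexp$ is even and the noise $z \sim \cN(0,\sigma^2)$ is symmetric, the substitution $z \mapsto -z$ shows that $\expent_\sigma(\mu) = \E_{z}[\lexp(\mu + z)]$ is an even function of $\mu$; as $\lexp(\mu)$ is also even, it suffices to establish the bound for $\mu \ge 0$, where $\lexp(\mu) = e^{-\mu}$.

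First I would write $\expent_\sigma(\mu) = \E_{z \sim \cN(0,\sigma^2)}[\exp(-|\mu + z|)]$ and apply the inequality $|\mu + z| \ge \mu - |z|$, which holds for every $z$ when $\mu \ge 0$. This gives
\begin{align}
\expent_\sigma(\mu) \le e^{-\mu}\, \E_{z\sim\cN(0,\sigma^2)}\!\left[e^{|z|}\right].
\end{align}
The remaining task is a one-dimensional Gaussian moment: splitting on the sign of $z$ and completing the square yields $\E[e^{|z|}] = 2 e^{\sigma^2/2}\Phi(\sigma)$, where $\Phi$ is the standard Gaussian CDF (equivalently $e^{\sigma^2/2}(2 - \erfc(\sigma/\sqrt{2}))$). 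Substituting $\sigma \le 1/2$ and noting that both $e^{\sigma^2/2}$ and $\Phi(\sigma)$ increase in $\sigma$, the factor is maximized at $\sigma = 1/2$, giving $\E[e^{|z|}] \le 2 e^{1/8}\Phi(1/2) \approx 1.57 < 2$, whence $\expent_\sigma(\mu) \le 2 e^{-\mu} = 2\lexp(\mu)$.

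The one subtle point—and the step I would be most careful about—is retaining the $\Phi(\sigma)$ factor in the Gaussian moment. The cruder bound $e^{|z|} \le e^{z} + e^{-z}$ gives only $\E[e^{|z|}] \le 2 e^{\sigma^2/2}$, and $2 e^{1/8} \approx 2.27 > 2$, which is too weak for the claimed constant $2$. Keeping the exact half-Gaussian mass $\Phi(1/2) < 0.7$ is precisely what pulls the constant safely below $2$, so this is where the factor of $\sigma \le 1/2$ is genuinely used.

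An alternative route that sidesteps this delicacy is to integrate the expression behind~\eqref{eq:q_def} into the closed form $\expent_\sigma(\mu) = \tfrac12 e^{\sigma^2/2}\bigl[e^\mu \erfc(\tfrac{\sigma}{\sqrt2} + \tfrac{\mu}{\sqrt2\sigma}) + e^{-\mu}\erfc(\tfrac{\sigma}{\sqrt2} - \tfrac{\mu}{\sqrt2\sigma})\bigr]$ and bound the two terms separately, using $\erfc(x) \le e^{-x^2}$ for $x \ge 0$ on the first term and $\erfc \le 2$ on the second. Completing the square shows the first term is at most $\tfrac12 e^{1/8} e^{-\mu}$ and the second at most $e^{1/8}e^{-\mu}$, so their sum is at most $\tfrac32 e^{1/8} e^{-\mu} \approx 1.70\, e^{-\mu} < 2\lexp(\mu)$, matching the conclusion. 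Either way the argument is short; the only real work is pinning down the numerical constant.
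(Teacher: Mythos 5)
Your primary argument is correct, but it takes a slightly different and more delicate route than the paper's. You reduce to $\mu\ge 0$ and apply the reverse triangle inequality $|\mu+z|\ge \mu-|z|$, which leaves you with the two-sided moment $\E[e^{|z|}]=2e^{\sigma^2/2}\Phi(\sigma)$ and forces you to keep the half-Gaussian mass $\Phi(1/2)<0.7$ to squeeze under the constant $2$ --- you correctly flag that the cruder $2e^{\sigma^2/2}$ would not suffice. The paper instead uses the one-sided bound $\lexp(t)=e^{-|t|}\le e^{-t}$, valid for \emph{all} $t$ (with equality on $t\ge 0$ and trivially true on $t<0$), so that $\expent_\sigma(\mu)\le e^{-\mu}\,\E_{Z\sim\cN(0,1)}[e^{-\sigma Z}]=e^{-\mu}e^{\sigma^2/2}\le e^{1/8}e^{-\mu}$, which is below $2\lexp(\mu)$ with large margin and no case analysis on the sign of the noise. (The displayed constant $\sqrt{2}e^{\sigma^2}$ in the paper's computation is loose --- the exact MGF gives $e^{\sigma^2/2}$ --- but either value is below $2$.) In short, your proof is valid and your numerical checks are right; the paper's pointwise majorant $e^{-t}\ge e^{-|t|}$ is strictly tighter than your $e^{-\mu+|z|}$, which is why it avoids the delicacy you (accurately) identify as the crux of your version. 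Your alternative route via the closed-form $\erfc$ expression is also sound, though the bound $\erfc(x)\le e^{-x^2}$ applied to the first term actually yields $\tfrac12 e^{-\mu^2/(2\sigma^2)}$, and one extra line (maximizing $\mu-\mu^2/(2\sigma^2)$ over $\mu$) is needed to convert that into $\tfrac12 e^{1/8}e^{-\mu}$ as you state.
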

\begin{proof}
	Without loss of generality, assume that $\mu > 0$. We note that we can upper bound $\lexp(\mu)$ by the loss function $\exp(-\mu)$. It follows that 
	\begin{align}
		\expent_\sigma(\mu) &\le \frac{1}{\sqrt{2\pi}}\int_{-\infty}^{\infty} \exp\left(-\mu - \sigma Z - \frac{Z^2}{2}\right) dZ\\
		&= \exp(-\mu)	\exp(\sigma^2) \sqrt{2}
	\end{align}
	As $\lexp$ is symmetric around 0, we could also apply the same argument to $\mu < 0$ using $\exp(\mu)$ as the loss upper bound. This gives the desired result. 
\end{proof}

\subsection{Log-concave and smooth densities}
\begin{claim}\label{claim:concave_smooth}
	Let $\density : \R \to \R$ be any density such that $\log \density$ is differentiable, $\concave$-strongly concave, and $\smooth$-smooth. Define $\dratio(\mu) \triangleq \frac{\partial}{\partial \mu} \log \density$. Then for all $\mu \in \R$, the following hold:
	\begin{align}
	\density(\mu + \delta) \ge \density(\mu) \exp\left(\dratio(\mu)\delta  - \frac{\smooth}{2} \delta^2 \right)\\
	\density(\mu + \delta) \le \density(\mu) \exp \left( \dratio(\mu)\delta  - \frac{\concave}{2} \delta^2 \right)
	\end{align}
\end{claim}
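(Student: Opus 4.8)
The plan is to take logarithms and reduce both inequalities to the standard quadratic upper and lower bounds for strongly concave and smooth functions, applied to $\phi \triangleq \log \density$. First I would unpack the hypotheses: by the definitions of $\concave$-log-concavity and $\smooth$-log-smoothness, $\log \density$ is twice differentiable with $\nabla^2 \log \density \preceq -\concave I$ and $\|\nabla^2 \log \density\|_{\textup{op}} \le \smooth$. In one dimension this is exactly the pointwise statement that $-\smooth \le \phi''(t) \le -\concave$ for every $t$, and by definition $\dratio = \phi'$.

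The two-sided bound then follows from a second-order Taylor expansion of $\phi$ with Lagrange remainder. For each $\delta$ there is a point $\xi$ strictly between $\mu$ and $\mu + \delta$ such that
\begin{align}
\phi(\mu + \delta) = \phi(\mu) + \dratio(\mu)\delta + \tfrac{1}{2}\phi''(\xi)\delta^2.
\end{align}
Plugging in the pointwise bounds $-\smooth \le \phi''(\xi) \le -\concave$ on the remainder term immediately yields
\begin{align}
\dratio(\mu)\delta - \tfrac{\smooth}{2}\delta^2 \;\le\; \phi(\mu + \delta) - \phi(\mu) \;\le\; \dratio(\mu)\delta - \tfrac{\concave}{2}\delta^2.
\end{align}
Exponentiating (the exponential is monotone) and using $\density = \exp\phi$ converts these additive bounds on $\log \density$ into the claimed multiplicative bounds on $\density(\mu + \delta)$, with the lower bound coming from smoothness ($\smooth$) and the upper bound from strong concavity ($\concave$).

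There is essentially no real obstacle here; the only point requiring a moment of care is the twice-differentiability of $\log \density$, which is implicit in the Hessian-based definitions of log-concavity and log-smoothness. Should one wish to assume only first-order differentiability, I would instead invoke the variational characterizations directly: $\concave$-strong-concavity of $\phi$ gives the upper bound $\phi(\mu+\delta) \le \phi(\mu) + \dratio(\mu)\delta - \tfrac{\concave}{2}\delta^2$, while $\smooth$-smoothness (i.e., $\phi'$ being $\smooth$-Lipschitz) gives the matching lower bound $\phi(\mu+\delta) \ge \phi(\mu) + \dratio(\mu)\delta - \tfrac{\smooth}{2}\delta^2$. Either route is routine and produces the same conclusion.
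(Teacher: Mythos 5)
Your argument is correct and is essentially the paper's own proof: the paper likewise invokes the standard quadratic upper and lower bounds on $\log \density$ from strong concavity and smoothness and then exponentiates. Your additional remarks about Taylor's theorem with Lagrange remainder versus the first-order variational characterizations are fine but do not change the substance.
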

\begin{proof}
	By strong concavity and smoothness, we have 
	\begin{align}
	\log \density(\mu + \delta) \ge \log \density(\mu) + \frac{\partial}{\partial \mu} \log \density(\mu)  \delta - \frac{\smooth}{2} \delta^2\\
	\log \density(\mu + \delta) \le \log \density(\mu) + \frac{\partial}{\partial \mu} \log \density(\mu)  \delta - \frac{\concave}{2} \delta^2
	\end{align}
	Exponentiating both sides and using the definition of $\dratio$ gives the desired result. 
\end{proof}

\begin{lemma} \label{lem:dratio_bound}
	In the setting of Claim~\ref{claim:concave_smooth}, we have the following upper and lower bounds for $\dratio$ in terms of $\density$: 
	\begin{align}
	\dratio^2(\mu) \ge \concave \log \left( \frac{\sqrt{\concave}}{2 \density(\mu)\sqrt{\pi}}\right)\\
	\dratio^2(\mu) \le \smooth \log \left( \frac{\sqrt{\smooth}}{2 \density(\mu)\sqrt{\pi}}\right)
	\end{align}
	In other words, by rearranging,
	\begin{align}
	\density(\mu) \ge \frac{\sqrt{\concave}}{2 \sqrt{\pi}} \exp (-\dratio(\mu)^2/\concave)\\
	\density(\mu) \le \frac{\sqrt{\smooth}}{2 \sqrt{\pi}} \exp (-\dratio(\mu)^2/\smooth)
	\end{align}
\end{lemma}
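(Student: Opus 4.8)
The plan is to combine the two pointwise comparison inequalities from Claim~\ref{claim:concave_smooth} with the single global constraint that $\density$ is a probability density, namely $\int_{-\infty}^{\infty} \density(\mu + \delta)\,d\delta = 1$. After exponentiating, each of the two asserted inequalities is a one-sided bound on $\density(\mu)$ in terms of $\dratio(\mu)$, and each will follow by integrating the appropriate bound of Claim~\ref{claim:concave_smooth} over all of $\delta \in \R$ and evaluating the resulting Gaussian integral via $\int_{-\infty}^{\infty} \exp(b\delta - \tfrac{a}{2}\delta^2)\,d\delta = \sqrt{2\pi/a}\,\exp(b^2/(2a))$.

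First I would establish the lower bound on $\density(\mu)$ (equivalently $\dratio^2(\mu) \ge \concave \log(\cdots)$). Here I invoke the strong-concavity upper bound $\density(\mu + \delta) \le \density(\mu)\exp(\dratio(\mu)\delta - \tfrac{\concave}{2}\delta^2)$, which holds for \emph{all} $\delta$ because it is a global consequence of $(\log \density)'' \le -\concave$ everywhere. Integrating against the normalization constraint,
\begin{align}
1 = \int_{-\infty}^{\infty} \density(\mu+\delta)\,d\delta \le \density(\mu) \int_{-\infty}^{\infty} \exp\left(\dratio(\mu)\delta - \tfrac{\concave}{2}\delta^2\right)d\delta = \density(\mu)\sqrt{\tfrac{2\pi}{\concave}}\,\exp\left(\tfrac{\dratio(\mu)^2}{2\concave}\right).
\end{align}
Rearranging gives a lower bound on $\density(\mu)$ of the form asserted in the lemma, and taking logarithms converts it into the stated bound on $\dratio^2(\mu)$.

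The upper bound on $\density(\mu)$ is the mirror image: I instead use the log-smoothness lower bound $\density(\mu+\delta) \ge \density(\mu)\exp(\dratio(\mu)\delta - \tfrac{\smooth}{2}\delta^2)$, which is valid for all $\delta$ since $(\log \density)'' \ge -\smooth$. Integrating over $\R$ and applying the same Gaussian identity yields $1 \ge \density(\mu)\sqrt{2\pi/\smooth}\,\exp(\dratio(\mu)^2/(2\smooth))$, and rearranging produces the matching upper bound on $\density(\mu)$, hence the lower bound $\dratio^2(\mu) \le \smooth \log(\cdots)$.

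Since the two halves differ only in whether one invokes concavity or smoothness, I do not expect a serious obstacle: the entire argument is a single integration against the normalization $\int \density = 1$. The only two points needing care are (i) confirming that the second-order comparison bounds of Claim~\ref{claim:concave_smooth} hold globally rather than merely locally, which is immediate from the uniform bounds on $(\log \density)''$ built into that claim and guarantees the integrands decay fast enough to be integrable; and (ii) tracking the universal constants through the Gaussian integral so that the final coefficients and exponents line up with those written in the lemma statement.
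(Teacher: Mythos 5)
Your approach---integrate the two pointwise comparison bounds of Claim~\ref{claim:concave_smooth} against the normalization $\int \density = 1$ and evaluate the resulting Gaussian integral---is exactly the paper's proof of this lemma. The first half goes through: with the correct identity $\int_{-\infty}^{\infty}\exp(b\delta - \tfrac{a}{2}\delta^2)\,d\delta = \sqrt{2\pi/a}\,\exp(b^2/(2a))$ you obtain $\density(\mu) \ge \sqrt{\concave/(2\pi)}\,\exp(-\dratio(\mu)^2/(2\concave))$, which is slightly stronger than the stated $\frac{\sqrt{\concave}}{2\sqrt{\pi}}\exp(-\dratio(\mu)^2/\concave)$ and hence implies it.

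The gap is precisely the point you deferred as ``tracking the universal constants'': for the second half they do \emph{not} line up. Integrating the log-smoothness lower bound gives $1 \ge \density(\mu)\sqrt{2\pi/\smooth}\,\exp(\dratio(\mu)^2/(2\smooth))$, i.e.\ $\density(\mu) \le \sqrt{\smooth/(2\pi)}\,\exp(-\dratio(\mu)^2/(2\smooth))$, equivalently $\dratio(\mu)^2 \le 2\smooth\log\bigl(\sqrt{\smooth}/(\sqrt{2\pi}\,\density(\mu))\bigr)$. This is strictly \emph{weaker} than the stated $\density(\mu) \le \frac{\sqrt{\smooth}}{2\sqrt{\pi}}\exp(-\dratio(\mu)^2/\smooth)$, and the stated form is in fact false: for the standard Gaussian ($\concave=\smooth=1$, $\dratio(0)=0$) it would require $1/\sqrt{2\pi} \le 1/(2\sqrt{\pi})$. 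The paper hides this behind ``likewise, the same reasoning'' together with Claim~\ref{claim:integral}, whose formula $\int\exp(ax-bx^2)\,dx = \sqrt{2\pi/b}\,\exp(a^2/(2b))$ overstates the true value $\sqrt{\pi/b}\,\exp(a^2/(4b))$; an over-estimate of the integral is harmless for the first inequality (it sits on the $\le$ side) but invalid for the second, where a lower bound on the integral is what is needed. So your method is the right one and matches the paper, but you cannot assert the second inequality in the form written; it should be restated with the constants your computation actually yields (with the corresponding adjustments propagated to its downstream uses in Lemmas~\ref{lem:dLdstop} and~\ref{lem:loss_ub}).
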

\begin{proof}
	First, from Claim~\ref{claim:concave_smooth}, we have
	\begin{align}
	1 = \int_{-\infty}^{\infty} \density(\mu + \delta) d\delta &\le \int_{-\infty}^{\infty} \density(\mu) \exp\left(\dratio(\mu) \delta - \frac{\concave}{2} \delta^2 \right)\\
	& = \frac{2\sqrt{\pi}\density(\mu)  \exp(\dratio(\mu)^2/\concave)}{\sqrt{\concave}}
	\end{align}
	Solving, we obtain 
	\begin{align}
	\dratio^2(\mu) \ge \concave \log \left( \frac{\sqrt{\concave}}{2 \density(\mu)\sqrt{\pi}}\right)
	\end{align}
	Likewise, we can use the same reasoning to obtain
	\begin{align}
	\dratio^2(\mu) \le \smooth \log \left( \frac{\sqrt{\smooth}}{2 \density(\mu)\sqrt{\pi}}\right)
	\end{align}
\end{proof}

\begin{claim}\label{claim:integral}
	For any $a \in \R$, $b > 0$, we have 
	\begin{align}
		\int_{-\infty}^{\infty} \exp(ax - bx^2) &= \frac{\sqrt{2\pi} \exp(a^2/2b)}{\sqrt{b}} \label{eq:integral:1}\\
		\int_{0}^{\infty} \exp(ax - bx^2) &= \frac{\sqrt{\pi} \exp(a^2/2b) \left(\erf\left(\frac{a}{\sqrt{2b}}\right)  + 1\right)}{\sqrt{2b}}  \label{eq:integral:2}
	\end{align}
	Furthermore, when $a \ge 0$, we additionally have 
	\begin{align}
		\int_{0}^{\infty} \exp(ax - bx^2) \in \left[\frac{\sqrt{\pi} \exp(a^2/2b)}{\sqrt{2b}}, \frac{\sqrt{2\pi} \exp(a^2/2b)}{\sqrt{b}}  \right] \label{eq:integral:3}
	\end{align}
\end{claim}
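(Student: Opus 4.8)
The plan is to treat all three parts as elementary Gaussian-integral computations: I would prove~\eqref{eq:integral:1} and~\eqref{eq:integral:2} directly by completing the square in the exponent, and then obtain~\eqref{eq:integral:3} as a corollary of~\eqref{eq:integral:2} by crudely bounding the error-function factor. None of this requires any of the earlier lemmas; it is self-contained calculus, so I would place it as a standalone computation.

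For~\eqref{eq:integral:1}, I would rewrite the quadratic $ax - bx^2$ (with $b>0$) as a downward parabola recentered at its vertex plus a constant, so that the integrand becomes a constant multiple of an unnormalized Gaussian density in $x$. Integrating the Gaussian over all of $\R$ contributes the $1/\sqrt{b}$ dependence, while the constant extracted from the exponent contributes the exponential factor in $a^2/b$; collecting these reproduces the right-hand side of~\eqref{eq:integral:1}. For~\eqref{eq:integral:2} I would use the identical re-centering change of variables, but now track that the lower limit $x=0$ is sent to a finite point instead of $-\infty$. The resulting one-sided Gaussian integral is a shifted Gaussian tail, which I would express through $\erf$; matching the normalization from the first part then yields the factor $\erf(a/\sqrt{2b})+1$ multiplying the same exponential. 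The only thing demanding care here is the bookkeeping of the constants relating $\erf$ to the shifted Gaussian CDF, which is where the $\sqrt{2}$ factors enter.

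Finally, for~\eqref{eq:integral:3} I would start from the exact formula~\eqref{eq:integral:2} and observe that when $a \ge 0$ the argument $a/\sqrt{2b}$ is nonnegative, so $\erf(a/\sqrt{2b}) \in [0,1)$ and hence $\erf(a/\sqrt{2b})+1 \in [1,2)$. Substituting the endpoints $1$ and $2$ into~\eqref{eq:integral:2} gives exactly the lower and upper bounds in~\eqref{eq:integral:3}; equivalently, this just says the half-line integral lies between one half of and all of the full-line integral~\eqref{eq:integral:1}, which is geometrically transparent once the Gaussian is centered at a nonnegative location. I do not expect a genuine obstacle in this claim; the only step that requires attention is keeping the numerical constants consistent between the $\R$ and $[0,\infty)$ computations, and that consistency is precisely what makes the upper bound in~\eqref{eq:integral:3} coincide with~\eqref{eq:integral:1} and the lower bound equal to half of it.
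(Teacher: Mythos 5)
Your approach is exactly the paper's: the paper proves \eqref{eq:integral:1} and \eqref{eq:integral:2} by ``direct computation'' (i.e., completing the square, as you describe) and obtains \eqref{eq:integral:3} from the observation that $0 \le \erf(a/\sqrt{2b}) \le 1$ when $a \ge 0$, which is precisely your argument, so in method there is nothing to distinguish the two. One caveat on the step you yourself flag as the delicate one: if you complete the square for the integrand as literally written, $\exp(ax - bx^2)$, you obtain $\sqrt{\pi/b}\,\exp\bigl(a^2/(4b)\bigr)$ for the full-line integral and $\erf\bigl(a/(2\sqrt{b})\bigr)$ in the half-line formula, not the constants displayed in the claim; the stated right-hand sides are those for the integrand $\exp\bigl(ax - \tfrac{b}{2}x^2\bigr)$ (which is the normalization the claim is actually applied with downstream, e.g.\ in Claim~\ref{claim:integral_s0}). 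So your assertion that the computation ``reproduces the right-hand side'' of \eqref{eq:integral:1} does not hold under a literal reading of the statement, and a careful execution of your plan would surface this mismatch; the relationship between the upper and lower bounds in \eqref{eq:integral:3} (the half-line integral lying between half of and all of the full-line integral) is unaffected.
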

\begin{proof}
	Equations~\eqref{eq:integral:1} and~\eqref{eq:integral:2} follow from direct computation. Equation~\eqref{eq:integral:3} follows because for $a \ge 0$, $1 \ge \erf(a/\sqrt{2b}) \ge 0$. 
\end{proof}

\begin{claim} \label{claim:erf_lb}
	For $a < 0$, 
	\begin{align}
		1 + \erf(a) > \frac{2}{\sqrt{\pi}} \frac{\exp(-a^2)}{-a + \sqrt{a^2 + 2}} \ge\frac{1}{\sqrt{\pi}(\sqrt{2} - 2a)}\exp(-a^2)
	\end{align}
\end{claim}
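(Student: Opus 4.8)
The plan is to rewrite everything in terms of the complementary error function, reduce the first (harder) inequality to a one-variable monotonicity argument, and dispatch the second inequality with elementary algebra.

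First I would substitute $x = -a > 0$. Since $\erf$ is odd, $1 + \erf(a) = 1 - \erf(x) = \erfc(x)$, while $-a + \sqrt{a^2+2} = x + \sqrt{x^2+2}$ and $\exp(-a^2) = \exp(-x^2)$. So the claim is equivalent to: for all $x > 0$,
\[
\erfc(x) > \frac{2}{\sqrt{\pi}}\frac{\exp(-x^2)}{x+\sqrt{x^2+2}} \ge \frac{1}{\sqrt{\pi}(\sqrt{2} + 2x)}\exp(-x^2).
\]

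For the left inequality, writing $\erfc(x) = \frac{2}{\sqrt{\pi}}\int_x^\infty \exp(-t^2)\,dt$, I would define
\[
G(x) \triangleq \int_x^\infty \exp(-t^2)\,dt - \frac{\exp(-x^2)}{x + \sqrt{x^2+2}},
\]
so the goal becomes $G(x) > 0$. As $x \to \infty$ both terms are asymptotic to $\exp(-x^2)/(2x)$, so $G(x) \to 0$; hence it suffices to show $G'(x) < 0$ on $[0,\infty)$, which forces $G(x) > \lim_{x\to\infty} G(x) = 0$. To compute $G'$ cleanly I would introduce $h(x) = x + \sqrt{x^2+2}$ and use the identity $h^2 - 2hx = 2$ (equivalently $x = (h^2-2)/(2h)$ and $\sqrt{x^2+2} = (h^2+2)/(2h)$), which gives $h'(x) = 2h^2/(h^2+2)$ and lets me differentiate $\exp(-x^2)/h$ with all terms expressed through $h$. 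After cancellation I expect
\[
G'(x) = 2\exp(-x^2)\left(\frac{1}{h^2+2} - \frac{1}{h^2}\right) < 0,
\]
which is negative simply because $h^2 + 2 > h^2$.

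For the right inequality, cancelling the common factor $\exp(-x^2)/\sqrt{\pi}$ reduces it to $\frac{2}{x+\sqrt{x^2+2}} \ge \frac{1}{\sqrt{2}+2x}$, i.e. $2(\sqrt{2}+2x) \ge x + \sqrt{x^2+2}$; this follows from $\sqrt{x^2+2} \le x + \sqrt{2}$ (square both sides), so $x + \sqrt{x^2+2} \le 2x + \sqrt{2} \le 2(\sqrt{2}+2x)$ for $x \ge 0$. I expect the one genuine obstacle to be the derivative computation for $G'$: it is the only nonroutine step, and the whole monotonicity argument hinges on the exact cancellation above. The substitution $h = x + \sqrt{x^2+2}$ together with $h^2 - 2hx = 2$ is the device that keeps this calculation tractable; attempting the differentiation directly in $x$ produces a far messier expression.
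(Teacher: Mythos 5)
Your proposal is correct, but it takes a genuinely different route from the paper: the paper's proof of this claim is a one-line reduction ($1+\erf(a)=\erfc(-a)$ for $-a>0$) followed by a citation of the known lower bound $\erfc(x) > \frac{2}{\sqrt{\pi}}\,\frac{e^{-x^2}}{x+\sqrt{x^2+2}}$ from a reference table, whereas you supply a self-contained derivation of that bound. Your key computation checks out: with $h=x+\sqrt{x^2+2}$ one indeed has $h^2-2xh=2$ and $h'=2h^2/(h^2+2)$, and differentiating $G(x)=\int_x^\infty e^{-t^2}\,dt - e^{-x^2}/h$ gives
$G'(x)=e^{-x^2}\bigl(\tfrac{2xh+h'}{h^2}-1\bigr)=e^{-x^2}\,\tfrac{h'-2}{h^2}=2e^{-x^2}\bigl(\tfrac{1}{h^2+2}-\tfrac{1}{h^2}\bigr)<0$;
combined with $G(x)\to 0$ as $x\to\infty$ (both terms are asymptotic to $e^{-x^2}/(2x)$), monotonicity yields the strict inequality for all $x>0$, which is exactly the range needed since $a<0$. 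The elementary algebra for the second inequality ($\sqrt{x^2+2}\le x+\sqrt{2}$ for $x\ge 0$, hence $x+\sqrt{x^2+2}\le 2x+\sqrt{2}\le 2(\sqrt{2}+2x)$) is also fine. What your approach buys is that the appendix no longer leans on an external table for this step; what the paper's approach buys is brevity. Your monotonicity argument is essentially the standard proof of the cited Komatsu-type bound, so the two are compatible, and either would be acceptable here.
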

\begin{proof}
	We have $1 + \erf(a) = 1 - \erf(-a) = 1 - (1 - \erfc(-a)) = \erfc(-a)$. Now as $-a > 0$, we can apply the lower bound on $\erfc(-a)$ in~\cite{wolfram} to obtain the desired result. 
\end{proof}

\subsection{Equivalence between pseudo-labeling variant and entropy minimization}
\label{sec:pseudo=entropy}
\begin{proof} [Proof of Proposition~\ref{prop:pseudo=entropy}]
We compute
\begin{align}
\nabla_{w}L_{pseudo}^{t+1}(w)|_{w=w^t} &= \nabla_{w}\Exp_{x\sim \Dt} \ell_{exp}(w^\top x, \sign{({w^t}^\top x)}) |_{w=w^t} \\
&= -\Exp_{x\sim \Dt} \exp{(-w^\top x \cdot \sign{({w^t}^\top x)})} \cdot \sign{({w^t}^\top x)} x |_{w=w^t} \\
&= -\Exp_{x\sim \Dt} \exp{(-w^\top x \cdot \sign{(w^\top x)})} \cdot \sign{(w^\top x)} x |_{w=w^t} \\
&= \nabla_{w}L(w)|_{w=w^t}
\end{align}
Therefore for all $t \ge 0$, pseudo-labeling algorithm has the same iterate as entropy minimization~\eqref{eqn:alg}.
\end{proof}
\newpage
\section{Additional experiments and details}
\label{sec:exp_details} 
\subsection{Colored MNIST}
\label{sec:colored_mnist}
Among 70K MNIST images, we split the source training / source test / target training / target test into 2:1:3:1. The model architecture is 3-layer feed-forward network with hidden layer sizes 128 and 64. For training on source, we use SGD optimizer with learning rate 0.03, momentum 0.9, weight decay 0.002, and always train until convergence.

\noindent {\bf Additional construction and training details for 10-way MNIST.} For each source image, with probability $p$, we assign it a weight $w \stackrel{\mathclap{\normalfont\mbox{unif}}}{\sim}~ [0.1k, 0.1k+0.1)$ when image is digit $k$; with probability $1-p$, we assign $w \stackrel{\mathclap{\normalfont\mbox{unif}}}{\sim}~ [0,1)$. Each target image is assigned $w \stackrel{\mathclap{\normalfont\mbox{unif}}}{\sim}~ [0,1)$. We create two color channels by scaling the gray-scale image with weights $w$ and $1-w$.

In entropy minimization phase, we perform full gradient descent on target training set, with learning rate 0.03, momentum 0.9, weight decay 0.002, and train for 300 epochs when $p=0.95$ and 50 epochs when $p=0.97$.

\noindent {\bf Detailed construction of binary colored MNIST.}
In this setup, we assign digits 0-4 label 0 and 5-9 label 1. For each gray-scale image, we first draw a Gaussian random variable $\tilde{w} \sim \cN(0, (0.5/3)^2)$. In source domain, with probability $p=0.8$, example with label $k$ is assigned with $w = 0.5+(2k-1)|\tilde{w}|$; with probability 0.2,  $w = 0.5+\tilde{w}$. In the target domain, we always have $w = 0.5+\tilde{w}$. We create two color channels by rescaling the original image with weights $w$ and $1-w$.

For training, we keep all other hyper-parameters the same as the 10-way setting and only reduce the learning rate to 0.003.

\noindent {\bf Distribution of predictions conditioned on gray-scale image.} We examine the effect of entropy minimization on each test example in binary MNIST experiment. For each gray-scale test image $x_1$, we draw 1000 $x_2$, i.e., $\tilde{w} \sim N(0, (0.5/3)^2)$, and plot the distribution of logits $f(x_1, x_2)_{y}-f(x_1, x_2)_{1-y}$  where $y$ is the true label of $x_1$. According to our theory, if the distribution is concentrated around the positive side, entropy minimization would push the distribution to be more concentrated and positive.

Examples classified wrongly by source classifier \footnote{A particular $\hat{x_2}$ is drawn for each $x_1$ in the target test set. If $f(x_1, \hat{x_2})_{y}-f(x_1, \hat{x_2})_{1-y}<0$, $f$ makes a wrong prediction.} (because they were on the negative tail of the distribution) can be corrected due to this effect. Figure \ref{fig:good_1_before_after} is an example image where source classifier was wrong before training on target but corrected due to the explanation we provide. Conversely, examples classified right by source classifier (because they happen to be on the positive tail of the distribution) can turn wrong due to entropy minimization (see Figure \ref{fig:bad_1_before_after}). The success of entropy minimization relies on more examples concentrated on the positive than the negative side, i.e., source classifier has non-trivial target accuracy.

\begin{figure}
	\centering
	\includegraphics[width=0.49\columnwidth]{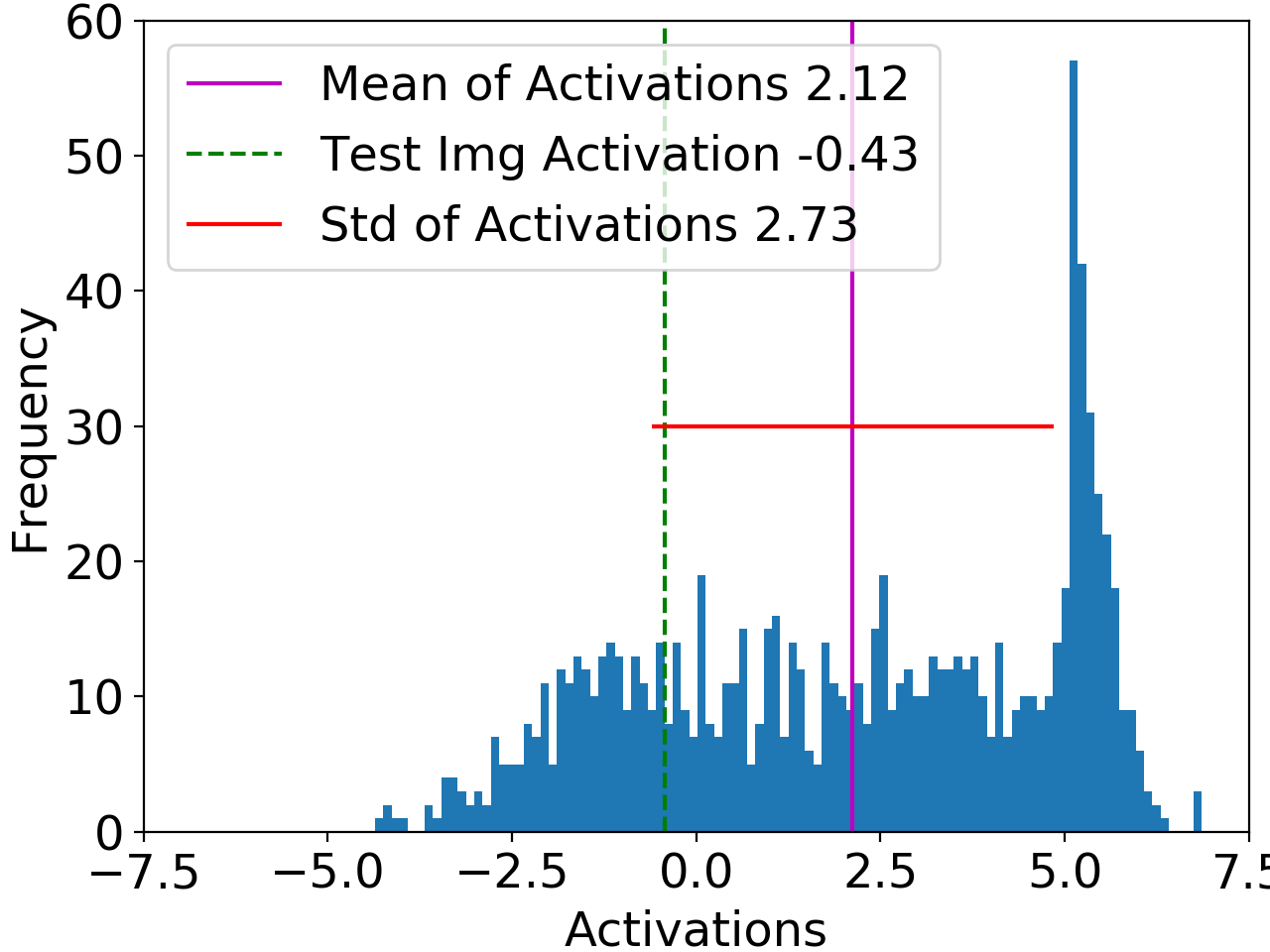}
	\includegraphics[width=0.49\columnwidth]{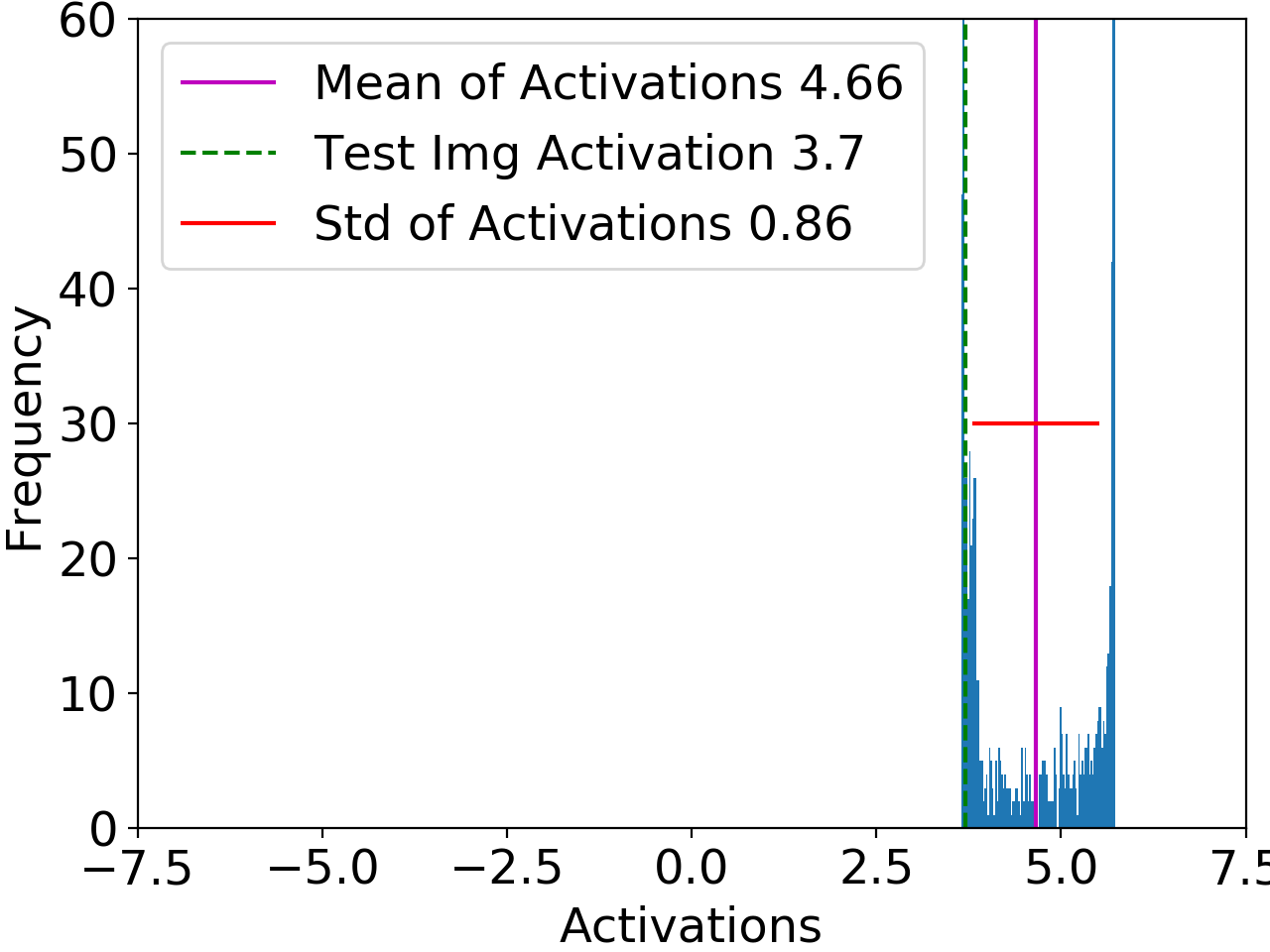}
	\caption{Distribution of $f(x_1, x_2)_y-f(x_1, x_2)_{1-y}$ before (\textbf{left}) and after (\textbf{right}) self-training for a test image whose prediction turned from wrong to correct. Green line shows $f(x_1, \hat{x_2})$ turning positive for the particular $\hat{x_2}$ in test set.}
	\label{fig:good_1_before_after}
\end{figure}
\begin{figure}
	\centering
	\includegraphics[width=0.49\columnwidth]{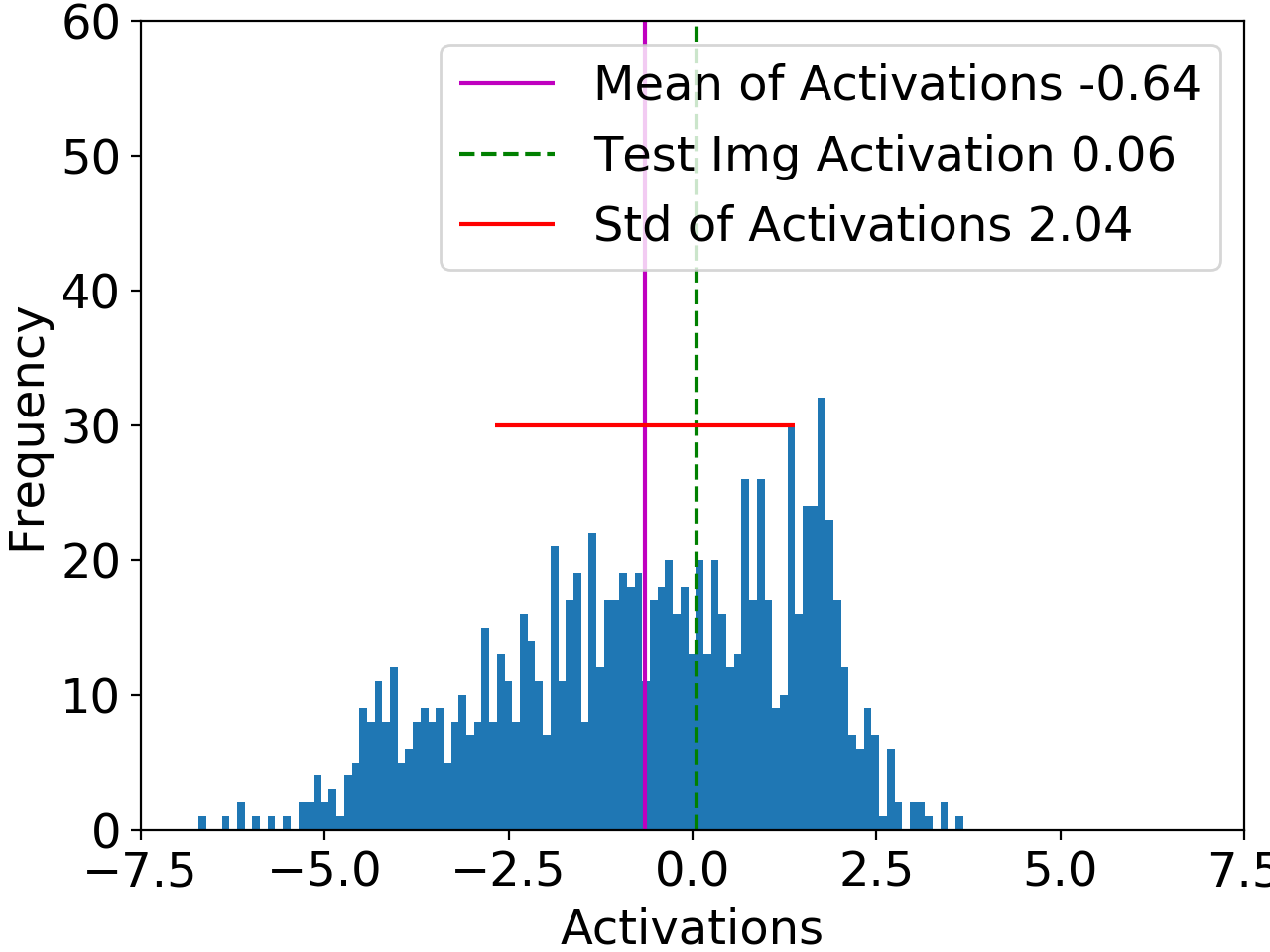}
	\includegraphics[width=0.49\columnwidth]{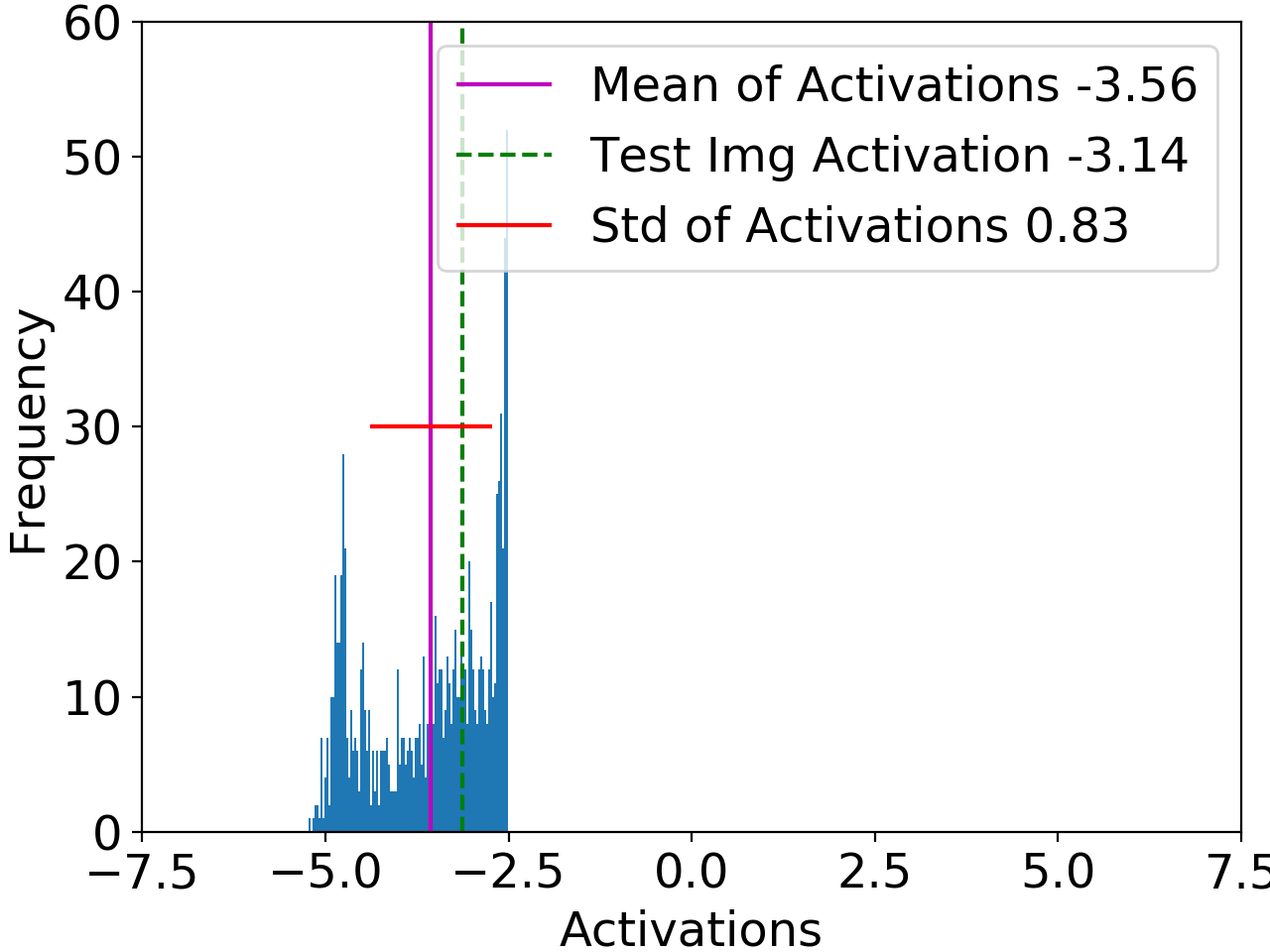}
	\caption{Distribution of $f(x_1, x_2)_y-f(x_1, x_2)_{1-y}$ before (\textbf{left}) and after (\textbf{right}) self-training for a test image whose prediction turned from right to wrong. Green line shows $f(x_1, \hat{x_2})$ turning negative for the particular $\hat{x_2}$ in test set.}
	\label{fig:bad_1_before_after}
\end{figure}

\noindent{\bf Distribution of mean activation.} Figure \ref{fig:plot_mu_before_after} shows the distribution of $f(x_1, \bar{x_2})$ before and after self-training for the binary MNIST experiment, where $\bar{x_2}$ indicates neutral color $(w=0.5)$. We see that qualitatively, the empirical distribution of $\mu$ has increasing mass far away from 0 throughout self-training, even for a multi-layer network. This is necessary for our theory, as seen in Figure~\ref{fig:q_sigma_mu}.
\begin{figure}
	\centering
	\includegraphics[width=0.49\columnwidth]{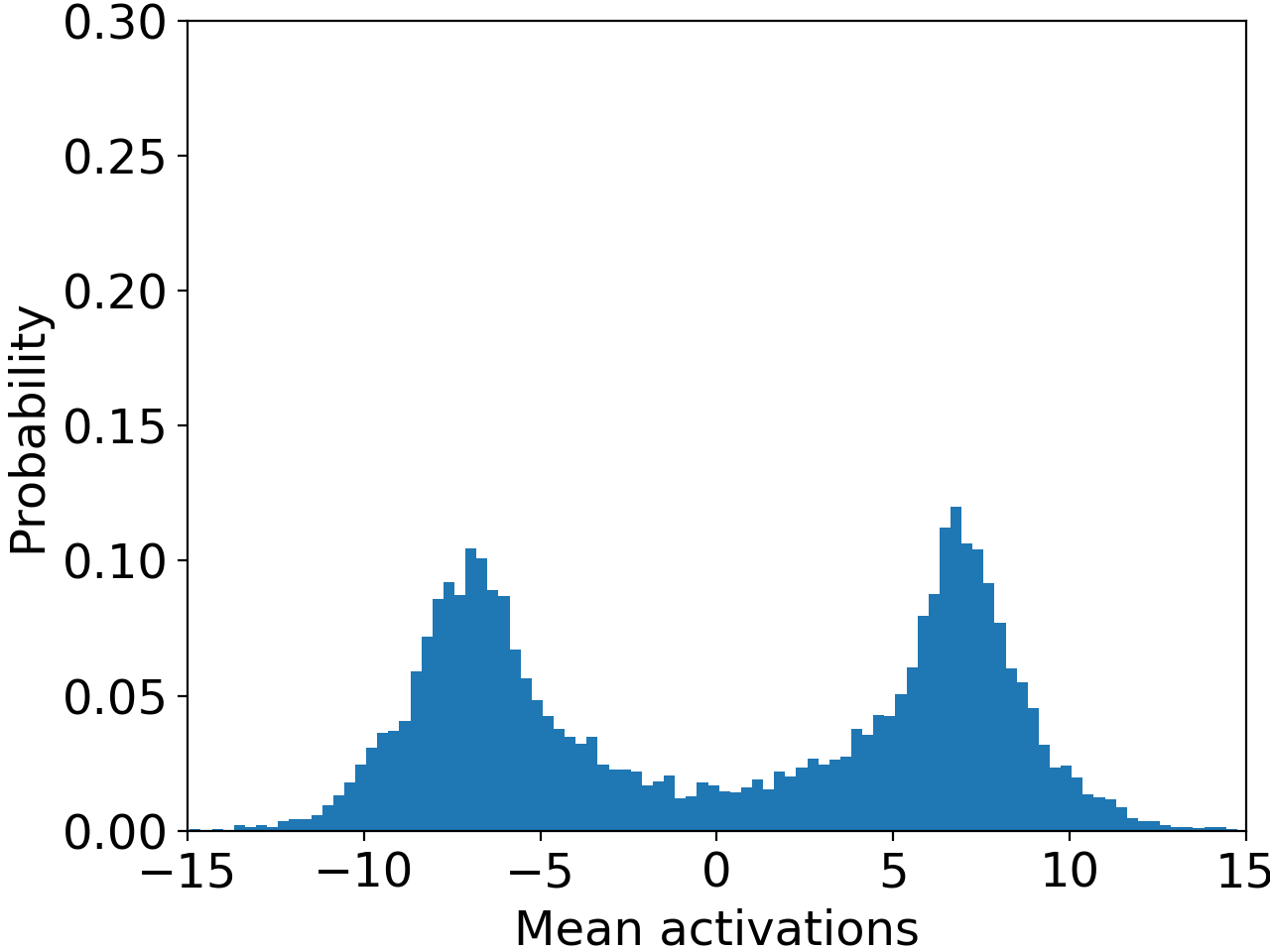}
	\includegraphics[width=0.49\columnwidth]{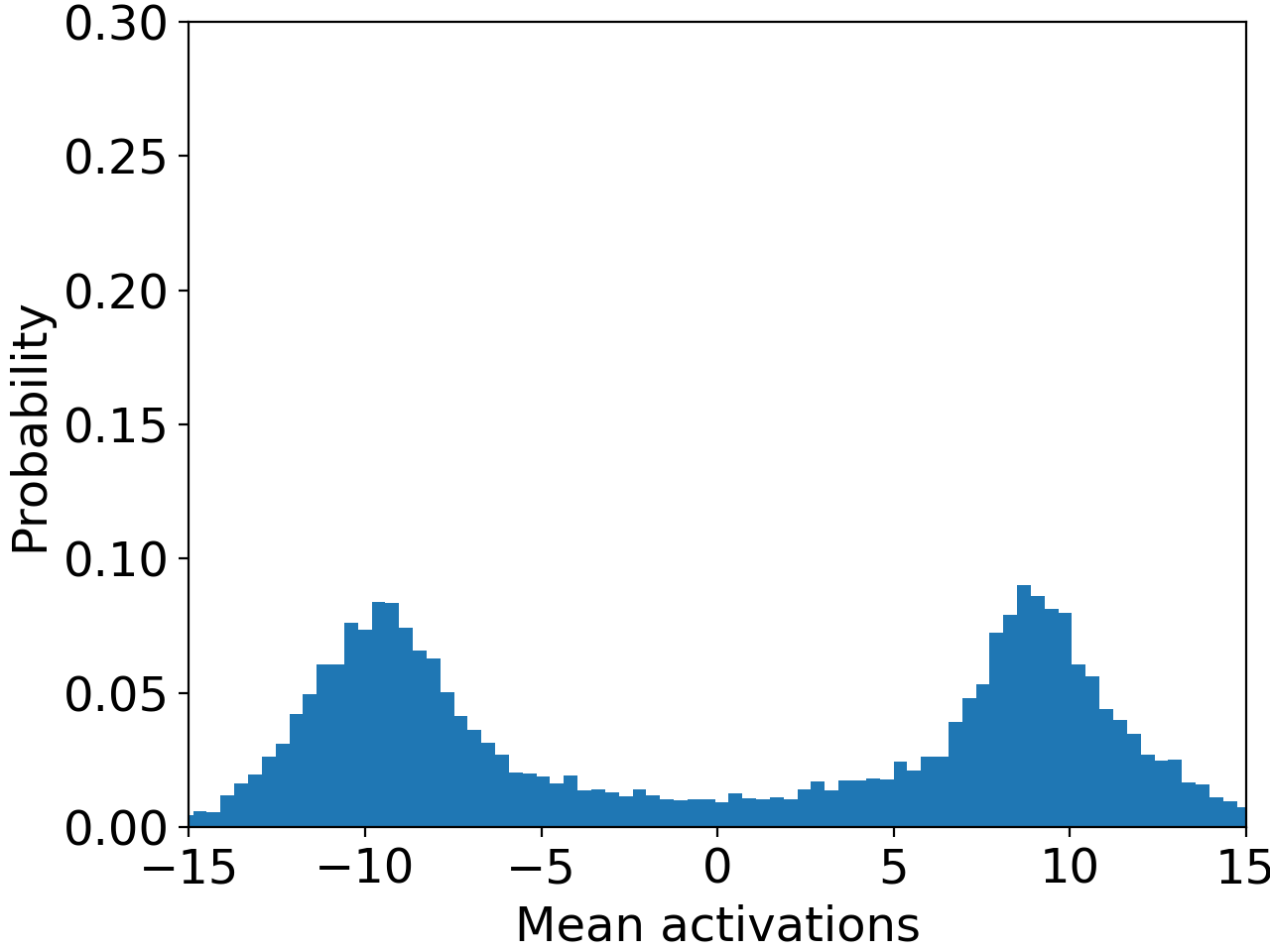}
	\caption{Distribution of $f(x_1, \bar{x_2})$ before (\textbf{left}) and after (\textbf{right}) self-training across all test images $x_1$ for neutral color $\bar{x_2}$. Qualitatively the empirical distribution of $\mu$ has more mass far away from 0 after self-training (which is the desired case for our theory, as seen in Figure~\ref{fig:q_sigma_mu}).}
	\label{fig:plot_mu_before_after}
\end{figure}

\noindent{\bf Importance of non-trivial source classifier accuracy.} We provide additional details on our study of 10-way colored MNIST when the spurious correlation probability is $p = 0.97$. In this setting, the source classifier has 98\% test accuracy on source but only 72\% on target. Entropy minimization initialized at $\tilde{f}$ causes target accuracy to \emph{drop} to 67\% (see Figure \ref{fig:train_plot_9597} right). 

\begin{figure}[ht]
\begin{center}
\includegraphics[width=0.4\columnwidth]{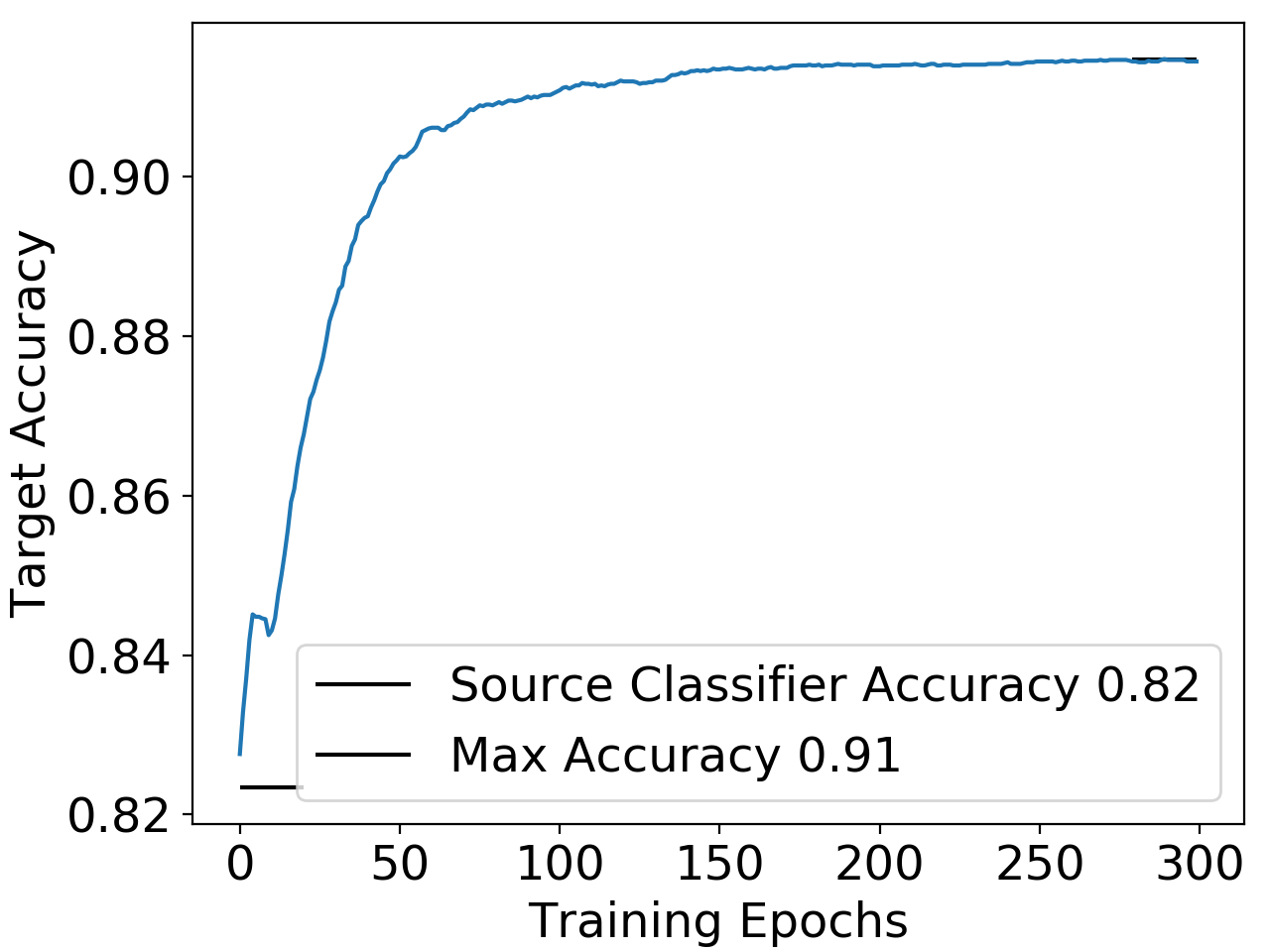}
\includegraphics[width=0.4\columnwidth]{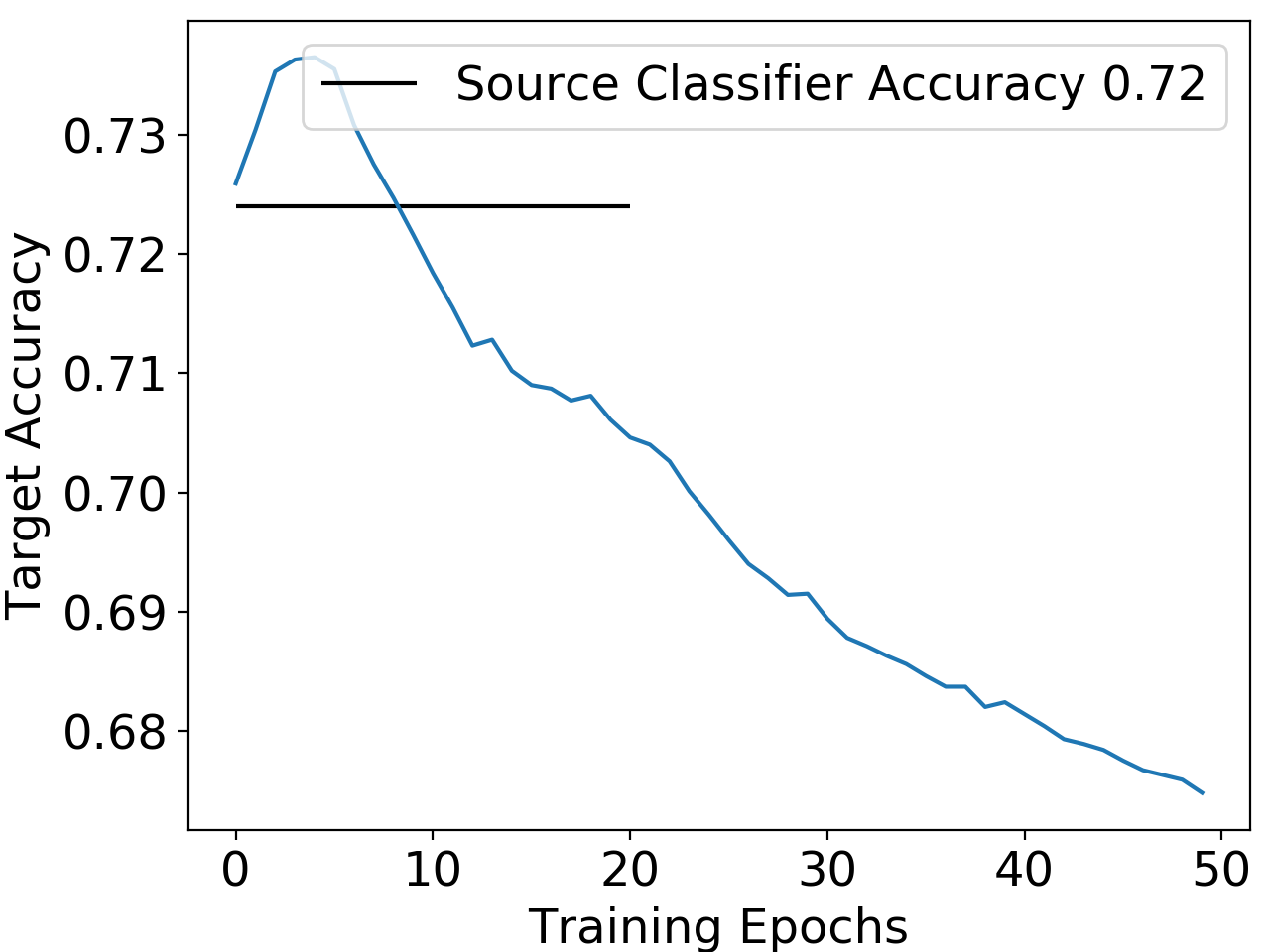}
\caption{In the 10-way MNIST experiment, entropy minimization raises target test accuracy by 9\% when we initialize with a good source classifier (\textbf{left}) and decreases target accuracy when we initialize with a bad source classifier (\textbf{right}). \textbf{Left}: Spurious correlation in source is $p=0.95$ so source classifier obtains high target accuracy; \textbf{Right}: Spurious correlation in source is $p=0.97$ so source classifier does not learn the right features.}
\label{fig:train_plot_9597}
\end{center}
\end{figure}

\subsection{CelebA dataset} 
\label{sec:celeb_a}
We partition the celebA dataset~\citep{liu2015faceattributes} so that the source domain has a perfect correlation between gender and hair color: 1250 blond males, 1749 non-blond females. The target domain has 57K unlabeled examples with the same correlation between gender and hair color as in the original dataset.

We use entropy minimization on this dataset with the Conv-Small model in~\citep{miyato2018virtual}. The source classifier has 94\% accuracy on source data and 81\% on target. After training on the sum of the source labeled loss and target entropy loss, the target accuracy increases to 88\%. 

\subsection{Connection between entropy minimization and stochastic pseudo-labeling}
\label{sec:entropy_min_pseudo}
In Equation~\ref{eq:pseudo_alg} we point out that entropy minimization is equivalent to a stochastic version of pseudo-labeling where we update the pseudo-labels after every SGD step. In practice, pseudo-labels are often updated for only a few rounds, and the student model is usually trained to convergence between rounds~\citep{xie2020selftraining}. In the 10-way MNIST experiment, we perform 3, 6, 30 rounds of pseudo-labeling with 100, 50, and 10 epochs of training per round, interpolating between more common versions of pseudo-labeling and entropy minimization. Figure~\ref{fig:interpolation} shows that entropy minimization converges to better target accuracy within the same clock-time, suggesting that practitioners may benefit from pseudo-labeling with more rounds and fewer epochs per round.

\begin{figure}[ht]
\begin{center}
\includegraphics[width=0.32\columnwidth]{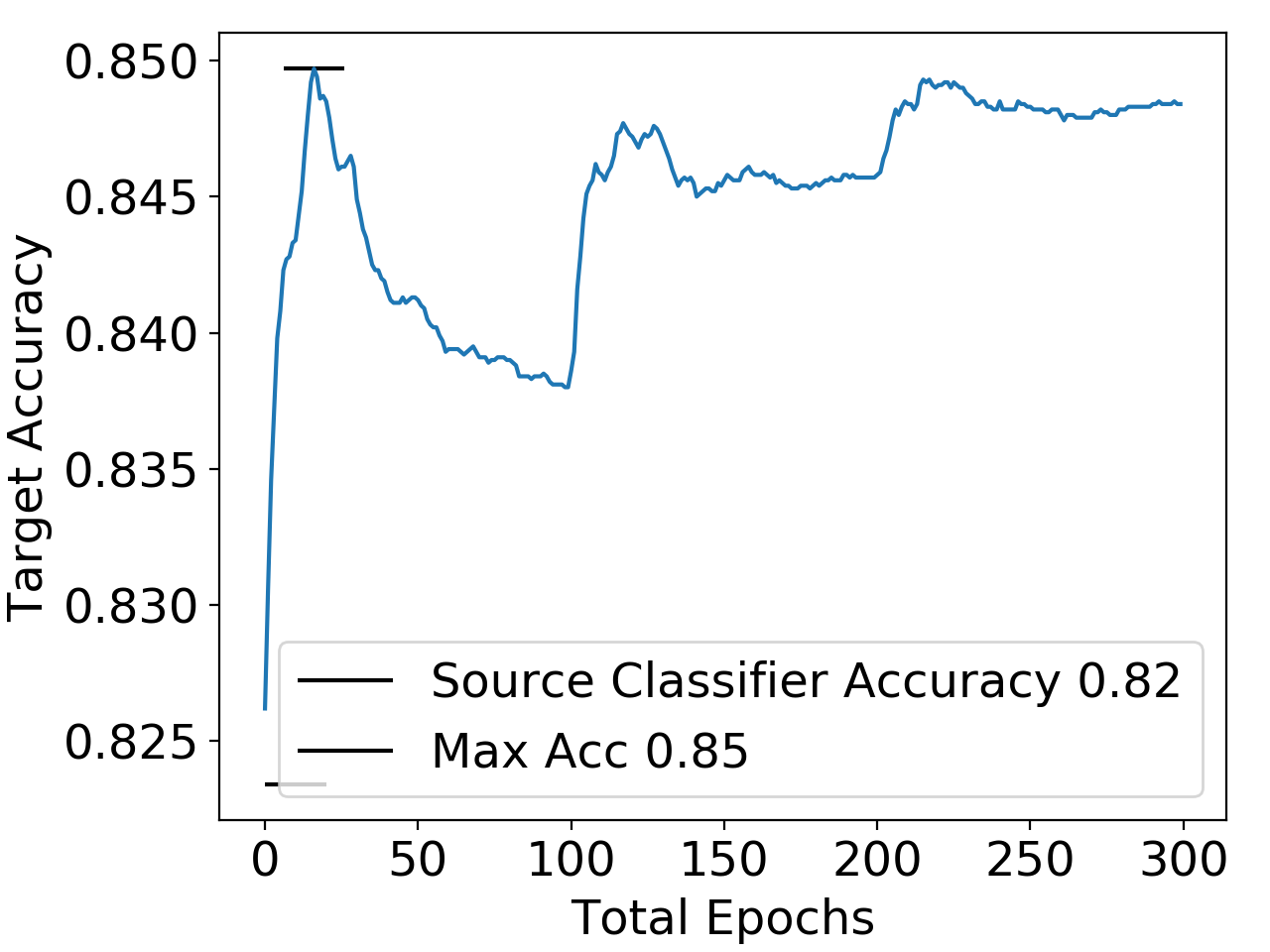}
\includegraphics[width=0.32\columnwidth]{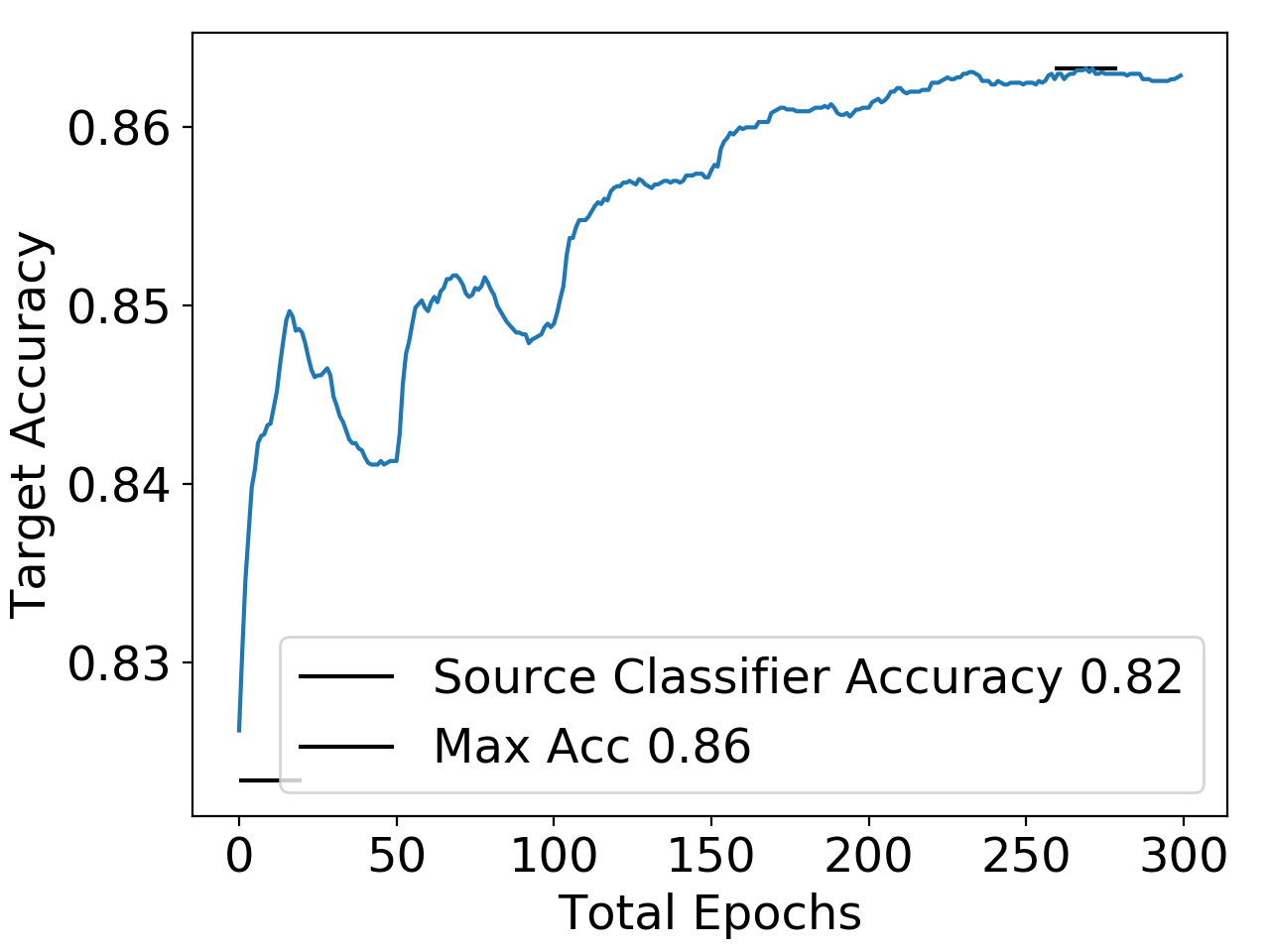}
\includegraphics[width=0.32\columnwidth]{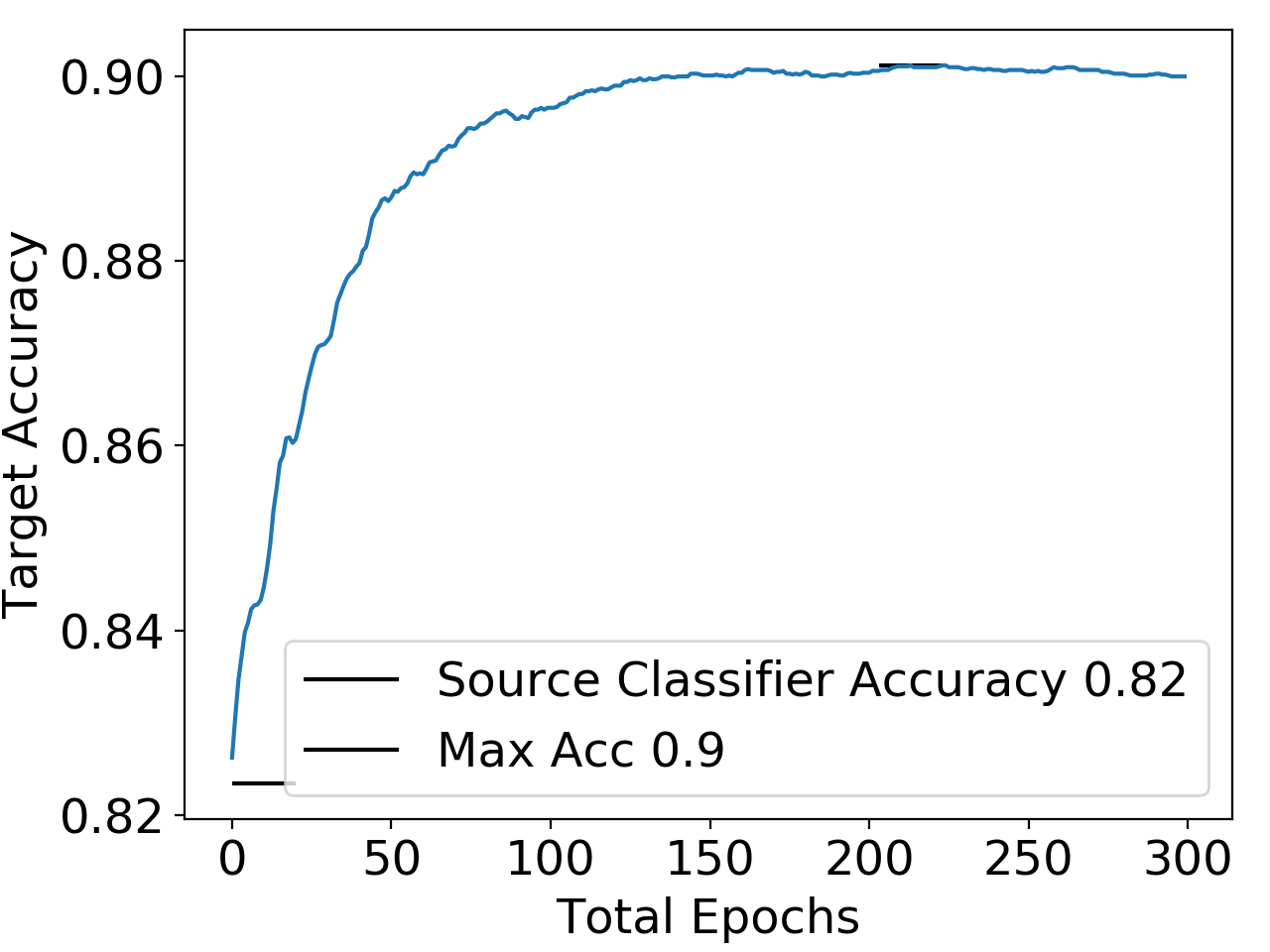}
\caption{In the 10-way MNIST experiment, 3 rounds of pseudo-labeling with 100 epochs per round (\textbf{left}), 6 rounds of pseudo-labeling with 50 epochs per round (\textbf{middle}), and 30 rounds of pseudo-labeling with 10 epochs per round (\textbf{right}) increase in target accuracy.}
\label{fig:interpolation}
\end{center}
\end{figure}

\subsection{Toy Gaussian mixture setting}
\label{sec:toy_gaussian_exp}
\noindent {\bf Generating data.} We generate source examples in the following fashion: For each example $(x_1, x_2) \in \R^4$, we first sample $y$ uniformly from $\{-1, 1\}$, and then $x_1 \in \R^{2} \sim \cN(\gamma y, I)$, where $\gamma$ is a random 2-dimensional vector. For the source examples, we then sample $\tilde{x_2} \sim N(\Vec{0}, I)$. For each coordinate $i$ of $x_2$ ($i \in \{1, 2\}$), with probability $0.8$, we set $(x_2)_i = y |(\tilde{x}_2)_i|$ (correlated); with probability $0.2$, we set $(x_2)_i = (\tilde{x}_2)_i$ (uncorrelated). For target examples, we sample $x_2 \sim N(\Vec{0}, I)$.

The source training dataset, source test set, and target test set all have 10K examples.

\noindent {\bf Algorithms.} We use entropy minimization as well as the following version of pseudo-labeling: starting with the source classifier, we perform 200 rounds of pseudo-labeling with 50 epochs of training in each round. We also set up a threshold $\tau=0.1$ where we throw out least-confident target example $x$ with $|w^\top x|<\tau$ in each round to mimic most popular pseudo-labeling algorithms used in practice \cite{xie2020selftraining}. We experiment on this version of pseudo-labeling algorithm because the version in equation \ref{eq:pseudo_alg} is equivalent to entropy minimization.

For entropy minimization, we use a new batch of 10K target training examples in each epoch. We use SGD optimizer with learning rate 1e-3 and normalize the linear model after each gradient step.

For pseudo-labeling, we use a new batch of 10K target examples in each round. Optimizer choices are the same as entropy minimization.

\begin{figure}[ht]
\begin{center}
\includegraphics[width=0.49\columnwidth, height=100pt]{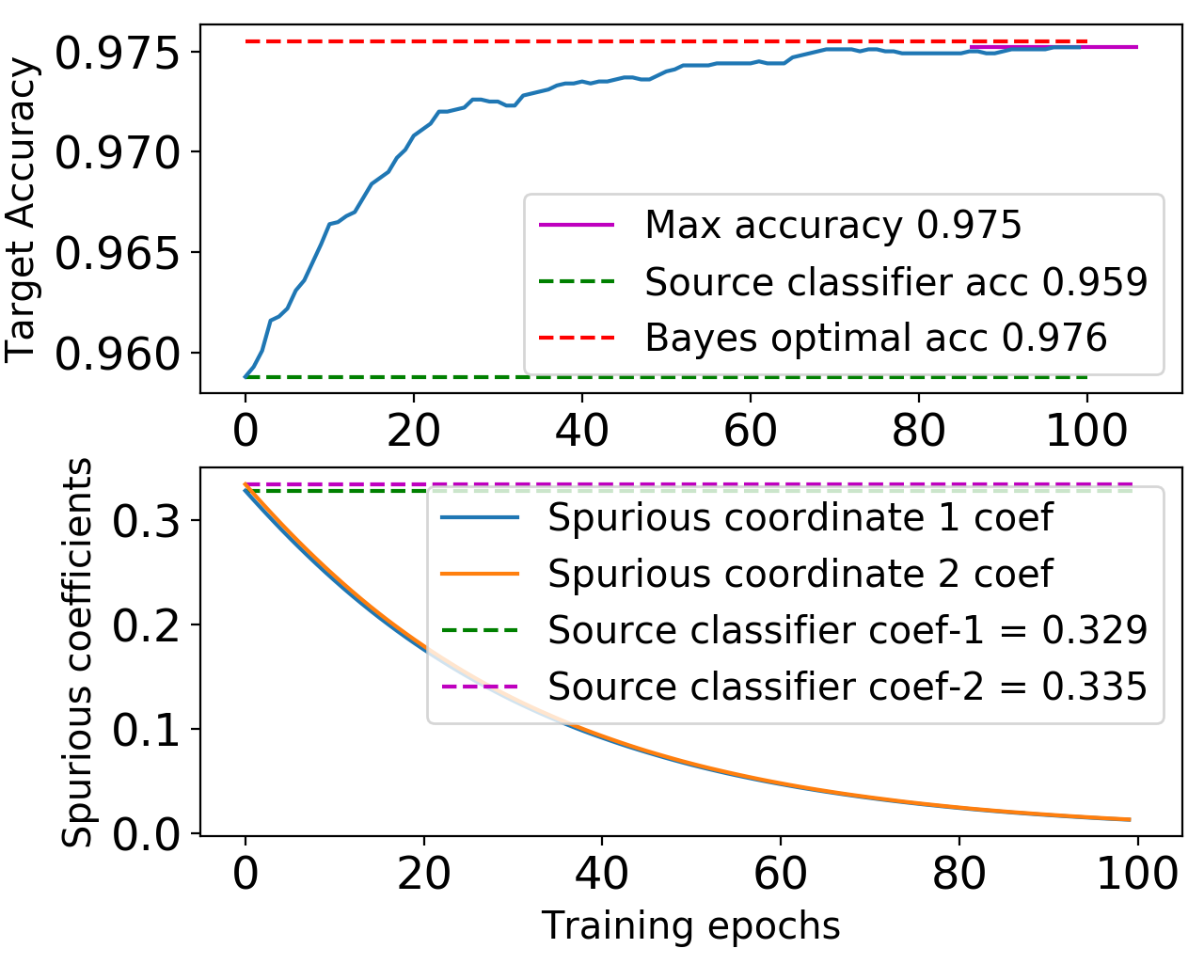}
\includegraphics[width=0.49\columnwidth,height=100pt]{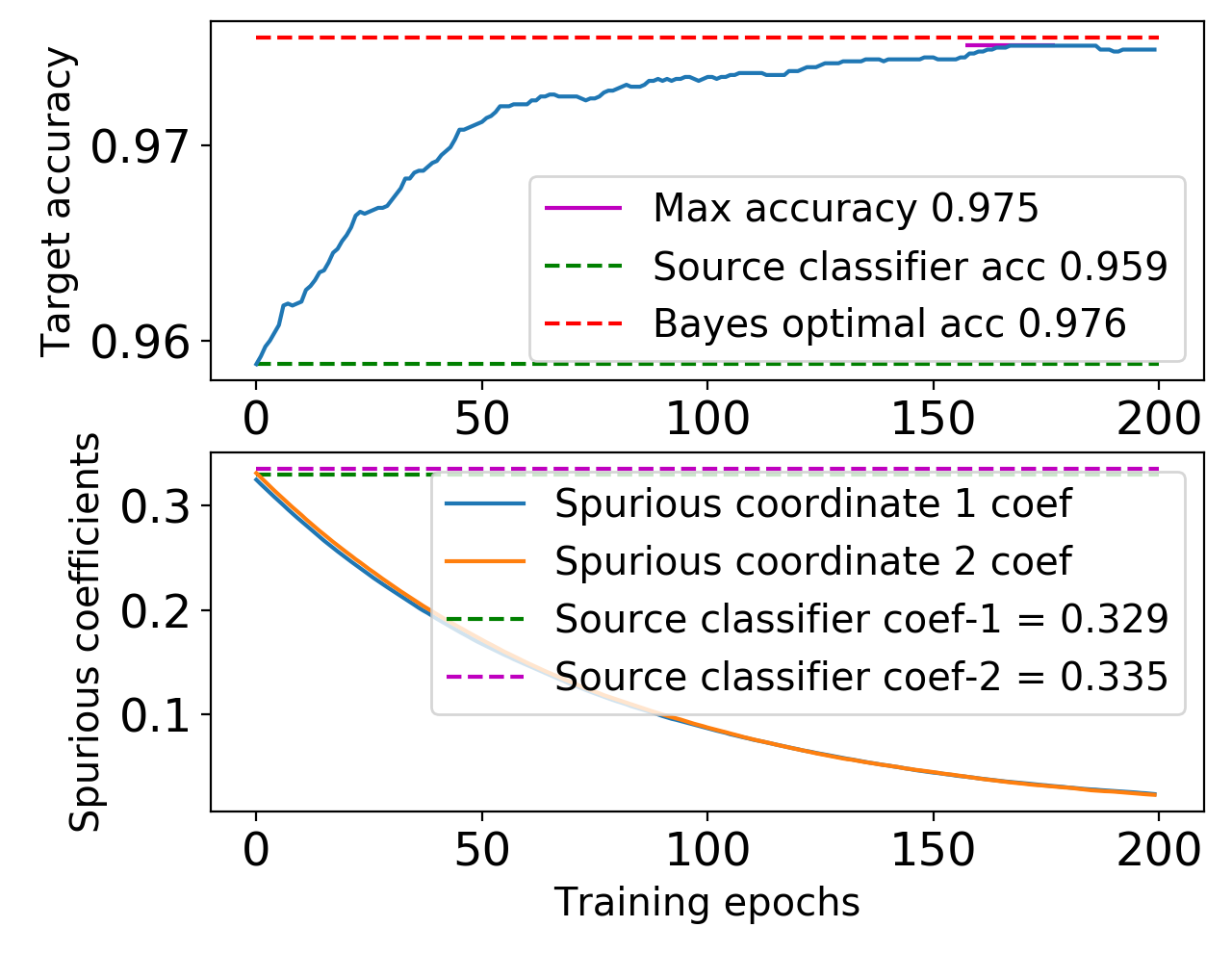}
\caption{Entropy minimization (\textbf{left}) and pseudo-labeling (\textbf{right}) increase target test accuracy from 95.9\% to 97.5\%, and reduce coefficients on two spurious coordinates from 0.33 to 0 in the Gaussian mixture experiment.}
\label{fig:gaussianmixture}
\end{center}
\end{figure}
\noindent{\bf Improvement of target test accuracy and de-emphasis of spurious features.} 
In the Gaussian mixture experiment, the source classifier gets an accuracy of 95.9\% on the target domain. Both entropy minimization and pseudo-labeling algorithms raise the target accuracy to Bayes-optimal while driving the coefficients $w_2$ on spurious features $x_2$ to 0 (Figure~\ref{fig:gaussianmixture}). Notably, even though we use confidence thresholding and train for 50 epochs in each round, the model behavior still closely tracks that of entropy minimization, as predicted by our theory.

\subsection{Justification of approximation $l(t)=\exp(-|t|)$}
\label{sec:justify_approx}

\begin{figure}
	\centering
	\includegraphics[width=0.7\textwidth]{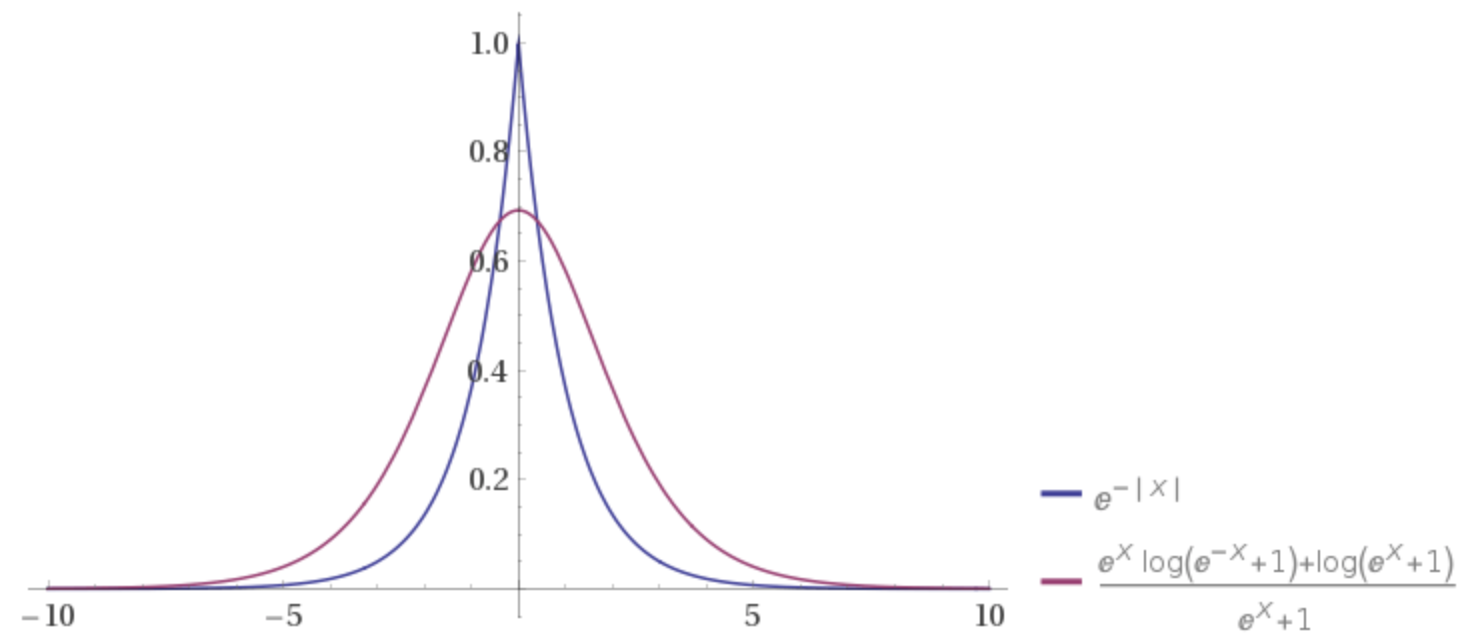}
	\caption{Plot of $exp(-|t|)$ and entropy loss. The losses are within a constant factor of each other and exhibit the same tail behavior.}
	\label{fig:ent}
\end{figure}

\begin{figure}[ht]
\begin{center}
\centerline{\includegraphics[width=0.6\textwidth]{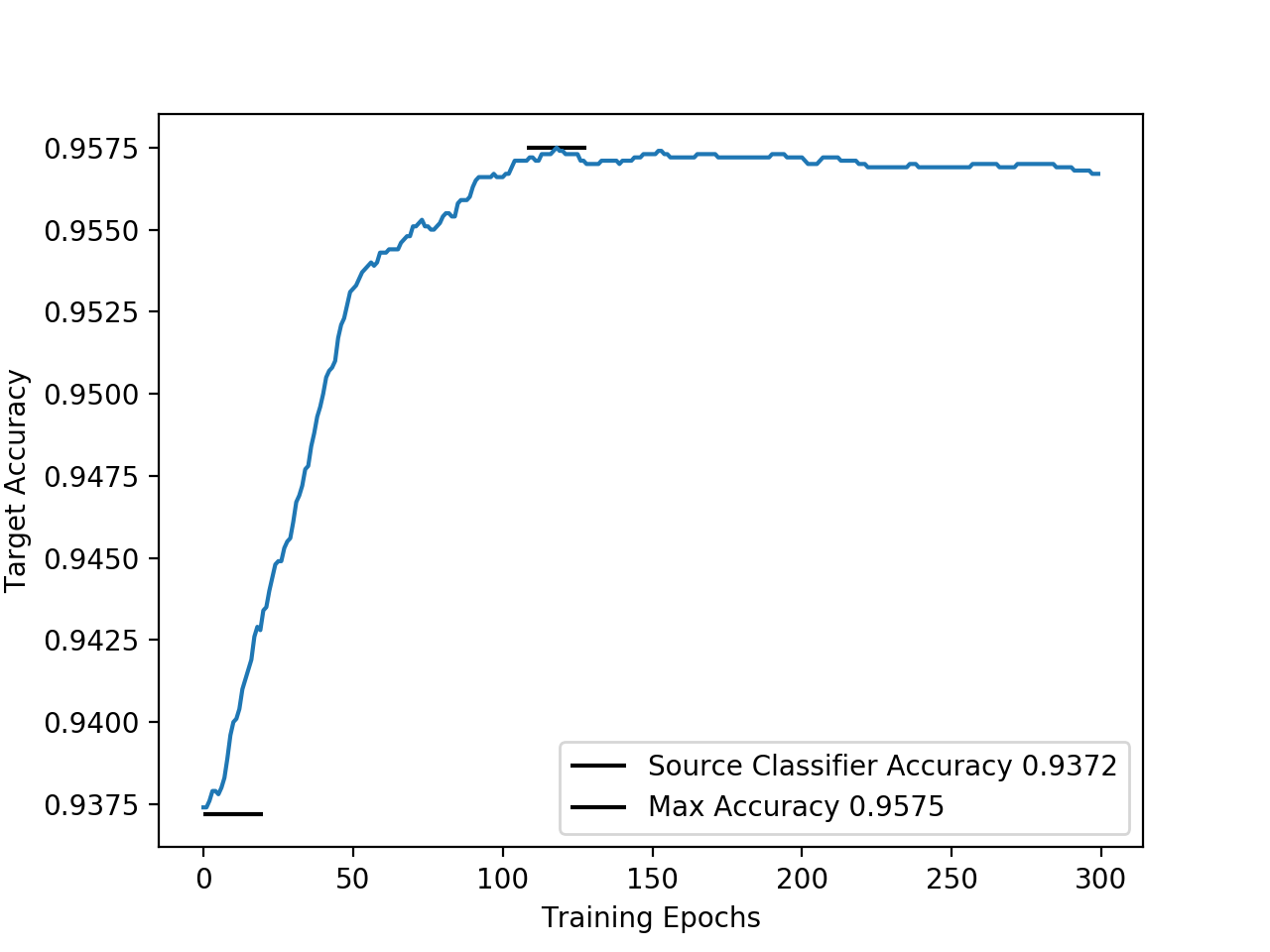}}
\caption{Target accuracy using $l(t)=\exp(-|t|)$ on binary MNIST dataset.}
\label{fig:exp_loss}
\end{center}
\end{figure}

\begin{figure}[ht]
\begin{center}
\includegraphics[width=0.45\textwidth]{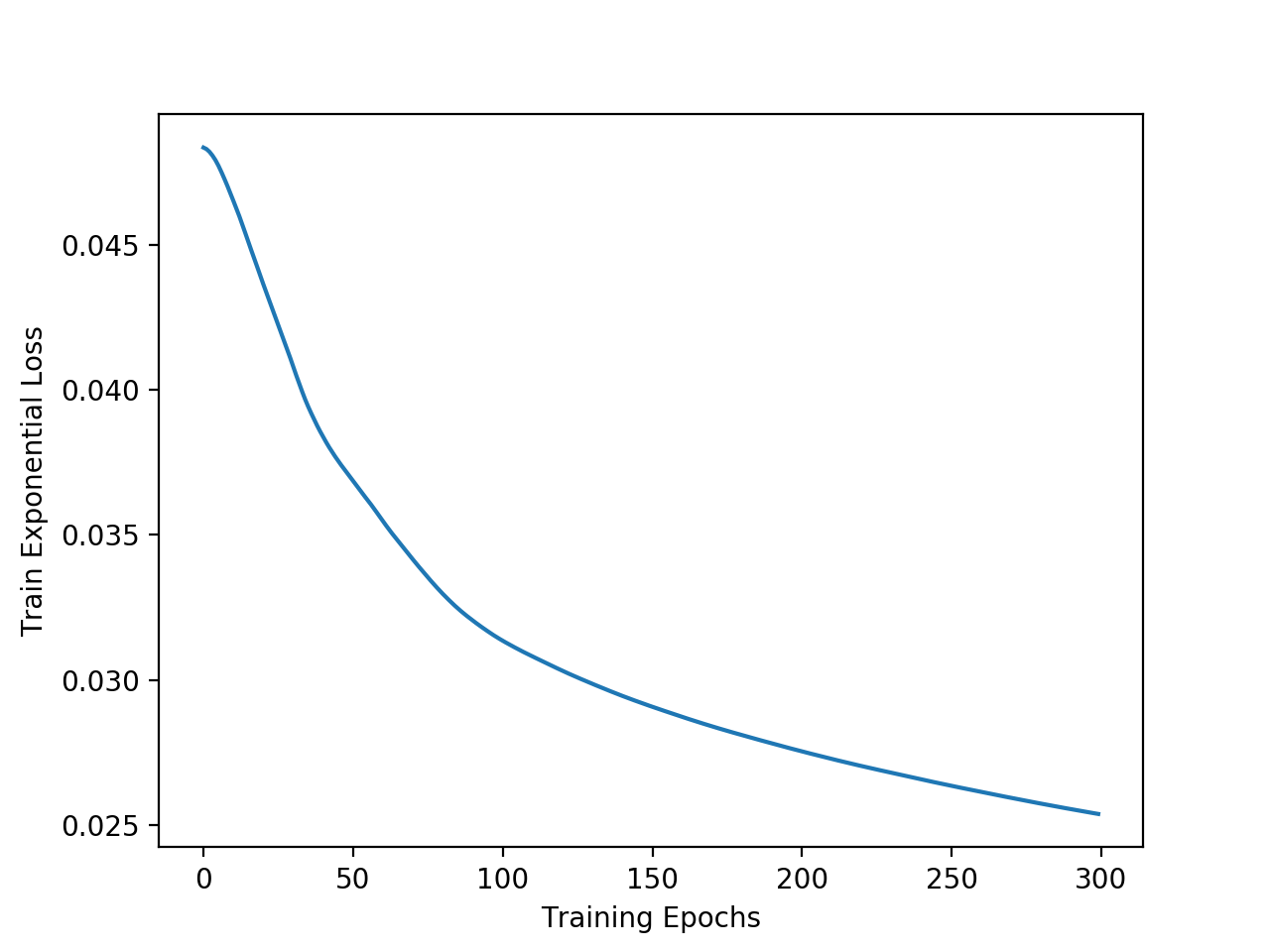}
\includegraphics[width=0.45\textwidth]{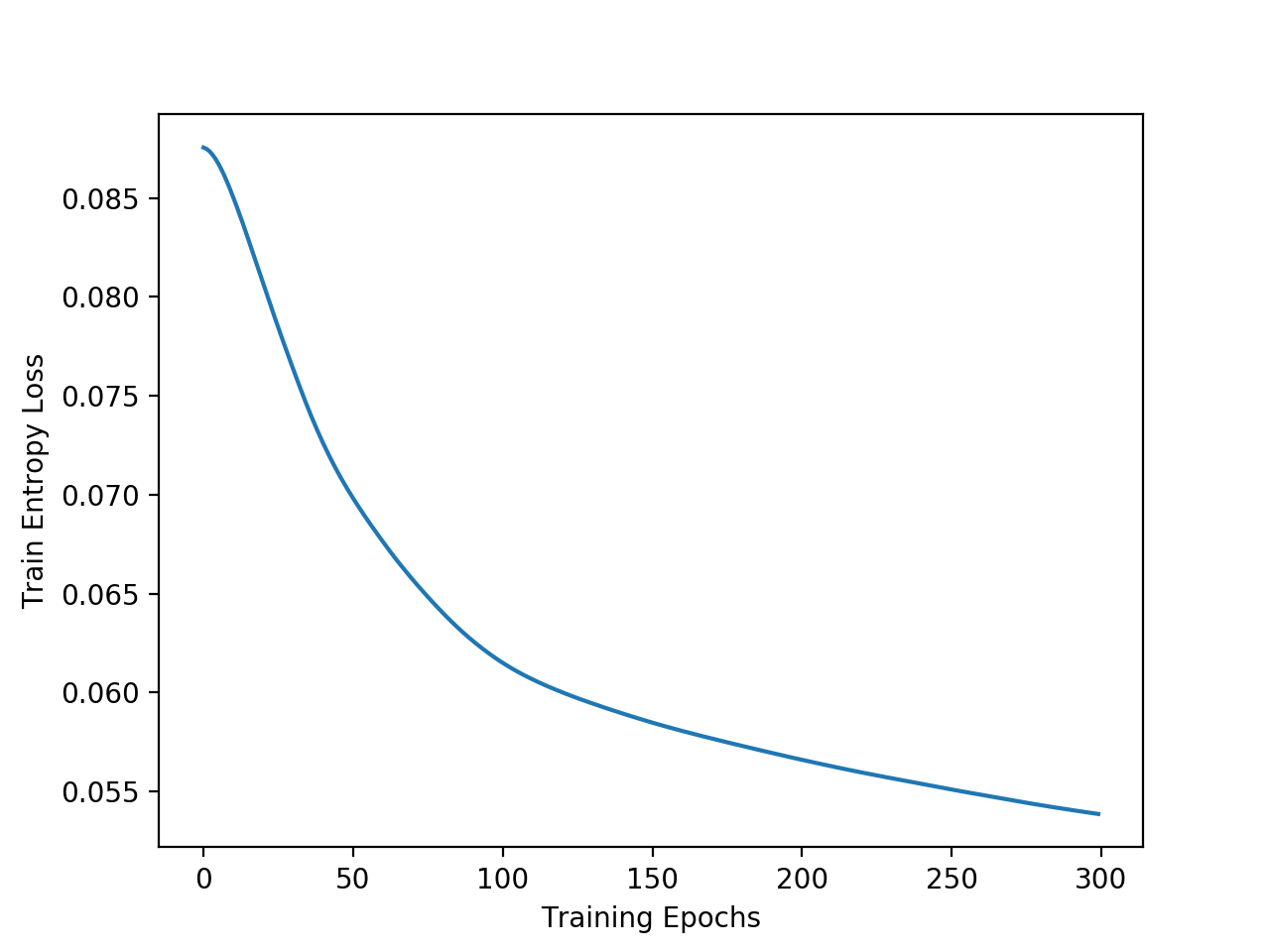}
\caption{\textbf{Left}: Training loss using $l(t)=\exp(-|t|)$; \textbf{Right}: Entropy loss when training using $l(t)=\exp(-|t|)$.}
\label{fig:ent_min}
\end{center}
\end{figure}

Self-training on target using $\ell(t)=\exp(-|t|)$ as an approximation for $\ell_{ent}(t)$ produces the same effect for binary MNIST (see figure \ref{fig:exp_loss}). We plot the training loss $l(t)=\exp(-|t|)$ and $l_{ent}(t)$ (Figure~\ref{fig:ent_min}) to show that they track each other really well.

\end{document}